\PassOptionsToPackage{capitalize}{cleveref}
\documentclass{rtp_tech_report}

\usepackage{amsmath,amsfonts,bm}

\def\eqref#1{equation~\ref{#1}}

\def\1{\bm{1}}

\DeclareMathAlphabet{\mathsfit}{\encodingdefault}{\sfdefault}{m}{sl}
\SetMathAlphabet{\mathsfit}{bold}{\encodingdefault}{\sfdefault}{bx}{n}

\usepackage{url}
\usepackage{wrapfig}
\usepackage{float}
\usepackage{needspace}
\usepackage{tikz}
\usetikzlibrary{arrows.meta,positioning,calc,shapes.geometric,decorations.pathreplacing}
\usepackage{pgfplots}
\pgfplotsset{compat=1.18}
\usepgfplotslibrary{groupplots,colormaps}

\crefname{assumption}{Assumption}{Assumptions}
\providecommand{\swift}{\textnormal{\textsc{Swift}}}
\crefname{equation}{Eq.}{Eqs.}
\Crefname{equation}{Eq.}{Eqs.}
\crefname{algorithm}{Algorithm}{Algorithms}
\crefname{appendix}{Appendix}{Appendices}
\Crefname{appendix}{Appendix}{Appendices}
\crefname{section}{Section}{Sections}
\Crefname{section}{Section}{Sections}
\DeclareRobustCommand{\swiftfullref}[2]{\hyperref[#2]{#1~\ref*{#2}}}
\DeclareRobustCommand{\swiftpanelref}[2]{\hyperref[#1]{Figure~\ref*{#1}(#2)}}

\definecolor{cAGT}{HTML}{2258A6}   %
\definecolor{cTwo}{HTML}{E08A1E}   %
\definecolor{cThree}{HTML}{B8352E} %
\definecolor{cBase}{HTML}{6E6E6E}  %
\definecolor{cSeg}{HTML}{9DB4D6}   %
\definecolor{cGrid}{HTML}{DDDDDD}
\definecolor{cCard}{HTML}{FBFBFC}   %
\definecolor{cKeep}{HTML}{2F6B4F}   %
\definecolor{cAxis}{HTML}{444444}

\newcommand{\SG}{SWIFT\xspace}

\makeatletter
\@removefromreset{theorem}{section}
\makeatother
\renewcommand{\thetheorem}{\arabic{theorem}}
\renewcommand{\theHtheorem}{\arabic{theorem}}
\newcommand{\swiftboxbreakable}{false}
\tcolorboxenvironment{theorem}{
  enhanced, breakable=\swiftboxbreakable,
  colback=metabg, colframe=rtpblue!60!black,
  boxrule=0.5pt, arc=2pt,
  left=6pt, right=6pt, top=4pt, bottom=4pt,
  before skip=8pt, after skip=8pt
}

\tcolorboxenvironment{corollary}{
  enhanced, breakable,
  colback=metabg, colframe=rtpblue!60!black,
  boxrule=0.5pt, arc=2pt,
  left=6pt, right=6pt, top=4pt, bottom=4pt,
  before skip=8pt, after skip=8pt
}

\tcolorboxenvironment{proposition}{
  enhanced, breakable,
  colback=metabg, colframe=rtpblue!60!black,
  boxrule=0.5pt, arc=2pt,
  left=6pt, right=6pt, top=4pt, bottom=4pt,
  before skip=8pt, after skip=8pt
}

\theoremstyle{definition}
\makeatletter

\let\c@assumption\relax
\makeatother
\newtheorem{assumption}{Assumption}
\theoremstyle{plain}

\tcolorboxenvironment{assumption}{
  enhanced, breakable,
  colback=metabg, colframe=rtpblue!60!black,
  boxrule=0.5pt, arc=2pt,
  left=6pt, right=6pt, top=4pt, bottom=4pt,
  before skip=8pt, after skip=8pt
}

\title{\textbf{SWIFT}: \textbf{S}elf-guided \textbf{W}eak-to-strong 
d\textbf{IF}fusion models via \textbf{T}arget extrapolation}

\author{%
\parbox{\dimexpr\textwidth-1pt\relax}{\centering
\textbf{Yanjin He}$^{1,2,\sharp}$, Cuifeng Shen$^{2}$, Wanzhou Lei$^{2,3,\sharp}$, Maohua Li$^{2,4,\sharp}$\\[1mm]
Tao Lan$^{2}$, Kan Liu$^{2}$, Hanlin Tang$^{\dagger,2}$, Qing Qu$^{1}$
}}
\affiliation{%
\parbox{\dimexpr\textwidth-1pt\relax}{\centering\small
$^1$University of Michigan, Ann Arbor \quad $^2$Alibaba Group\\
$^3$University of California, Berkeley \quad $^4$Nanjing University
}}
\renewcommand{\contributionlist}{\vspace{-4mm}}
\newcommand{\authorfootnotes}{%
  \begingroup
  \renewcommand{\thefootnote}{}%
  \footnotetext{\textsuperscript{$\dagger$}Project lead \quad $^\sharp$Work done during internship at Alibaba}%
  \endgroup
}
\checkdata[E-mail]{yanjinhe@umich.edu, wanghaisheng.whs@alibaba-inc.com}
\hypersetup{
  pdfauthor={Yanjin He, Cuifeng Shen, Wanzhou Lei, Maohua Li, Tao Lan, Kan Liu, Hanlin Tang, Qing Qu},
  pdftitle={SWIFT: Self-guided Weak-to-strong dIFfusion models via Target extrapolation}
}

\newif\ifswiftshowtemplatelogos
\swiftshowtemplatelogostrue
\ifswiftshowtemplatelogos\else
  \fancypagestyle{firststyle}{\fancyhf{}\fancyfoot[C]{\thepage}}
\fi

\abstract{

 Training high-quality diffusion models is computationally expensive, and inference-time guidance often adds further sampling cost. In this work, we introduce \swift{}, a simple yet effective self-guided training method that uses a frozen weak denoiser to improve a stronger model without adding inference overhead. For each noisy input, \swift{} extrapolates the clean regression target away from the weak denoiser's prediction, amplifying components that remain underfit. The weak guide can be an early training checkpoint, requires no backpropagation during main-model training, and is discarded afterward. The resulting model retains the original sampler and supports both conditional and unconditional generation without label dropout or an unconditional branch. We explain this mechanism through a spectral analysis of finite-time population gradient flow in an RKHS. With an early-stopped weak guide, we prove that a nonempty range of extrapolation strengths yields strictly lower posterior-mean error than matched standard training, despite shifting the regression target away from the posterior mean. Experiments on CIFAR-10 and ImageNet-256 demonstrate improvements across model scales and representations. In one ImageNet-256 configuration, \swift{} surpasses the baseline's best FID using roughly half the total training compute, including guide preparation and training overhead. On RAEv2, it achieves FID $1.15$ without inference-time guidance and $\textbf{0.99}$ with guidance. These results establish target extrapolation as an effective approach to weak-to-strong diffusion training with efficient single-model inference.

}

\begin{document}
\newgeometry{top=1.95cm,bottom=2.1cm,left=2.5cm,right=2.5cm,headheight=48pt,headsep=-6.53583pt}
\maketitle
\authorfootnotes
\par
\noindent\begin{minipage}{\textwidth}
    \captionsetup{type=figure,skip=5pt}
    \centering
    \begin{tikzpicture}
\begin{axis}[width=0.5\linewidth, height=0.36\linewidth, tick label style={font=\scriptsize, /pgf/number format/assume math mode=true},
  label style={font=\scriptsize}, title style={font=\scriptsize\bfseries, yshift=-3pt},
  xmajorgrids, ymajorgrids, grid style={cGrid, line width=0.3pt},
  axis line style={cAxis!65, line width=0.45pt},
  tick style={cAxis!65, line width=0.45pt}, tick align=outside,
  xtick style={line width=0.4pt}, ytick style={line width=0.4pt},
  xmode=log, ymode=log, xmin=1.00, xmax=11.00, ymin=0.95, ymax=2.45,
  xtick={1,1.5,2,3,5,10}, xticklabels={1,1.5,2,3,5,10}, ytick={1,1.2,1.5,2}, yticklabels={1.0,1.2,1.5,2.0},
  minor xtick={}, minor ytick={},
  title={ImageNet-256, published results}, xlabel={FID-50k (no guidance)}, ylabel={FID-50k (with guidance)}]
\addplot[only marks, mark=*, mark size=1.6pt, mark options={fill=cTwo!65, draw=white, line width=0.3pt}] coordinates {(9.62,2.27)};
\node[font=\tiny, text=cAxis!90, anchor=east, inner sep=0.4pt, xshift=-2.5pt, yshift=0.0pt] at (axis cs:9.62,2.27) {DiT-XL/2};
\addplot[only marks, mark=*, mark size=1.6pt, mark options={fill=cTwo!65, draw=white, line width=0.3pt}] coordinates {(8.30,2.06)};
\node[font=\tiny, text=cAxis!90, anchor=east, inner sep=0.4pt, xshift=-2.5pt, yshift=0.0pt] at (axis cs:8.30,2.06) {SiT-XL/2};
\addplot[only marks, mark=*, mark size=1.6pt, mark options={fill=cTwo!65, draw=white, line width=0.3pt}] coordinates {(6.23,1.79)};
\node[font=\tiny, text=cAxis!90, anchor=east, inner sep=0.4pt, xshift=-2.5pt, yshift=0.0pt] at (axis cs:6.23,1.79) {MDT};
\addplot[only marks, mark=*, mark size=1.6pt, mark options={fill=cTwo!65, draw=white, line width=0.3pt}] coordinates {(5.90,1.42)};
\node[font=\tiny, text=cAxis!90, anchor=east, inner sep=0.4pt, xshift=-2.5pt, yshift=0.0pt] at (axis cs:5.90,1.42) {REPA};
\addplot[only marks, mark=*, mark size=1.6pt, mark options={fill=cTwo!65, draw=white, line width=0.3pt}] coordinates {(6.27,1.26)};
\node[font=\tiny, text=cAxis!90, anchor=east, inner sep=0.4pt, xshift=-2.5pt, yshift=0.0pt] at (axis cs:6.27,1.26) {DDT};
\addplot[only marks, mark=square*, mark size=1.4pt, mark options={fill=cTwo!65, draw=white, line width=0.3pt}] coordinates {(2.35,1.55)};
\node[font=\tiny, text=cAxis!90, anchor=west, inner sep=0.4pt, xshift=2.5pt, yshift=0.0pt] at (axis cs:2.35,1.55) {MAR-H};
\addplot[only marks, mark=*, mark size=1.6pt, mark options={fill=cTwo!65, draw=white, line width=0.3pt}] coordinates {(2.17,1.35)};
\node[font=\tiny, text=cAxis!90, anchor=west, inner sep=0.4pt, xshift=2.5pt, yshift=0.0pt] at (axis cs:2.17,1.35) {LightningDiT};
\addplot[only marks, mark=*, mark size=1.6pt, mark options={fill=cTwo!65, draw=white, line width=0.3pt}] coordinates {(1.80,1.36)};
\node[font=\tiny, text=cAxis!90, anchor=south, inner sep=0.4pt, xshift=0.0pt, yshift=2.5pt] at (axis cs:1.80,1.36) {REG};
\addplot[only marks, mark=*, mark size=1.6pt, mark options={fill=cTwo!65, draw=white, line width=0.3pt}] coordinates {(2.38,1.04)};
\node[font=\tiny, text=cAxis!90, anchor=west, inner sep=0.4pt, xshift=2.5pt, yshift=0.0pt] at (axis cs:2.38,1.04) {SFD};
\addplot[only marks, mark=*, mark size=1.6pt, mark options={fill=cTwo!65, draw=white, line width=0.3pt}] coordinates {(1.69,1.12)};
\node[font=\tiny, text=cAxis!90, anchor=west, inner sep=0.4pt, xshift=2.5pt, yshift=0.0pt] at (axis cs:1.69,1.12) {REPA-E};
\addplot[only marks, mark=*, mark size=1.6pt, mark options={fill=cTwo!65, draw=white, line width=0.3pt}] coordinates {(1.51,1.13)};
\node[font=\tiny, text=cAxis!90, anchor=south west, inner sep=0.4pt, xshift=0.5pt, yshift=2.0pt] at (axis cs:1.51,1.13) {RAE};
\addplot[only marks, mark=*, mark size=1.6pt, mark options={fill=cTwo!65, draw=white, line width=0.3pt}] coordinates {(1.31,1.13)};
\node[font=\tiny, text=cAxis!90, anchor=east, inner sep=0.4pt, xshift=-2.5pt, yshift=0.0pt] at (axis cs:1.31,1.13) {GAE};
\addplot[only marks, mark=*, mark size=1.6pt, mark options={fill=cTwo!65, draw=white, line width=0.3pt}] coordinates {(1.65,1.06)};
\node[font=\tiny, text=cAxis!90, anchor=west, inner sep=0.4pt, xshift=2.5pt, yshift=-1.0pt] at (axis cs:1.65,1.06) {RAEv2};
\addplot[only marks, mark=*, mark size=1.6pt, mark options={fill=cTwo!65, draw=white, line width=0.3pt}] coordinates {(1.43,1.03)};
\node[font=\tiny, text=cAxis!90, anchor=south east, inner sep=0.4pt, xshift=-2.2pt, yshift=1.0pt] at (axis cs:1.43,1.03) {PAE};
\addplot[only marks, mark=square*, mark size=1.4pt, mark options={fill=cTwo!65, draw=white, line width=0.3pt}] coordinates {(1.42,0.99)};
\node[font=\tiny, text=cAxis!90, anchor=west, inner sep=0.4pt, xshift=2.5pt, yshift=-1.0pt] at (axis cs:1.42,0.99) {BAR-L};
\addplot[only marks, mark=*, mark size=1.6pt, mark options={fill=cTwo!65, draw=white, line width=0.3pt}] coordinates {(1.34,1.19)};
\node[font=\tiny, text=cAxis!90, anchor=south west, inner sep=0.4pt, xshift=-8.0pt, yshift=2.0pt] at (axis cs:1.34,1.19) {LightningDiT+IG};
\node[star, star points=5, star point ratio=2.3, fill=cAGT, draw=white, line width=0.4pt, minimum size=9pt, inner sep=0pt] at (axis cs:1.15,0.99) {};
\node[star, star points=5, star point ratio=2.3, fill=cAGT, draw=white, line width=0.3pt, minimum size=7pt, inner sep=0pt] at (rel axis cs:0.035,0.905) {};
\node[font=\scriptsize\bfseries, text=cAGT!80!black, anchor=west, inner sep=0.5pt] at (rel axis cs:0.060,0.905) {\SG{} (ours)};
\node[circle, fill=cTwo!65, draw=white, line width=0.3pt, minimum size=3.2pt, inner sep=0pt] at (rel axis cs:0.035,0.820) {};
\node[font=\tiny, text=cAxis!90, anchor=west, inner sep=0.5pt] at (rel axis cs:0.060,0.820) {diffusion};
\node[rectangle, fill=cTwo!65, draw=white, line width=0.3pt, minimum size=2.8pt, inner sep=0pt] at (rel axis cs:0.035,0.735) {};
\node[font=\tiny, text=cAxis!90, anchor=west, inner sep=0.5pt] at (rel axis cs:0.060,0.735) {autoregressive};
\end{axis}
\end{tikzpicture}
\hfill
    \begin{tikzpicture}
\begin{axis}[width=0.5\linewidth, height=0.36\linewidth, tick label style={font=\scriptsize, /pgf/number format/assume math mode=true},
  label style={font=\scriptsize}, title style={font=\scriptsize\bfseries, yshift=-3pt},
  ymajorgrids, grid style={cGrid, line width=0.3pt},
  axis line style={cAxis!65, line width=0.45pt},
  tick style={cAxis!65, line width=0.45pt}, tick align=outside,
  xtick style={line width=0.4pt}, ytick style={line width=0.4pt},
  every axis plot/.append style={line width=0.9pt, mark size=1.25pt,
      mark options={line width=0.5pt}},
  legend style={font=\tiny, draw=none, fill=white, fill opacity=0.9, text opacity=1,
      row sep=-2.5pt, inner xsep=3pt, inner ysep=2pt},
  legend cell align=left, legend image post style={line width=0.9pt}, title={ImageNet-256, DINOv2-B (615M), w/o guidance}, xlabel={training epoch}, ylabel={FID-50k (no guidance)}, ymin=1.30, ymax=2.22, xmin=15, xmax=330, xtick={40,80,120,160,200,240,280,320}, ytick={1.4,1.6,1.8,2.0,2.2}, legend style={at={(0.975,0.95)}, anchor=north east}]
\addplot[only marks, mark=*, mark size=0.7pt, cAGT!35, forget plot] coordinates {(20,3.5170) (40,2.3183) (60,1.9953) (80,1.7677) (85,1.6996) (90,1.6953) (95,1.6538) (100,1.5986) (105,1.5800) (110,1.5731) (115,1.5966) (120,1.4982) (125,1.5237) (130,1.5240) (135,1.5077) (140,1.5076) (145,1.4841) (150,1.5174) (155,1.4889) (160,1.4550) (165,1.4740) (170,1.4540) (175,1.4335) (180,1.4924) (185,1.4206) (190,1.4655) (195,1.3961) (200,1.4394) (205,1.3797) (210,1.3992) (215,1.4157) (220,1.4137) (225,1.4010) (230,1.3891) (235,1.3252) (240,1.3766)};
\addplot[only marks, mark=*, mark size=0.7pt, cTwo!35, forget plot] coordinates {(20,3.6473) (40,2.7156) (60,2.3012) (80,2.1297) (85,2.0699) (90,2.0603) (95,1.9673) (100,1.9537) (105,1.9463) (110,1.8963) (115,1.9009) (120,1.8513) (125,1.9213) (130,1.8331) (135,1.7658) (140,1.8685) (145,1.8172) (150,1.7634) (155,1.8021) (160,1.7606) (165,1.7469) (170,1.7427) (175,1.7039) (180,1.7359) (185,1.6663) (190,1.7156) (195,1.7120) (200,1.6596) (205,1.6709) (210,1.6528) (215,1.6860) (220,1.6632) (225,1.6243) (230,1.6284) (235,1.6086) (240,1.6023) (245,1.5890) (250,1.6062) (255,1.5688) (260,1.5575) (265,1.6191) (270,1.5940) (275,1.5668) (280,1.5910) (285,1.5969) (290,1.5732) (295,1.5878) (300,1.5475) (305,1.5619) (310,1.5301) (315,1.5626) (320,1.5302)};
\addplot[cTwo, line width=1.1pt] coordinates {(20,3.6473) (21,3.5882) (22,3.5300) (23,3.4727) (24,3.4164) (25,3.3612) (26,3.3072) (27,3.2544) (28,3.2030) (29,3.1529) (30,3.1044) (31,3.0573) (32,3.0119) (33,2.9682) (34,2.9262) (35,2.8861) (36,2.8478) (37,2.8116) (38,2.7774) (39,2.7454) (40,2.7156) (41,2.6873) (42,2.6597) (43,2.6329) (44,2.6068) (45,2.5816) (46,2.5570) (47,2.5333) (48,2.5104) (49,2.4883) (50,2.4670) (51,2.4466) (52,2.4269) (53,2.4082) (54,2.3902) (55,2.3732) (56,2.3570) (57,2.3417) (58,2.3273) (59,2.3138) (60,2.3012) (61,2.2894) (62,2.2784) (63,2.2679) (64,2.2581) (65,2.2487) (66,2.2398) (67,2.2313) (68,2.2232) (69,2.2153) (70,2.2076) (71,2.2001) (72,2.1927) (73,2.1853) (74,2.1779) (75,2.1704) (76,2.1628) (77,2.1550) (78,2.1469) (79,2.1385) (80,2.1297) (81,2.1205) (82,2.1111) (83,2.1014) (84,2.0916) (85,2.0819) (86,2.0722) (87,2.0627) (88,2.0534) (89,2.0446) (90,2.0362) (91,2.0283) (92,2.0209) (93,2.0137) (94,2.0066) (95,1.9995) (100,1.9648) (105,1.9329) (110,1.9097) (115,1.9032) (120,1.8806) (125,1.8545) (130,1.8480) (135,1.8412) (140,1.8096) (145,1.8034) (150,1.8024) (155,1.7780) (160,1.7631) (165,1.7512) (170,1.7380) (175,1.7191) (180,1.7129) (185,1.7067) (190,1.6979) (195,1.6849) (200,1.6822) (205,1.6763) (210,1.6665) (215,1.6594) (220,1.6509) (225,1.6421) (230,1.6254) (235,1.6105) (240,1.6069) (245,1.5950) (250,1.5848) (255,1.5881) (260,1.5891) (265,1.5812) (270,1.5857) (275,1.5936) (280,1.5844) (285,1.5831) (290,1.5793) (295,1.5735) (300,1.5601) (305,1.5580) (310,1.5465) (315,1.5462) (320,1.5410)};
\addlegendentry{Baseline, to ep.\ 320}
\addplot[cAGT, line width=1.1pt] coordinates {(20,3.5170) (21,3.4357) (22,3.3555) (23,3.2766) (24,3.1991) (25,3.1234) (26,3.0496) (27,2.9778) (28,2.9082) (29,2.8412) (30,2.7767) (31,2.7150) (32,2.6564) (33,2.6010) (34,2.5489) (35,2.5004) (36,2.4557) (37,2.4150) (38,2.3784) (39,2.3461) (40,2.3183) (41,2.2936) (42,2.2704) (43,2.2486) (44,2.2281) (45,2.2088) (46,2.1906) (47,2.1733) (48,2.1570) (49,2.1414) (50,2.1266) (51,2.1123) (52,2.0986) (53,2.0853) (54,2.0723) (55,2.0595) (56,2.0468) (57,2.0341) (58,2.0214) (59,2.0085) (60,1.9953) (61,1.9821) (62,1.9690) (63,1.9562) (64,1.9436) (65,1.9311) (66,1.9189) (67,1.9069) (68,1.8951) (69,1.8834) (70,1.8720) (71,1.8608) (72,1.8497) (73,1.8388) (74,1.8281) (75,1.8176) (76,1.8073) (77,1.7971) (78,1.7872) (79,1.7773) (80,1.7677) (81,1.7583) (82,1.7492) (83,1.7403) (84,1.7317) (85,1.7233) (86,1.7150) (87,1.7069) (88,1.6989) (89,1.6909) (90,1.6830) (91,1.6750) (92,1.6670) (93,1.6593) (94,1.6520) (95,1.6455) (100,1.6202) (105,1.6004) (110,1.5693) (115,1.5543) (120,1.5431) (125,1.5300) (130,1.5122) (135,1.5094) (140,1.5082) (145,1.5011) (150,1.4906) (155,1.4839) (160,1.4779) (165,1.4611) (170,1.4618) (175,1.4549) (180,1.4532) (185,1.4416) (190,1.4428) (195,1.4203) (200,1.4160) (205,1.4060) (210,1.4095) (215,1.4019) (220,1.4037) (225,1.3889) (230,1.3811) (235,1.3730) (240,1.3636)};
\addlegendentry{\SG, $\alpha{=}0.4$, to ep.\ 240}
\addplot[cAGT, densely dashed, line width=0.6pt, forget plot] coordinates {(15,1.4982) (330,1.4982)};
\node[font=\tiny, cAGT!80!black, anchor=north east, inner sep=2pt] at (axis cs:326,1.4882) {\SG\ at ep.\ 120: 1.50};
\end{axis}
\end{tikzpicture}
    \caption{
    \textbf{\swift{} reaches sub-$1.0$ gFID on ImageNet-256.}
    \emph{Left:} FID-50K without and with inference-time guidance for
    representative published ImageNet-256 models and \swift{}
    (\Cref{tab:published}).
    Our RAEv2/DINOv3-L model reaches $1.15$ gFID without inference-time
    guidance and $0.99$ with retuned internal guidance
    (\Cref{app:raev2}).
    \emph{Right:} on the matched RAE/DINOv2-B configuration
    \citep{wang2025ddt,zheng2025rae,oquab2024dinov2},
    the advantage of \swift{} persists throughout extended training
    (\Cref{app:imagenet-traj}).
    Dots denote raw evaluations and lines show five-point moving averages.
    }
    \label{fig:overview}
\end{minipage}\par

\restoregeometry

\section{Introduction}
\label{sec:intro}

Diffusion models have achieved strong generative performance across
images, video, and audio
\citep{ho2020denoising,ho2022video,kong2021diffwave}.
They are typically trained with a simple denoising regression objective,
yet many of the strongest generation results rely on guidance applied only
at inference
\citep{ho2022classifier,karras2024guiding}.
In particular, autoguidance~\citep{karras2024guiding} shows that a
deliberately weaker denoiser can improve a stronger one by defining an
extrapolation direction during sampling.
This suggests a broader role for weak models: rather than serving as
teachers to imitate, they can provide directions along which a stronger
predictor should move.

Autoguidance realizes this idea by extrapolating the strong model's
prediction away from that of a smaller or less-trained counterpart
\citep{karras2024guiding}.
Related training-time approaches instead use evolving model predictions,
intermediate supervision, or guidance-conditioned objectives
\citep{tang2025mg,zhou2026internal,chen2025gft,yuan2026segmented}
(\Cref{app:guidance-comparison}).
These developments motivate a simple question:
\begin{tcolorbox}
    \centering
    \emph{Can weak-to-strong extrapolation be moved from inference into
    training, so that the final denoiser itself benefits from the correction
    and the weak model can be discarded afterward?}
\end{tcolorbox}

We introduce \swift{}, a self-guided training method that uses a frozen
weak denoiser to construct extrapolated regression targets.
For each noisy input, instead of regressing directly toward the clean
sample, \swift{} moves the target farther in the direction from the weak
prediction toward and beyond the clean sample.
The weak denoiser therefore acts as a \emph{directional reference}, rather
than a teacher whose output should be imitated.
It can be obtained from an early checkpoint of ordinary denoising training,
remains fixed during the main training run, and requires no backpropagation.
After training, the guide is discarded, leaving a single denoiser and the
original sampling procedure.
The same construction applies to both conditional and unconditional
generation.

Our motivation is finite-time underfitting: because a weak denoiser has
learned less of the target function, extrapolating away from its prediction
can amplify directions that remain underfit.
We show theoretically that, for a nonempty range of positive extrapolation
strengths, deliberately shifting the training target in this way can produce
a predictor strictly closer to the original posterior mean than matched
standard training (\Cref{thm:rkhs-swift}; proof in
\Cref{app:proof-rkhs-theorem}).

Experiments on CIFAR-10 and ImageNet-256 show that the effect is not tied
to a particular representation or model family.
On ImageNet-256, \swift{} improves diffusion models built on conventional
VAE latents, representation-autoencoder latents
\citep{zheng2025rae,singh2026raev2}, end-to-end learned latent
representations, and direct pixel-space generation.
Across the evaluated DiffusionBench configurations
\citep{diffusionbench2026}, the same fixed \swift{} recipe lowers unguided
FID in every configuration, and the improvement persists under longer
training. The complete ImageNet comparisons and training trajectories are in
\Cref{app:imagenet-all,app:imagenet-traj}, and the additional CIFAR-10
experiments are in \Cref{app:cifar}.

\swift{} also remains complementary to inference-time guidance.
When guidance is independently retuned for the baseline and \swift{},
the \swift{} model achieves lower FID in every tested configuration
(\Cref{tab:main-cfg,app:eval}).
Moreover, on RAEv2 the preferred internal-guidance scale decreases from
$1.78$ for the baseline to $1.20$ after \swift{} training, while the best
guided gFID improves from $1.04$ to $0.99$
(\swiftpanelref{fig:ag}{b}; \Cref{app:raev2}).
This is consistent with \swift{} internalizing part of the weak-to-strong
correction during training while leaving additional room for
inference-time guidance.

\Needspace{4\baselineskip}
Our \textbf{contributions} are:
\begin{itemize}[leftmargin=1.2em, itemsep=1pt, topsep=1pt]

    \item \textbf{Weak-to-strong learning through target extrapolation.}
    We introduce \swift{}, which uses a frozen weak denoiser as a
    directional reference to extrapolate clean-sample regression targets.
    The guide is fixed during main-model training and discarded at
    inference (\Cref{sec:swift-method}).

    \item \textbf{Broad improvements across diffusion representations.}
    We demonstrate consistent gains across CIFAR-10 and ImageNet-256,
    spanning VAE, RAE, end-to-end latent, and pixel-space diffusion models.
    The gains persist under longer training and remain after
    inference-time guidance is independently retuned
    (\Cref{sec:imagenet,sec:ag}).

    \item \textbf{A finite-training explanation.}
    We show theoretically that target extrapolation can strictly reduce
    posterior-mean error relative to matched standard training over a
    nonempty range of positive extrapolation strengths
    (\Cref{sec:rkhs-theory}).

\end{itemize}

\section{\swift{}: Self-Guided Training of Diffusion Models}
\label{sec:swift}

\subsection{Preliminaries}
\label{sec:swift-preliminaries}

Denoising diffusion models generate samples by reversing Gaussian
corruption \citep{ho2020denoising,karras2022elucidating}.
Let $t\in[0,1]$ index the noise level, not the training
iteration, and let $\sigma_t$ increase with $t$. We use the
additive-noise parameterization
\begin{equation*}
    \bm x_t=\bm x_0+\bm\varepsilon_t,
    \qquad
    \bm x_0\sim p_{\mathrm{data}},
    \qquad
    \bm\varepsilon_t\sim
    \mathcal N(\bm 0,\sigma_t^2\mathbf I_d).
\end{equation*}
The noise $\bm\varepsilon_t$ is independent of $\bm x_0$. We write $p_t$ for the distribution of $\bm x_t$ at fixed $t$.
When a condition $\bm c$ is present, it is an additional denoiser
input; we suppress it below and discuss conditional and null-condition
predictions in \Cref{sec:swift-comparison}.

\paragraph{Training.}
Standard clean-sample prediction~\citep{vincent2011denoising,karras2022elucidating} trains a denoiser
$\mathcal D_{\bm\theta}(\bm x_t,\sigma_t)$ to predict $\bm x_0$ by
minimizing
\begin{equation}
    \mathcal L_{\mathrm{base}}(\bm\theta)
    =
    \mathbb E\left[
        \lambda(t)
        \left\|
            \mathcal D_{\bm\theta}(\bm x_t,\sigma_t)-\bm x_0
        \right\|_2^2
    \right],
    \label{eq:diffusion-loss}
\end{equation}
where $t\sim p_{\mathrm{noise}}$ is sampled independently of
$\bm x_0\sim p_{\mathrm{data}}$, and $p_{\mathrm{noise}}$ is the distribution of noise levels used during training. The expectation also
includes $\bm\varepsilon_t$ drawn as above, and $\lambda(t)>0$ on
the support of $p_{\mathrm{noise}}$.
With finite second moments, the unrestricted population minimizer
of \Cref{eq:diffusion-loss} is the posterior mean
$\mathcal D^\star(\bm x_t,\sigma_t)
=\mathbb E[\bm x_0\mid\bm x_t,t]$ (\Cref{app:target-identity}).
For $\sigma_t>0$, Tweedie's formula relates it to the score
\citep{efron2011tweedie}:
\begin{equation}
    \nabla_{\bm x_t}\log p_t(\bm x_t)
    =\frac{\mathcal D^\star(\bm x_t,\sigma_t)-\bm x_t}
           {\sigma_t^2}.
    \label{eq:tweedie}
\end{equation}

\paragraph{Inference.}
With a differentiable variance-exploding noise schedule, the
probability-flow ODE is \citep{song2021score}
\begin{equation}
    d\bm x_t
    =-\dot{\sigma}_t\sigma_t
    \nabla_{\bm x_t}\log p_t(\bm x_t)\,dt.
    \label{eq:pf-ode}
\end{equation}
Sampling starts at a large noise level, where the noisy marginal can
be approximated by a Gaussian, and integrates \Cref{eq:pf-ode} toward
smaller $t$. In practice, the score in \Cref{eq:tweedie} is formed
using the trained denoiser $\mathcal D_{\bm\theta}$ in place of
$\mathcal D^\star$. Thus the sampler depends on the denoiser reached
after finite training.

\subsection{Proposed Method: Learning with a Frozen Weak Guide}
\label{sec:swift-method}

Building on the preliminaries, we introduce \swift{}, which trains a
denoiser using extrapolated regression targets. Instead of regressing
directly onto the clean sample $\bm x_0$ as in
\Cref{eq:diffusion-loss}, \swift{} minimizes
\begin{equation}
    \mathcal L_{\mathrm{SWIFT}}(\bm\theta)
    =
    \mathbb E\left[
        \lambda(t)
        \left\|
            \mathcal D_{\bm\theta}(\bm x_t,\sigma_t)
            -{\color{red}\bm y_\alpha}
        \right\|_2^2
    \right],
    \label{eq:swift-loss}
\end{equation}
with the same expectation and weighting as in
\Cref{eq:diffusion-loss}, but using the extrapolated target:
\begin{equation}
    \tcboxmath[
        colback=white,
        opacityback=1,
        colframe=black!75!white,
        boxrule=1.6pt,
        arc=5pt,
        boxsep=0pt,
        left=12pt,
        right=12pt,
        top=8pt,
        bottom=8pt
    ]{
        {\color{red}\bm y_\alpha}
        =
        \bm x_0
        +\alpha\bigl(
            \bm x_0
            -\mathcal D_{\mathrm{weak}}(\bm x_t,\sigma_t)
        \bigr)
    }
    \label{eq:swift-target}
\end{equation}
Here, $\mathcal D_{\mathrm{weak}}$ is a pretrained, frozen weak denoiser,
and $\alpha\geq0$ sets the extrapolation strength. Setting
$\alpha=0$ recovers standard denoising training. For $\alpha>0$,
the target extends beyond the clean sample in the direction away
from the weak denoiser's prediction. Velocity-predicting models~\citep{lipman2023flow} apply the
same extrapolation to the velocity, which is equivalent (\Cref{app:config}).

\paragraph{Constructing the frozen weak denoiser.}
We train $\mathcal D_{\mathrm{weak}}$ in the ordinary way with
\Cref{eq:diffusion-loss} and freeze it before the main run. The default
guide uses a smaller architecture; capacity and training-stage variants
are studied separately. The default DiffusionBench guide has approximately one
quarter as many parameters as the main denoiser and is trained for half
the main model's standard training duration; the CIFAR-10 and RAEv2 guide
checkpoints are specified in
\Cref{app:config}.
The method is \emph{self-guided} because the correction comes from
a weaker denoiser trained on the same task, without an external
teacher, a separate guidance objective, or an auxiliary prediction
head.

The intuition is to compensate for what ordinary training has not
learned within a finite training budget. Extrapolating the clean target
away from $\mathcal D_{\mathrm{weak}}$ emphasizes directions in which
the weak denoiser's prediction remains inaccurate. Whether this
improves the main denoiser's approximation to the posterior mean
depends on how those directions relate to its own finite-training
error.
For a frozen weak denoiser obtained by early stopping ordinary
training, \Cref{sec:rkhs-theory,thm:rkhs-swift} establish conditions
under which \swift{} achieves strictly lower posterior-mean error
than baseline training (proofs in \Cref{app:rkhs-theory}; a
Gaussian-mixture illustration is in \Cref{app:mog-experiment}).

\paragraph{Training and inference.}
During training, only the main denoiser
$\mathcal D_{\bm\theta}$ is optimized with \Cref{eq:swift-loss}.
The weak denoiser $\mathcal D_{\mathrm{weak}}$ is evaluated without
gradients and remains fixed. Thus, for a fixed clean sample, noise
realization, and noise level, the regression target does not change
as the main denoiser is updated. Beyond preparing the weak denoiser,
each training step requires one additional weak-denoiser forward pass
(about $9\%$ more FLOPs on DiffusionBench; \Cref{app:compute}).
After training, it is discarded; sampling uses the trained denoiser
and the original sampler, with no additional inference-time model
evaluation (\Cref{app:eval}).

\section{From Autoguidance to Self-Guided Training}
\label{sec:swift-comparison}

\begin{figure}[!t]
    \centering
    \includegraphics[width=\linewidth]
        {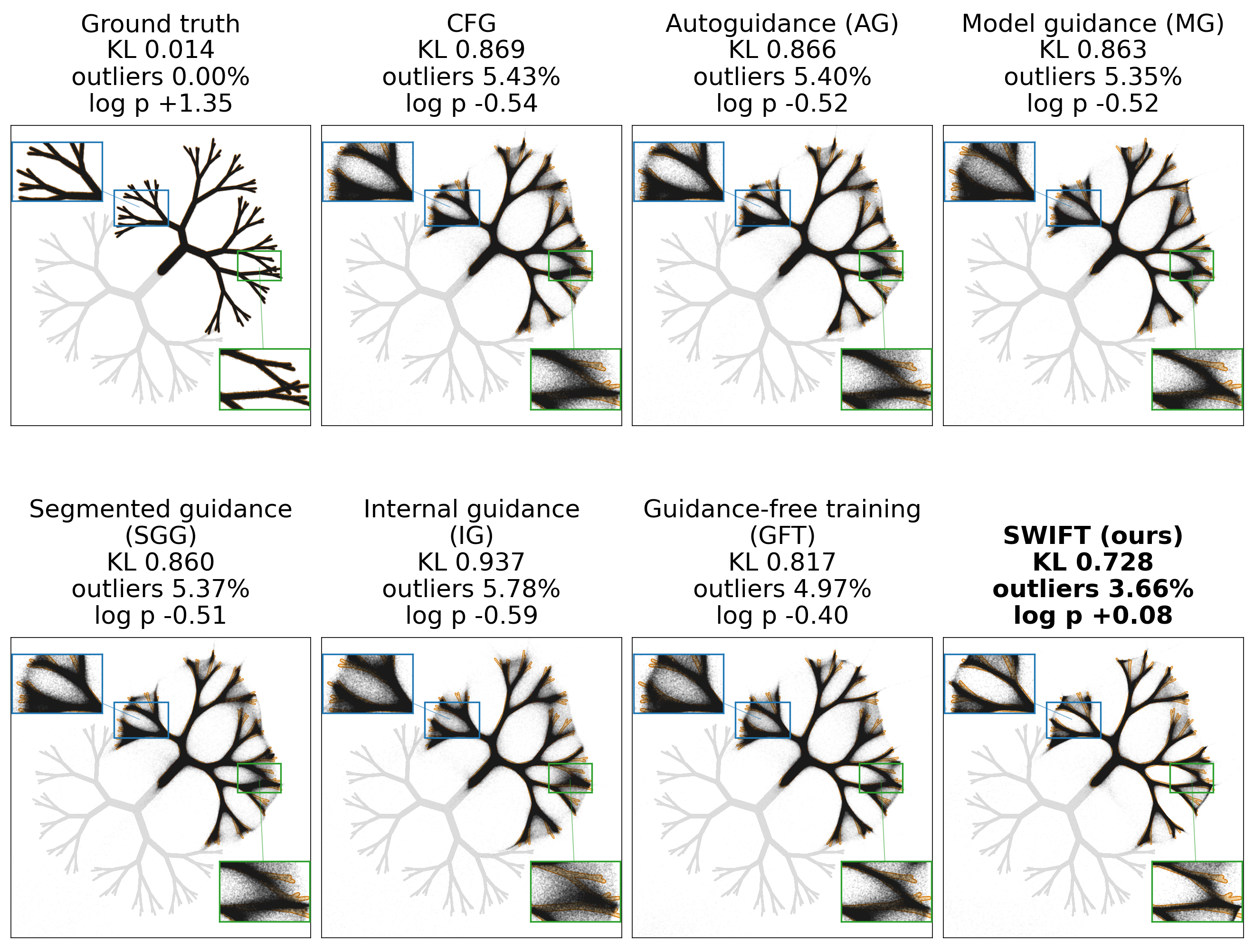}
    \caption{
        \textbf{\swift{} on a two-class, tree-shaped 2D toy distribution.}
        Ground truth and samples from CFG, AG, MG, SGG, IG, GFT, and
        \swift{}. Each compared method uses independently tuned
        settings. Blue boxes mark corresponding branching regions.
        Lower KL and outlier rates and higher $\log p$ are better.
    }
    \label{fig:toy-guidance}
\end{figure}

\paragraph{A visual comparison.}
Following the fractal-like Gaussian-mixture example of
\citet{karras2024guiding}, we compare how different methods reproduce a
distribution with thin, branching support. In \Cref{fig:toy-guidance},
\swift{} has the lowest reported KL divergence and outlier rate and the
highest reported $\log p$ among the seven methods. Relative to
GFT, the strongest baseline on these three reported metrics, its KL
decreases from $0.817$ to $0.728$, its outlier rate from $4.97\%$ to
$3.66\%$, and its $\log p$ increases from $-0.40$ to $+0.08$.
The highlighted regions show how the generated samples follow the
narrow branches.

Inference-time guidance includes classifier-free guidance
(CFG)~\citep{ho2022classifier} and autoguidance
(AG)~\citep{karras2024guiding}. Training-based approaches include
Model Guidance (MG)~\citep{tang2025mg}, Segmented Guidance
(SGG)~\citep{yuan2026segmented}, and Guidance-Free Training
(GFT)~\citep{chen2025gft}. Internal Guidance
(IG)~\citep{zhou2026internal} trains an intermediate output
for guidance at sampling time. Additional comparisons appear in
\Cref{app:guidance-comparison,tab:guidance-comparison}.

\subsection{From Sampling-Time Guidance to Training Targets}
\label{sec:swift-sampling-guidance}

CFG combines conditional and unconditional predictions during
sampling. AG is closer to \swift{} in its use of a weak denoiser.
Let $\mathcal D_{\mathrm{base}}$ be a denoiser trained with the
standard objective in \Cref{eq:diffusion-loss}. At the same noisy
input and noise level, AG evaluates
\begin{equation*}
    \mathcal D_{\mathrm{AG}}
    =
    (1+\alpha)\mathcal D_{\mathrm{base}}
    -\alpha\mathcal D_{\mathrm{weak}}
\end{equation*}
during sampling. This requires the weak prediction whenever AG is
applied. In \Cref{eq:swift-target}, \swift{} instead places the
observed clean sample $\bm x_0$ in the position occupied by
$\mathcal D_{\mathrm{base}}$ and uses the extrapolation as a
\emph{training target}. The weak denoiser is discarded after training.

Conditioned on $(\bm x_t,t)$, the expectation of the \swift{} target is
\begin{equation*}
    (1+\alpha)\mathcal D^\star(\bm x_t,\sigma_t)
    -\alpha\mathcal D_{\mathrm{weak}}(\bm x_t,\sigma_t),
\end{equation*}
the expression obtained by applying AG to the ideal posterior mean
(\swiftfullref{Lemma}{lem:target-decomposition}).
AG extrapolates a baseline denoiser already trained for a finite
time, while \swift{} changes the regression target before main-model
training. Under the assumptions of \Cref{sec:rkhs-theory}, this change
can improve approximation to the original posterior mean; it is not
a guarantee for every weak denoiser.

\subsection{Comparison with Training-Based Guidance}
\label{sec:swift-training-guidance}

MG constructs a training correction using conditional and
unconditional model predictions \citep{tang2025mg}.
SGG studies incorporating guidance into training using an updated
weak network or an intermediate prediction branch in its
training-time variants \citep{yuan2026segmented}.
GFT trains across guidance strengths using an unconditional
prediction \citep{chen2025gft}. In these constructions,
the auxiliary predictions can change as their models are trained.
IG jointly supervises intermediate and final outputs for
sampling-time guidance; it also studies training with a detached
final-minus-intermediate EMA correction
\citep[Sec.~6]{zhou2026internal}.

These designs use a changing prediction in the training correction
or supervise an additional output. \swift{} obtains
$\mathcal D_{\mathrm{weak}}$ by ordinary denoising training and
freezes it before training the main denoiser. For a fixed training
example and noise realization, its extrapolated target does not
change as the main model is updated. This avoids joint optimization
of the weak and main denoisers, which provided no clear additional
endpoint benefit in our CIFAR-10 comparisons
(\Cref{app:cifar}), and leaves a single denoiser at
inference. Preparing and evaluating the weak denoiser still incur
training compute, which our compute comparisons account for
(\Cref{app:compute}).
Further methodological details appear in
\Cref{app:guidance-comparison,eq:comparison-gft,eq:comparison-fixed-guide-gradient}.

\FloatBarrier

\providecommand{\swift}{\textnormal{\textsc{Swift}}}

\section{Why \swift{} Can Help}
\label{sec:rkhs-theory}

We compare ordinary denoising and \swift{} after the same finite
main-model training time. The analysis uses population gradient flow
at one fixed noise level.

\begin{assumption}[Population data and matched kernel training]
\label{ass:swift-setting}
Fix a noise level $t$ and a noisy--clean pair $(\bm x_0,\bm x_t)$ with
$\mathbb E\|\bm x_0\|_2^2<\infty$. All runs use population expectations
under the same joint law. The diffusion case has
$\bm x_t=\bm x_0+\bm\varepsilon$, with independent
$\bm\varepsilon\sim\mathcal N(\bm0,\sigma_t^2\mathbf I_d)$ and
$\sigma_t>0$; the theorem only requires the joint law and moment
condition above.

Fix a bounded, measurable, symmetric positive-semidefinite scalar
kernel $\mathsf K$. Every predictor starts from zero and follows
population gradient flow for normalized squared loss in the same
coordinate-wise RKHS $\mathcal H_{\mathsf K}^d$.
Choose $0\leq T_g<T<\infty$. An ordinary run on the clean label
$\bm x_0$ yields the guide at time $T_g$; freeze it and continue the
ordinary trajectory to $T$ for the baseline. Train a separate
\swift{} predictor from zero to the same time $T$, using the label
$(1+\alpha)\bm x_0-\alpha\mathcal D_{\mathrm{weak}}(\bm x_t)$ while the
guide stays frozen. The two main runs thus share the data law,
kernel, initialization, and optimization horizon.
\end{assumption}

Write $P_{\bm x_t}$ for the noisy-input distribution and
$\mathcal D^\star(\bm x)=\mathbb E[\bm x_0\mid\bm x_t=\bm x]$
for the posterior mean. Prediction error is measured in
$\mathcal H_2=L^2(P_{\bm x_t};\mathbb R^d)$.
The kernel couples predictions at different noisy inputs;
a finite-feature linear predictor is one example.

\paragraph{Objectives and training dynamics.}
At the fixed noise level, ordinary training and \swift{} minimize
\begin{equation}
  \begin{aligned}
    \mathcal J_0(\bm d)
    &=\tfrac12\mathbb E\|\bm d(\bm x_t)-\bm x_0\|_2^2,\\
    \mathcal J_\alpha(\bm d)
    &=\tfrac12\mathbb E\left\|
       \bm d(\bm x_t)-\bigl[(1+\alpha)\bm x_0
                    -\alpha\mathcal D_{\mathrm{weak}}(\bm x_t)\bigr]
       \right\|_2^2.
  \end{aligned}
  \label{eq:main-theory-objectives}
\end{equation}
For either loss, let $\bm y$ denote its regression label. Population
gradient flow in the RKHS is
\begin{equation}
  \frac{d\bm d_s}{ds}=-\nabla_{\mathcal H_{\mathsf K}^d}
       \tfrac12\mathbb E\|\bm d_s(\bm x_t)-\bm y\|_2^2,
  \qquad \bm d_0=\bm0.
  \label{eq:main-training-flow}
\end{equation}
Here $s$ is optimization time, distinct from noise level $t$;
the fixed positive loss weight $\lambda(t)$ is absorbed into $s$.
The gradient has the explicit function form
\begin{equation}
  \nabla\mathcal J(\bm d)(\bm x)
  =\mathbb E[\mathsf K(\bm x,\bm x_t)
                 (\bm d(\bm x_t)-\bm y)].
  \label{eq:main-gradient-formula}
\end{equation}
The corresponding derivation, including the RKHS-to-prediction-space
step, is in \swiftfullref{Lemma}{lem:kernel-flow}.

The resulting predictors are denoted by
$\mathcal D_{\mathrm{weak}}$, $\mathcal D_{\mathrm{base}}$, and
$\mathcal D_{\mathrm{SWIFT}}^{(\alpha)}$, respectively.
The guide is weaker here through shorter training in the same RKHS;
\swiftfullref{Corollary}{cor:reduced-rank-guide} also treats a spectrally nested
smaller guide.

\paragraph{Finite-time response.}
For any label $\bm y$, let
$\bm g(\bm x)=\mathbb E[\bm y\mid\bm x_t=\bm x]$.
The two conditional-mean targets are $\bm g_0=\mathcal D^\star$ and
\begin{equation}
  \bm g_\alpha=\mathcal D^\star+
              \alpha(\mathcal D^\star-\mathcal D_{\mathrm{weak}}).
  \label{eq:main-mean-target}
\end{equation}
The latter identity and its denoising-MSE interpretation are proved in
\swiftfullref{Lemma}{lem:target-decomposition}.
Define the population kernel operator on $\mathcal H_2$ by
\begin{equation}
  (\boldsymbol{\mathcal L}\bm f)(\bm x)
  =\int \mathsf K(\bm x,\bm x')\bm f(\bm x')
                 \,dP_{\bm x_t}(\bm x').
  \label{eq:main-population-operator}
\end{equation}
It is bounded, self-adjoint, and positive semidefinite. Kernel gradient
flow yields
\begin{equation}
  \frac{d\bm d_s}{ds}
  =-\boldsymbol{\mathcal L}(\bm d_s-\bm g),
  \qquad
  \bm d_s=\mathbf H_s\bm g,\quad
  \mathbf H_s:=\mathbf I-e^{-s\boldsymbol{\mathcal L}}.
  \label{eq:main-response-operator}
\end{equation}
The response $\mathbf H_s$ learns a fraction
$1-e^{-s\kappa}$ of a kernel mode with eigenvalue $\kappa>0$.
The operator construction and solution of the flow appear in
\Cref{app:rkhs-flow}.

\begin{theorem}[Finite-time improvement from a frozen weak denoiser]
\label{thm:rkhs-swift}
Under \Cref{ass:swift-setting},
\begin{equation*}
  \begin{aligned}
    \mathcal D_{\mathrm{weak}}&=\mathbf H_{T_g}\mathcal D^\star,
    &\mathcal D_{\mathrm{base}}&=\mathbf H_T\mathcal D^\star,\\
    \mathcal D_{\mathrm{SWIFT}}^{(\alpha)}
    &=\mathbf H_T[(1+\alpha)\mathcal D^\star
                         -\alpha\mathcal D_{\mathrm{weak}}].
  \end{aligned}
\end{equation*}
If $\mathbf H_T\mathcal D^\star\ne\bm0$, the guide has strictly larger
posterior-mean risk than the baseline. Moreover, for
$R(\alpha)=\|\mathcal D_{\mathrm{SWIFT}}^{(\alpha)}
                 -\mathcal D^\star\|_{\mathcal H_2}^2$,
there is an $\alpha^\star>0$ such that
\begin{equation}
  R(\alpha)<R(0)\qquad(0<\alpha<\alpha^\star).
  \label{eq:main-helpful-window}
\end{equation}
\end{theorem}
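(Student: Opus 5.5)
The plan is to reduce everything to the spectral calculus of the bounded, self-adjoint, positive semidefinite operator $\boldsymbol{\mathcal L}$ on $\mathcal H_2$. First I establish the three closed forms. By \swiftfullref{Lemma}{lem:kernel-flow}, the flow depends on the label only through its conditional mean $\bm g$. Indeed, $\mathbb E[\mathsf K(\bm x,\bm x_t)\bm y]=\mathbb E[\mathsf K(\bm x,\bm x_t)\bm g(\bm x_t)]$ by the tower property, since $\mathsf K$ is bounded and $\bm y\in L^2$. So \Cref{eq:main-response-operator} gives $\bm d_s=\mathbf H_s\bm g$.

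With $\bm g_0=\mathcal D^\star$, this yields $\mathcal D_{\mathrm{weak}}=\mathbf H_{T_g}\mathcal D^\star$ and $\mathcal D_{\mathrm{base}}=\mathbf H_T\mathcal D^\star$. The guide is frozen, so the \swift{} label has conditional mean $\bm g_\alpha$ from \Cref{eq:main-mean-target}; this uses \swiftfullref{Lemma}{lem:target-decomposition}, noting that $\mathcal D_{\mathrm{weak}}$ is a function of $\bm x_t$. Hence $\mathcal D_{\mathrm{SWIFT}}^{(\alpha)}=\mathbf H_T[(1+\alpha)\mathcal D^\star-\alpha\mathcal D_{\mathrm{weak}}]$.

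Next I write the errors through the spectral measure. Let $\mu$ be the (vector-valued, summed over coordinates) spectral measure of $\mathcal D^\star$ for $\boldsymbol{\mathcal L}$, supported on $[0,\|\boldsymbol{\mathcal L}\|]$. The basic identity is
\[
\|f(\boldsymbol{\mathcal L})\mathcal D^\star\|_{\mathcal H_2}^2=\int f(\kappa)^2\,d\mu(\kappa).
\]
The residual $\mathcal D_{\mathrm{weak}}-\mathcal D^\star=-e^{-T_g\boldsymbol{\mathcal L}}\mathcal D^\star$ has risk $\int e^{-2T_g\kappa}d\mu$, and the baseline has risk $\int e^{-2T\kappa}d\mu$. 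Their difference is $\int(e^{-2T_g\kappa}-e^{-2T\kappa})d\mu$. The integrand is $\ge0$, and it is strictly positive exactly on $\kappa>0$ because $T_g<T$. The hypothesis $\mathbf H_T\mathcal D^\star\neq\bm0$ means $\int(1-e^{-T\kappa})^2d\mu>0$, i.e.\ $\mu((0,\infty))>0$. This gives the strict inequality between the guide's and the baseline's risk.

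For the helpful window I write the error as a function of $\alpha$. Using $\mathcal D^\star-\mathcal D_{\mathrm{weak}}=e^{-T_g\boldsymbol{\mathcal L}}\mathcal D^\star$,
\[
\mathcal D_{\mathrm{SWIFT}}^{(\alpha)}-\mathcal D^\star=-e^{-T\boldsymbol{\mathcal L}}\mathcal D^\star+\alpha\,\mathbf H_Te^{-T_g\boldsymbol{\mathcal L}}\mathcal D^\star=:-\bm a+\alpha\bm b.
\]
So $R(\alpha)=R(0)-2\alpha\langle\bm a,\bm b\rangle+\alpha^2\|\bm b\|^2$ is a quadratic. Its slope at zero is
\[
-2\int e^{-T\kappa}(1-e^{-T\kappa})e^{-T_g\kappa}\,d\mu.
\]
This integrand is strictly positive for every $\kappa>0$, since $T<\infty$ makes $e^{-T\kappa}>0$. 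Hence the slope is strictly negative whenever $\mu((0,\infty))>0$. I then set $\alpha^\star=2\langle\bm a,\bm b\rangle/\|\bm b\|^2$, which is positive and finite since $\|\bm b\|^2>0$ under the same condition. Then $R(\alpha)<R(0)$ on $(0,\alpha^\star)$.

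The main obstacle is being careful that $\boldsymbol{\mathcal L}$ may have continuous spectrum and a kernel (the $\kappa=0$ part). I would handle this with the functional calculus and spectral measure rather than an eigenbasis, and check that the $\kappa=0$ mass contributes zero to both $\langle\bm a,\bm b\rangle$ and $\|\bm b\|^2$. A secondary point is the degenerate case $\mathbf H_T\mathcal D^\star=\bm0$. There $\bm b=\bm0$ and $R$ is constant, so the window claim needs that hypothesis as well; I would state that it is assumed throughout.
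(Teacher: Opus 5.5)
Your proposal is correct and follows the paper's proof essentially step for step. It uses the same closed forms from the two lemmas, the same error decomposition $-\bm a+\alpha\bm b$ (the paper's $-\bm e+\alpha\bm q$ in \Cref{thm:operator-improvement}), the same window $\alpha^\star=2\langle\bm a,\bm b\rangle/\|\bm b\|^2$, and the same modewise multipliers $e^{-T\kappa}(1-e^{-T\kappa})e^{-T_g\kappa}$ and $e^{-2T_g\kappa}-e^{-2T\kappa}$. The only difference is that you integrate against the spectral measure of $\mathcal D^\star$, whereas the paper expands in an eigenbasis; it can do so because $\boldsymbol{\mathcal L}$ is Hilbert--Schmidt and hence compact (Lemma~\ref{lem:kernel-flow}), so the continuous-spectrum concern does not arise, though your version is equally valid and simply does not need compactness.
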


\Cref{app:proof-rkhs-theorem} proves both claims using
\Cref{thm:operator-improvement}. An earlier guide has learned less of
each learnable mode; a small positive extrapolation partially cancels
the baseline's finite-time residual. The modewise calculation is
\Cref{eq:app-scalar-mechanism}, and a larger coefficient can overshoot.
The result concerns population denoising MSE at equal main-model
optimization time, with guide preparation excluded; it does not
directly imply a FID improvement.
A Gaussian-mixture example gives an explicit posterior mean
(\swiftfullref{Lemma}{lem:mog-posterior}), and for an RBF kernel
\swiftfullref{Proposition}{prop:rbf-injective} verifies the nonzero-predictor condition.
\Cref{app:finite-step-affine,app:finite-sample} give complementary finite-step
and finite-sample analyses.
\FloatBarrier

\section{Experiments}
\label{sec:experiments}

We evaluate \swift{} on ImageNet-256 \citep{deng2009imagenet} and
CIFAR-10 \citep{krizhevsky2009cifar}. In every experiment \swift{} changes only the regression
target; architectures, training schedules, samplers, and evaluation
pipelines are kept the same as the baseline. On ImageNet, \swift{} uses one fixed
recipe unless stated otherwise: $\alpha{=}0.4$ with a smaller pretrained guide
frozen before main-model training. The default DiffusionBench guide is
trained for half the main model's standard training duration
(\Cref{app:config}).

\paragraph{Evaluation setup.}
Inspired by DiffusionBench~\citep{diffusionbench2026}, we evaluate
\swift{} across 19 ImageNet-256 model configurations. For each
configuration, we compare the baseline and \swift{} with and without
inference-time guidance, tuning both guidance strengths separately
where checkpoints are available and identifying published
baselines otherwise. We report FID \citep{heusel2017gans},
sFID \citep{nash2021generating}, precision and recall
\citep{kynkaanniemi2019improved}, representation-space FD$_r$
\citep{yang2026fdr}, and MIND \citep{berthet2026mind} where available.
\Cref{tab:representative} shows selected results; training and evaluation
settings are in \Cref{app:config,app:eval}, with complete ImageNet-256
and CIFAR-10 results in \Cref{app:imagenet-full,app:cifar}.

\subsection{Generation Quality and Transfer Across Representations}
\label{sec:imagenet}

\paragraph{Matched improvements.}
\Cref{tab:representative} isolates the target change within each model
configuration. On RAE/DINOv2-B
features~\citep{wang2025ddt,zheng2025rae,oquab2024dinov2},
\swift{} lowers unguided FID from $2.09$ to $1.75$ and the
five-feature mean FD$_r$ from $4.13$ to $3.69$ after 80 epochs.
The gain persists on the independent RAEv2/DINOv3-L
configuration~\citep{singh2026raev2}, where unguided FID improves
from $1.56$ to $1.15$ within a 160-epoch training budget; the \swift{}
value is selected at epoch 140 (\Cref{app:eval}).

\begin{table}[t]
\centering
\caption{\textbf{Representative ImageNet-256 results.}
Each pair uses the same model configuration and sampler. Guidance is
tuned separately for re-evaluated pairs; source-reported guided
baselines retain their published settings (\Cref{app:eval}).
$\overline{\mathrm{FD}}_r$ and $\overline{\mathrm{MIND}}$ are arithmetic
means across Inception, ConvNeXt, DINOv2, MAE, and SigLIP features.}
\label{tab:representative}
\setlength{\tabcolsep}{2.2pt}
\renewcommand{\arraystretch}{1.08}
\resizebox{\linewidth}{!}{%
\begin{tabular}{@{}l*{12}{r}@{}}
\toprule
& \multicolumn{6}{c}{Without guidance}
& \multicolumn{6}{c}{With guidance} \\
\cmidrule(lr){2-7}\cmidrule(l){8-13}
Family: representation
& FID$\downarrow$ & sFID$\downarrow$
& $\overline{\mathrm{FD}}_r\downarrow$
& $\overline{\mathrm{MIND}}\downarrow$
& Prec.$\uparrow$ & Rec.$\uparrow$
& FID$\downarrow$ & sFID$\downarrow$
& $\overline{\mathrm{FD}}_r\downarrow$
& $\overline{\mathrm{MIND}}\downarrow$
& Prec.$\uparrow$ & Rec.$\uparrow$ \\
\midrule
\multicolumn{13}{@{}l}{\textit{615M parameters, 80 epochs; guidance = CFG}} \\
RAE: DINOv2-B
& 2.09 & 4.48 & 4.13 & 21.34 & 0.83 & 0.57
& 1.98 & 4.50 & 4.17 & 20.71 & 0.81 & 0.59 \\
\quad $+$ \swift{}
& \textbf{1.75} & 4.26 & 3.69 & 21.54 & 0.81 & 0.61
& \textbf{1.59} & 4.30 & 3.57 & 19.40 & 0.80 & 0.62 \\
\addlinespace[0.6ex]
VAE: SD-VAE (EMA)$+$reg
& 5.49 & 5.52 & 11.08 & 87.84 & 0.71 & 0.66
& 2.32 & 4.70 & 7.39 & 32.52 & 0.80 & 0.60 \\
\quad $+$ \swift{}
& \textbf{3.77} & 5.06 & 8.44 & 57.83 & 0.73 & 0.66
& \textbf{1.81} & 4.42 & 5.93 & 26.15 & 0.80 & 0.61 \\
\addlinespace[0.6ex]
E2E: Qwen-VAE
& 4.94 & 5.23 & 9.17 & 95.20 & 0.72 & 0.66
& 1.50 & 4.83 & 4.73 & 24.40 & 0.80 & 0.62 \\
\quad $+$ \swift{}
& \textbf{3.23} & 4.89 & 6.47 & 55.63 & 0.73 & 0.66
& \textbf{1.40} & 4.59 & 3.33 & 23.40 & 0.80 & 0.63 \\
\addlinespace[0.6ex]
Pixel: PixNerd
& 20.37 & 6.14 & 24.52 & 384.17 & 0.61 & 0.63
& 4.17 & 5.50 & 9.50 & 42.66 & 0.65 & 0.63 \\
\quad $+$ \swift{}
& \textbf{10.31} & 5.38 & 15.28 & 197.54 & 0.70 & 0.61
& \textbf{3.38} & 4.95 & 7.75 & 35.12 & 0.68 & 0.62 \\
\midrule
\multicolumn{13}{@{}l}{\textit{875M parameters, up to 160 epochs; guidance = internal guidance}} \\
RAE: DINOv3-L
& 1.56 & 4.10 & 3.41 & 20.01 & 0.82 & 0.65
& 1.04 & 4.00 & 3.31 & 15.67 & 0.82 & 0.61 \\
\quad $+$ \swift{}
& \textbf{1.15} & 3.60 & 2.94 & 18.03 & 0.81 & 0.67
& \textbf{0.99} & 3.80 & 3.07 & 14.45 & 0.81 & 0.63 \\
\bottomrule
\end{tabular}%
}
\end{table}

\paragraph{Transfer with one fixed recipe.}
Across 18 DiffusionBench configurations spanning latent and pixel
representations, one fixed recipe lowers unguided FID in every case,
by $16.3\%$--$49.4\%$ (median $31.3\%$). Even the three
lowest-FID baselines improve by $16.3\%$--$17.4\%$.
FD$_r$ and MIND each improve in at least 17 of 18 configurations
in every feature space. Full results appear in \Cref{tab:main}.

\paragraph{Compatibility with inference guidance.}
At the best available guidance settings, \swift{} also lowers guided
FID in all 18 DiffusionBench configurations
(\Cref{tab:main-cfg}). On RAEv2, \swift{} improves internal-guidance
FID from $1.04$ to $0.99$ (\Cref{tab:representative}), while its
unguided FID improves from $1.56$ to $1.15$.
Representative samples from the $0.99$-gFID model are shown in
\Cref{fig:swift-imagenet-samples}; additional class-conditional samples
appear in \Cref{app:samples}.

\begin{figure}[H]
    \centering
    \includegraphics[width=\linewidth]
        {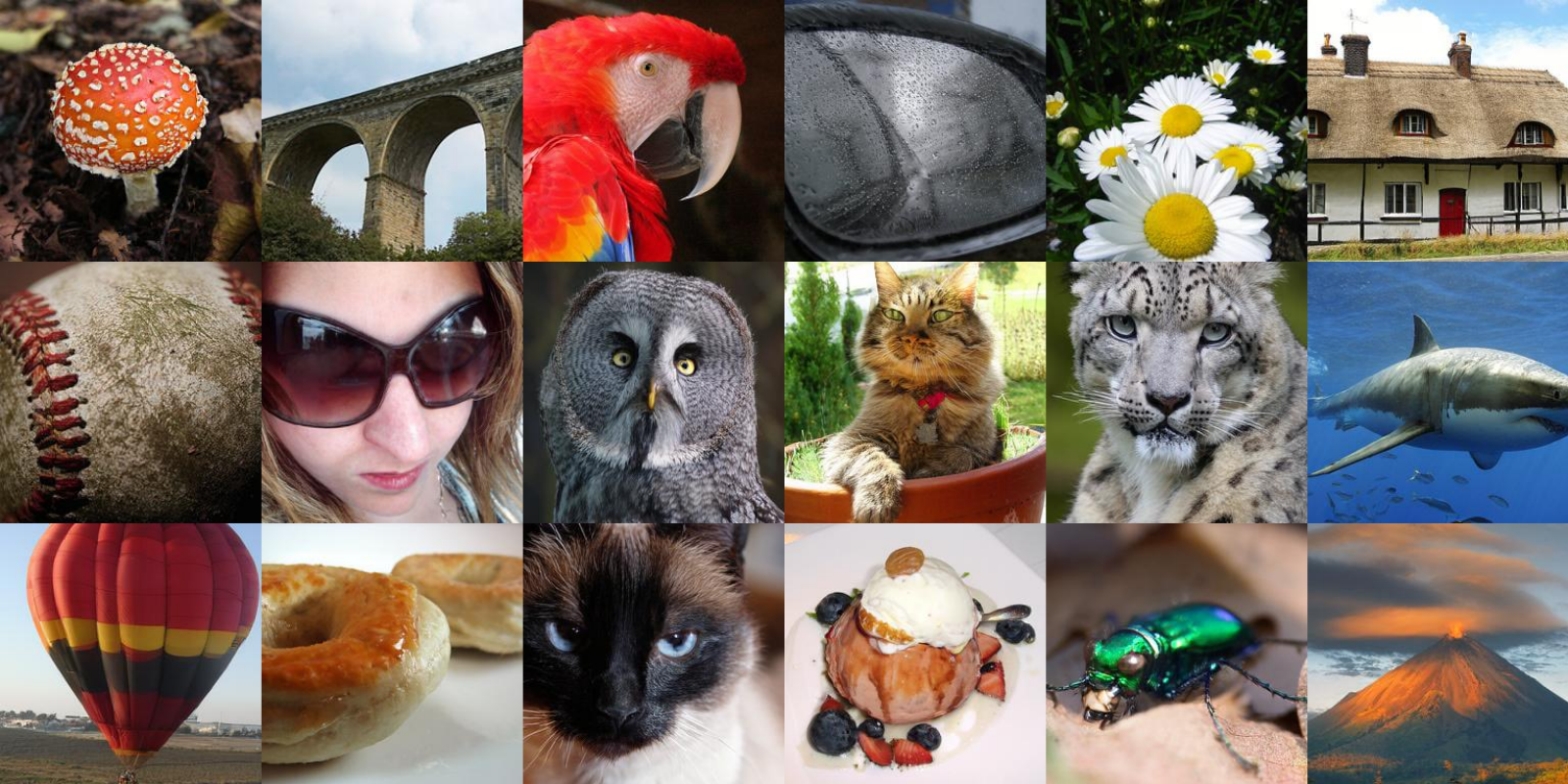}
    \caption{
    \textbf{ImageNet-256 samples from \swift{} on RAEv2/DINOv3-L
    (gFID-50K $0.99$).}
    Samples are generated with the \swift{}-trained model using retuned
    internal guidance and a 50-step ODE sampler (\Cref{app:eval}).
    }
    \label{fig:swift-imagenet-samples}
\end{figure}

\subsection{Combining \swift{} with Inference-Time Guidance}
\label{sec:ag}

We combine \swift{} with autoguidance~\citep{karras2024guiding} on the
DiffusionBench models and native internal REPA-head guidance on
RAEv2~\citep{singh2026raev2}. These mechanisms apply a weak-to-strong
extrapolation at sampling time, whereas \swift{} learns a related correction
during training and then discards its guide. We ask whether inference-time
guidance still improves a \swift{}-trained model and whether \swift{} changes
the preferred guidance strength.

\begin{figure}[H]
    \centering

    \begin{minipage}[t]{0.43\textwidth}
        \centering
        \vspace{0pt}

        {\small
        \textbf{(a) Guidance further improves \swift{}.}
        }

        \vspace{5pt}

        {\small
        \setlength{\tabcolsep}{3.0pt}
        \renewcommand{\arraystretch}{1.10}
        \begin{tabular}{@{}lrrrr@{}}
            \toprule
            & Base & \swift{} & Guided & \shortstack{\swift{}+\\guidance} \\
            \midrule
            DINOv2-B & 2.09  & 1.75  & 1.70 & \textbf{1.63} \\
            SD-VAE   & 5.49  & 3.77  & 2.16 & \textbf{1.95} \\
            Qwen-VAE & 4.94  & 3.23  & 1.79 & \textbf{1.62} \\
            PixNerd  & 20.37 & 10.31 & 4.02 & \textbf{3.77} \\
            RAEv2    & 1.56  & 1.15  & 1.04 & \textbf{0.99} \\
            \bottomrule
        \end{tabular}
        }
    \end{minipage}
    \hfill
    \begin{minipage}[t]{0.53\textwidth}
        \centering
        \vspace{0pt}

        {\small
        \textbf{(b) \swift{} shifts the guidance optimum left.}
        }

        \vspace{1pt}

        \begin{tikzpicture}
        \begin{axis}[
            width=\linewidth,
            height=0.52\linewidth,
            xlabel={internal guidance scale $s$},
            ylabel={FID-50K},
            xmin=0.98,
            xmax=2.02,
            ymin=0.97,
            ymax=1.62,
            xtick={1.0,1.2,1.4,1.6,1.8,2.0},
            ytick={1.0,1.2,1.4,1.6},
            tick label style={
                font=\scriptsize,
                /pgf/number format/assume math mode=true
            },
            label style={font=\scriptsize},
            ymajorgrids,
            grid style={cGrid, line width=0.3pt},
            axis line style={cAxis!65, line width=0.45pt},
            tick style={cAxis!65, line width=0.45pt},
            every axis plot/.append style={
                line width=0.9pt,
                mark size=1.5pt,
                mark options={line width=0.5pt}
            },
            legend style={
                font=\tiny,
                draw=none,
                fill=white,
                fill opacity=0.90,
                text opacity=1,
                at={(0.98,0.98)},
                anchor=north east,
                inner sep=1.5pt,
                row sep=-2pt
            },
            legend cell align=left
        ]

        \addplot[cBase, dashdotted, mark=*] coordinates {
            (1.00,1.5628)
            (1.40,1.1603)
            (1.60,1.0875)
            (1.78,1.0400)
            (2.00,1.0790)
        };
        \addlegendentry{Baseline}

        \addplot[cAGT, mark=*] coordinates {
            (1.00,1.1479)
            (1.10,1.0630)
            (1.20,0.9900)
            (1.30,1.0016)
            (1.40,1.0163)
            (1.50,1.0321)
            (1.60,1.0492)
            (1.70,1.0760)
            (1.78,1.1090)
        };
        \addlegendentry{\swift{}}

        \addplot[
            only marks,
            mark=*,
            mark size=2.8pt,
            cBase
        ] coordinates {
            (1.78,1.0400)
        };

        \addplot[
            only marks,
            mark=*,
            mark size=2.8pt,
            cAGT
        ] coordinates {
            (1.20,0.9900)
        };

        \end{axis}
        \end{tikzpicture}
    \end{minipage}

    \vspace{-3pt}

    \caption{
    \textbf{\swift{} remains beneficial with inference-time guidance.}
    \textbf{(a)} FID with autoguidance on the DiffusionBench models and
    native internal REPA-head guidance on RAEv2.
    \textbf{(b)} On RAEv2, \swift{} shifts the preferred internal-guidance
    scale from $1.78$ to $1.20$.
    }
    \label{fig:ag}

    \vspace{-5pt}
\end{figure}

Across all five configurations in \swiftpanelref{fig:ag}{a}, the same ordering
appears. \swift{} improves the unguided baseline, inference-time guidance
improves it further, and independently retuned guidance on top of \swift{} gives the
best FID in every case. This shows that the improvement learned during
training is not replaced by inference-time guidance: even after the baseline
has been separately retuned for guidance, the \swift{} model retains an
additional gain.

\swift{} also changes the preferred inference-time guidance strength. On RAEv2,
the optimal internal-guidance scale shifts from $1.78$ for the baseline to
$1.20$ for \swift{}, while the best guided FID improves from $1.04$ to
$0.99$ (\swiftpanelref{fig:ag}{b}; \Cref{app:raev2}). The lower preferred scale is consistent with
overlap between the training-time and inference-time corrections.
The remaining gain shows that inference-time guidance remains complementary
to the correction learned during training.

\subsection{Longer Training and Training Cost}
\label{sec:efficiency}

\begingroup
\setlength{\columnsep}{0.42cm}
\paragraph{The gain persists with longer training.}
\begin{wrapfigure}{r}{0.49\textwidth}
    \centering
    \begin{minipage}{\linewidth}
    \centering

    \begin{tikzpicture}
    \begin{axis}[
        width=\linewidth,
        height=0.72\linewidth,
        tick label style={
            font=\scriptsize,
            /pgf/number format/assume math mode=true
        },
        label style={font=\scriptsize},
        ymajorgrids,
        grid style={cGrid, line width=0.3pt},
        axis line style={cAxis!65, line width=0.45pt},
        tick style={cAxis!65, line width=0.45pt},
        tick align=outside,
        every axis plot/.append style={
            line width=0.9pt,
            mark size=1.25pt,
            mark options={line width=0.5pt}
        },
        legend style={
            font=\tiny,
            draw=none,
            fill=white,
            fill opacity=0.9,
            text opacity=1,
            row sep=-2pt,
            inner xsep=2pt,
            inner ysep=1.5pt,
            at={(0.975,0.95)},
            anchor=north east
        },
        legend cell align=left,
        xlabel={training FLOPs (baseline-epoch eq.)},
        ylabel={FID-50K},
        xmin=15,
        xmax=325,
        ymin=1.30,
        ymax=2.22,
        xtick={40,80,120,160,200,240,280,320},
        ytick={1.4,1.6,1.8,2.0,2.2}
    ]

    \addplot[
        only marks,
        mark=*,
        mark size=0.7pt,
        cTwo!35,
        forget plot
    ] coordinates {
        (20,3.6473)
        (40,2.7156)
        (60,2.3012)
        (80,2.1297)
        (85,2.0699)
        (90,2.0603)
        (95,1.9673)
        (100,1.9537)
        (105,1.9463)
        (110,1.8963)
        (115,1.9009)
        (120,1.8513)
        (125,1.9213)
        (130,1.8331)
        (135,1.7658)
        (140,1.8685)
        (145,1.8172)
        (150,1.7634)
        (155,1.8021)
        (160,1.7606)
        (165,1.7469)
        (170,1.7427)
        (175,1.7039)
        (180,1.7359)
        (185,1.6663)
        (190,1.7156)
        (195,1.7120)
        (200,1.6596)
        (205,1.6709)
        (210,1.6528)
        (215,1.6860)
        (220,1.6632)
        (225,1.6243)
        (230,1.6284)
        (235,1.6086)
        (240,1.6023)
        (245,1.5890)
        (250,1.6062)
        (255,1.5688)
        (260,1.5575)
        (265,1.6191)
        (270,1.5940)
        (275,1.5668)
        (280,1.5910)
        (285,1.5969)
        (290,1.5732)
        (295,1.5878)
        (300,1.5475)
        (305,1.5619)
        (310,1.5301)
        (315,1.5626)
        (320,1.5302)
    };

    \addplot[cTwo, line width=1.1pt] coordinates {
        (20,3.6473)
        (21,3.5882)
        (22,3.5300)
        (23,3.4727)
        (24,3.4164)
        (25,3.3612)
        (26,3.3072)
        (27,3.2544)
        (28,3.2030)
        (29,3.1529)
        (30,3.1044)
        (31,3.0573)
        (32,3.0119)
        (33,2.9682)
        (34,2.9262)
        (35,2.8861)
        (36,2.8478)
        (37,2.8116)
        (38,2.7774)
        (39,2.7454)
        (40,2.7156)
        (41,2.6873)
        (42,2.6597)
        (43,2.6329)
        (44,2.6068)
        (45,2.5816)
        (46,2.5570)
        (47,2.5333)
        (48,2.5104)
        (49,2.4883)
        (50,2.4670)
        (51,2.4466)
        (52,2.4269)
        (53,2.4082)
        (54,2.3902)
        (55,2.3732)
        (56,2.3570)
        (57,2.3417)
        (58,2.3273)
        (59,2.3138)
        (60,2.3012)
        (61,2.2894)
        (62,2.2784)
        (63,2.2679)
        (64,2.2581)
        (65,2.2487)
        (66,2.2398)
        (67,2.2313)
        (68,2.2232)
        (69,2.2153)
        (70,2.2076)
        (71,2.2001)
        (72,2.1927)
        (73,2.1853)
        (74,2.1779)
        (75,2.1704)
        (76,2.1628)
        (77,2.1550)
        (78,2.1469)
        (79,2.1385)
        (80,2.1297)
        (81,2.1205)
        (82,2.1111)
        (83,2.1014)
        (84,2.0916)
        (85,2.0819)
        (86,2.0722)
        (87,2.0627)
        (88,2.0534)
        (89,2.0446)
        (90,2.0362)
        (91,2.0283)
        (92,2.0209)
        (93,2.0137)
        (94,2.0066)
        (95,1.9995)
        (100,1.9648)
        (105,1.9329)
        (110,1.9097)
        (115,1.9032)
        (120,1.8806)
        (125,1.8545)
        (130,1.8480)
        (135,1.8412)
        (140,1.8096)
        (145,1.8034)
        (150,1.8024)
        (155,1.7780)
        (160,1.7631)
        (165,1.7512)
        (170,1.7380)
        (175,1.7191)
        (180,1.7129)
        (185,1.7067)
        (190,1.6979)
        (195,1.6849)
        (200,1.6822)
        (205,1.6763)
        (210,1.6665)
        (215,1.6594)
        (220,1.6509)
        (225,1.6421)
        (230,1.6254)
        (235,1.6105)
        (240,1.6069)
        (245,1.5950)
        (250,1.5848)
        (255,1.5881)
        (260,1.5891)
        (265,1.5812)
        (270,1.5857)
        (275,1.5936)
        (280,1.5844)
        (285,1.5831)
        (290,1.5793)
        (295,1.5735)
        (300,1.5601)
        (305,1.5580)
        (310,1.5465)
        (315,1.5462)
        (320,1.5410)
    };
    \addlegendentry{Baseline}

    \addplot[
        only marks,
        mark=*,
        mark size=0.7pt,
        cAGT!35,
        forget plot
    ]
table[
        x expr={10 + 1.09*\thisrowno{0}},
        y index=1
    ] {figures/compute_swift_raw.dat};

    \addplot[cAGT, line width=1.1pt]
table[
        x expr={10 + 1.09*\thisrowno{0}},
        y index=1
    ] {figures/compute_swift_smooth.dat};
    \addlegendentry{\swift{}}

    \addplot[
        cAxis!70,
        densely dashed,
        line width=0.65pt,
        forget plot
    ] coordinates {
        (15,1.5301)
        (325,1.5301)
    };

    \addplot[
        only marks,
        mark=*,
        mark size=2.5pt,
        cTwo,
        forget plot
    ] coordinates {
        (310,1.5301)
    };

    \addplot[
        only marks,
        mark=*,
        mark size=2.5pt,
        cAGT,
        forget plot
    ] coordinates {
        (140.8,1.4982)
    };

    \node[
        font=\tiny,
        cAxis!80,
        anchor=south east,
        inner sep=2pt
    ] at (axis cs:318,1.535) {
        baseline best: 1.53
    };

    \node[
        font=\tiny,
        cAGT!80!black,
        anchor=north west,
        inner sep=2pt
    ] at (axis cs:143,1.495) {
        $0.45\times$ FLOPs
    };

    \end{axis}
    \end{tikzpicture}
    \end{minipage}

    \caption{
        \textbf{Training-compute comparison on DINOv2-B.}
        \swift{} surpasses the best baseline FID observed through 320 epochs
        using approximately $0.45\times$ the FLOPs needed to reach that
        baseline checkpoint.
    }
    \label{fig:flops}
\end{wrapfigure}
We continue the matched DINOv2-B pair to 240 epochs. \swift{} is ahead at
all 36 evaluations from epochs 20 to 240, with the gap growing from $0.12$
to $0.26$ FID, with the training-time evaluator giving
$1.63\!\to\!1.38$ at epoch 240. A separate baseline continued to 320 epochs reaches at
best $1.53$, still above the $1.50$ reached by \swift{} at epoch 120.
The matched training-time FIDs are listed in \Cref{tab:long} and plotted in
\Cref{fig:long-run}; four additional trajectories appear in
\Cref{fig:four,app:imagenet-traj}.

\paragraph{Training cost.}
\swift{} adds $9\%$ FLOPs per epoch for the frozen guide, plus roughly
10 baseline-equivalent epochs to pretrain it (\Cref{app:compute}).
For context, MG and SGG incur about $1.23\times$ and $1.22\times$
wall-clock time per training iteration in the unified SiT-B/2 comparison
of \citet[Table~6]{yuan2026segmented}, while GFT reports 10--20\%
additional computation per training update \citep{chen2025gft}.
IG reports negligible extra sampling FLOPs \citep[App.~B]{zhou2026internal}.
Charging both of our costs, \swift{} at
epoch 120 already surpasses the best DINOv2-B baseline result observed
through 320 epochs ($1.53$), using approximately $0.45\times$ the FLOPs
needed to reach that baseline checkpoint
(\Cref{fig:flops}). On RAEv2, epoch-50 \swift{} similarly improves on the
epoch-160 baseline ($1.34$ vs.\ $1.56$) at $0.44\times$ the FLOPs
(including guide preparation; \Cref{app:compute}).

These time-to-quality ratios are conservative: within the observed training
horizon, the baselines never reach the best \swift{} FID. No baseline
time-to-quality is observed for those targets.
\endgroup

\subsection{How to Choose \texorpdfstring{$\alpha$}{alpha} and the Weak Guide}
\label{sec:selection}

\begin{figure}[!htbp]
\centering
\begin{tikzpicture}
\begin{groupplot}[group style={group size=3 by 1, horizontal sep=28pt}, width=0.34\linewidth, height=0.235\linewidth,
tick label style={font=\scriptsize, /pgf/number format/assume math mode=true},
  label style={font=\scriptsize}, title style={font=\scriptsize\bfseries, yshift=-3pt},
  ymajorgrids, grid style={cGrid, line width=0.3pt},
  axis line style={cAxis!65, line width=0.45pt},
  tick style={cAxis!65, line width=0.45pt}, tick align=outside,
  xtick style={line width=0.4pt}, ytick style={line width=0.4pt},
  every axis plot/.append style={line width=0.9pt, mark size=1.25pt,
      mark options={line width=0.5pt}},
  legend style={font=\tiny, draw=none, fill=white, fill opacity=0.9, text opacity=1,
      row sep=-2.5pt, inner xsep=3pt, inner ysep=2pt},
  legend cell align=left, legend image post style={line width=0.9pt}]
\nextgroupplot[title={(a) coefficient $\alpha$}, xlabel={$\alpha$ (half-width guide)}, ylabel={FID-50k, 200k kimg}, ymode=log, ymin=1.4, ymax=6, ytick={1.5,2,3,5}, yticklabels={1.5,2,3,5}, xmin=-1.1, xmax=2.1, xtick={-1,-0.5,0,0.5,1,1.5,2}]
\addplot[draw=none, fill=cBase!14, forget plot] coordinates {(-1.1,1.8131) (2.1,1.8131) (2.1,1.9111) (-1.1,1.9111)} \closedcycle;
\addplot[cBase, dashed, line width=0.6pt, domain=-1.1:2.1, samples=2, forget plot] {1.8621};
\addplot[cAGT, mark=*, mark size=1.3pt, forget plot] coordinates {(-1.00,3.2536) (-0.75,2.6916) (-0.50,2.3387) (-0.25,2.0357) (0.00,1.8621) (0.10,1.7788) (0.25,1.6816) (0.40,1.6919) (0.50,1.6415) (0.60,1.6536) (0.70,1.5783) (0.80,1.6027) (0.90,1.6074) (1.00,1.7085) (1.25,1.7534) (1.50,2.4265) (2.00,4.9364)};
\addplot[only marks, mark=o, mark size=1.5pt, cAGT!70, line width=0.5pt, forget plot] coordinates {(0.60,1.6540) (0.60,1.5700) (0.60,1.6580)};
\addplot[only marks, mark=o, mark size=1.5pt, cAGT!70, line width=0.5pt, forget plot] coordinates {(0.80,1.6030) (0.80,1.6580) (0.80,1.6610)};
\nextgroupplot[title={(b) guide width}, xlabel={guide channel base (student 128)}, ymode=log, ymin=1.4, ymax=10, ytick={1.5,2,3,5,8}, yticklabels={1.5,2,3,5,8}, xmode=log, xmin=13, xmax=150, xtick={16,32,64,128}, xticklabels={16,32,64,128}]
\addplot[draw=none, fill=cBase!14, forget plot] coordinates {(13,1.8131) (150,1.8131) (150,1.9111) (13,1.9111)} \closedcycle;
\addplot[cBase, dashed, line width=0.6pt, domain=13:150, samples=2, forget plot] {1.8621};
\addplot[cAGT, mark=*, mark size=1.3pt, forget plot] coordinates {(16,8.8151) (32,3.3649) (64,1.6027) (96,2.2217) (128,2.5806)};
\nextgroupplot[title={(c) guide training length}, xlabel={guide snapshot (kimg, thousands)}, ymode=log, ymin=1.4, ymax=7, ytick={1.5,2,3,5}, yticklabels={1.5,2,3,5}, xmin=0, xmax=210, xtick={10,50,100,150,200}]
\addplot[draw=none, fill=cBase!14, forget plot] coordinates {(0,1.8131) (210,1.8131) (210,1.9111) (0,1.9111)} \closedcycle;
\addplot[cBase, dashed, line width=0.6pt, domain=0:210, samples=2, forget plot] {1.8621};
\addplot[cAGT, mark=*, mark size=1.3pt, forget plot] coordinates {(10,5.9480) (50,1.7304) (78,1.5810) (100,1.5575) (150,1.5602) (200,1.6027)};
\end{groupplot}
\end{tikzpicture}
\caption{
\textbf{Choosing $\alpha$ and the weak guide on CIFAR-10.}
Moderate positive $\alpha$, a half-width guide, and a sufficiently trained weak guide give the strongest results.
The CIFAR-10 training and evaluation settings are in \Cref{app:config,app:eval}.
}
\label{fig:selection}
\end{figure}

\paragraph{Choosing $\alpha$.}
Performance is broadly U-shaped in $\alpha$ (\swiftpanelref{fig:selection}{a}).
On CIFAR-10, moderate positive values form a broad favorable region around
$\alpha=0.7$--$0.9$, where differences are comparable to replicate noise.
ImageNet shows the same qualitative behavior: on DINOv2-B,
$\alpha\in[0.5,1.0]$ substantially improves over the baseline.
The precise optimum depends on the model,
however, so our breadth experiments use the fixed default $\alpha=0.4$
without tuning; in practice, a small sweep over moderate positive values is
sufficient. Negative $\alpha$ can accelerate the initial decrease in FID,
but the tested negative warm starts provide no consistent final-quality
advantage (\Cref{app:negative}).

\paragraph{Choosing the weak guide.}
The guide should be weaker than the student, but still competent.
At $\alpha=0.8$ on CIFAR-10, reducing the channel base from the student's
128 to 64 improves FID from $2.58$ to $1.60$, while substantially smaller
guides degrade rapidly (\swiftpanelref{fig:selection}{b}). The half-width guide is
also the best width across the tested $\alpha$ values on CIFAR-10 and
consistently outperforms wider alternatives on ImageNet. Guide training
length shows a similar pattern: once the guide crosses a competence
threshold, performance is relatively insensitive over a broad range
(\swiftpanelref{fig:selection}{c}). On ImageNet, freezing the guide earlier can even
help, with an earlier guide outperforming the default guide in our 50k re-evaluation;
the corresponding training trajectories are discussed in \Cref{app:imagenet-traj}.

\paragraph{A simple fixed recipe gives substantial gains.}
The full CIFAR-10 campaign also tests jointly trained guides, capacity
switches, and coefficients that change with training time or noise level
(\Cref{app:cifar,tab:cifar-summary}). Co-training provides no clear additional endpoint
benefit in the tested comparisons. Some early capacity switches and
late coefficient profiles give useful CIFAR-10 refinements, whereas other
schedules lose the gain; their effects depend on the guide and coefficient,
and many small differences remain unresolved by repetitions. We therefore
retain a frozen half-width guide and a fixed, moderate positive $\alpha$
as the simple default for the broader experiments.

\section{Conclusion}
\label{sec:conclusion}

The regression target has received comparatively little attention as a design choice in diffusion training, yet it turns out to be a simple and effective place to intervene. By carrying the clean target past itself and away from a frozen weak model, \swift{} moves a weak-to-strong correction from inference into training, leaving a single denoiser and the original sampler at deployment with only about $9\%$ additional main-training FLOPs. Crucially, the modified target is not itself a better estimator of the posterior mean: it is deliberately biased, yet finite training on it can produce a better final predictor and better samples. This suggests a broader view of the regression target as a design interface rather than a fixed quantity---and raises the question of what other useful corrections can be written into it.

\section*{Reproducibility statement}
\swift{} changes only the regression target: the objective is
\Cref{eq:swift-loss,eq:swift-target}, the frozen weak denoiser is constructed as
described in \Cref{sec:swift-method}, and everything else is the training loop
of the respective framework. \Cref{app:config} specifies guide architectures,
freezing points, coefficients, training budgets, and hardware for the
EDM, DiffusionBench, and RAEv2 models, and explains the velocity-target
implementation. \Cref{app:eval} specifies samplers, guidance and checkpoint
selection, and evaluation metrics, distinguishing re-evaluated pairs from
published baselines;
\Cref{app:compute} reports the GPU hours and the FLOPs accounting behind
\Cref{fig:overview,fig:flops}. \Cref{app:imagenet-full} gives all 19
ImageNet-256 configurations with and without guidance on all 15 metrics, and
\Cref{app:cifar} the CIFAR-10 experiments, including an account of all 156
launched (152 scored) arms (\Cref{app:arms}). The matched Gaussian-mixture
comparison is specified in \Cref{app:mog-experiment}, and all assumptions and proofs are in
\Cref{app:rkhs-theory}. Our implementation will be made publicly available
upon publication.

\section*{AI use statement}
We used large language models (LLMs) in preparing this paper. They edited the text for clarity; drafted parts of the text, including appendix sections, from our experiment logs and notes; generated LaTeX and TikZ code for tables and figures; computed summary statistics from our result files. The authors reviewed and revised all LLM-assisted content and take full responsibility for the paper.

\bibliographystyle{plainnat}
\bibliography{refs}

\clearpage
\renewcommand{\swiftboxbreakable}{true}
\beginappendix
\setcounter{theorem}{0}
\renewcommand{\thetheorem}{A.\arabic{theorem}}
\renewcommand{\theHtheorem}{appendix.\arabic{theorem}}
\setcounter{equation}{0}
\renewcommand{\theequation}{A.\arabic{equation}}
\renewcommand{\theHequation}{appendix.\arabic{equation}}
\tcolorboxenvironment{lemma}{
  enhanced, breakable,
  colback=metabg, colframe=rtpblue!60!black,
  boxrule=0.5pt, arc=2pt,
  left=6pt, right=6pt, top=4pt, bottom=4pt,
  before skip=8pt, after skip=8pt
}
\providecommand{\swift}{\textnormal{\textsc{Swift}}}

\paragraph{Overview.}
\Cref{app:rkhs-theory} proves the results of \Cref{sec:rkhs-theory}.
\Cref{app:guidance-comparison} extends the comparison with guidance methods
in \Cref{sec:swift-comparison}. \Cref{app:mog-experiment} gives a matched
Gaussian-mixture comparison in which the class-conditional posterior mean
is known exactly; the separate tree-shaped example is shown in
\Cref{fig:toy-guidance}. \Cref{app:setup} gives the three models, the
evaluation procedure, and the compute we used.
\Cref{app:imagenet-full} reports every ImageNet-256 result,
\Cref{app:cifar} the CIFAR-10 experiments, and \Cref{app:samples} additional
samples.

\section{Proofs and Extensions of the Finite-Time Analysis}
\label{app:rkhs-theory}

We first establish the target and training-response identities, then prove
\Cref{thm:rkhs-swift} through a general operator comparison.
The remaining subsections give the Gaussian-mixture posterior, verify
learnability for Gaussian kernels, and extend the analysis to narrower
guides and finite-sample fixed-feature training.

Throughout, the noise level $t$ and the data law are those of
\Cref{ass:swift-setting}. We suppress the noise argument of
$\mathcal D^\star$ and $\mathcal D_{\mathrm{weak}}$.
The space $\mathcal H_2$, its inner product, the kernel operator
$\boldsymbol{\mathcal L}$, and the response $\mathbf H_s$ are defined in
\Cref{sec:rkhs-theory}. Unsubscripted function norms and inner products
refer to $\mathcal H_2$. Write $\bm d_c=\mathcal D_{\mathrm{base}}$ and
$\bm d_\alpha=\mathcal D_{\mathrm{SWIFT}}^{(\alpha)}$.
All regression targets and predictors are square integrable.

\subsection{Target identity and the quantity being compared}
\label{app:target-identity}

Write
\begin{equation*}
  \begin{aligned}
    \bm{\xi}&=\bm{x}_0-\mathcal D^\star(\bm{x}_t),
    &\mathbb E[\bm{\xi}\mid \bm{x}_t]&=\bm0,\\
    \bm y_{\alpha}&=(1+\alpha)\bm x_0-\alpha\mathcal D_{\mathrm{weak}}(\bm x_t).
  \end{aligned}
\end{equation*}

\begin{lemma}[\swift{} target decomposition]
\label{lem:target-decomposition}
For every fixed square-integrable guide $\mathcal D_{\mathrm{weak}}$,
\begin{align}
  \bm y_{\alpha}
  &=\mathcal D^\star+\alpha(\mathcal D^\star-\mathcal D_{\mathrm{weak}})+(1+\alpha)\bm{\xi},\label{eq:app-target-decomposition}\\
  \bm g_{\alpha}(\bm{x}_t):=\mathbb E[\bm y_{\alpha}\mid \bm{x}_t]
  &=\mathcal D^\star(\bm{x}_t)+\alpha\bigl(\mathcal D^\star(\bm{x}_t)-\mathcal D_{\mathrm{weak}}(\bm{x}_t)\bigr).
  \label{eq:app-target-conditional-mean}
\end{align}
Moreover,
\begin{equation}
  \operatorname{Cov}(\bm y_{\alpha}\mid \bm{x}_t)
  =(1+\alpha)^2\operatorname{Cov}(\bm{x}_0\mid \bm{x}_t).
  \label{eq:app-target-conditional-covariance}
\end{equation}
For every square-integrable predictor $\bm d$,
\begin{equation}
  \mathbb E\|\bm d(\bm x_t)-\bm y_{\alpha}\|_2^2
  =\|\bm d-\bm g_{\alpha}\|_{\mathcal H_2}^2
   +(1+\alpha)^2\mathbb E\|\bm{x}_0-\mathcal D^\star(\bm{x}_t)\|_2^2.
  \label{eq:app-target-pythagoras}
\end{equation}
Consequently, the unrestricted population minimizer of the \swift{} regression
loss is $\bm g_{\alpha}$, and its posterior-mean risk is
$\alpha^2\|\mathcal D_{\mathrm{weak}}-\mathcal D^\star\|_{\mathcal H_2}^2$.
\end{lemma}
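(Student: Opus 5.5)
The plan is to derive everything from the decomposition $\bm x_0=\mathcal D^\star(\bm x_t)+\bm\xi$ together with the orthogonality $\mathbb E[\bm\xi\mid\bm x_t]=\bm0$. First, substituting this into $\bm y_\alpha=(1+\alpha)\bm x_0-\alpha\mathcal D_{\mathrm{weak}}(\bm x_t)$ and regrouping gives \Cref{eq:app-target-decomposition} immediately:
\[
(1+\alpha)\mathcal D^\star-\alpha\mathcal D_{\mathrm{weak}}=\mathcal D^\star+\alpha(\mathcal D^\star-\mathcal D_{\mathrm{weak}}).
\]
Second, I will take conditional expectations given $\bm x_t$ and obtain \Cref{eq:app-target-conditional-mean}. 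The terms $\mathcal D^\star(\bm x_t)$ and $\mathcal D_{\mathrm{weak}}(\bm x_t)$ are $\bm x_t$-measurable, since the guide is a fixed function of $\bm x_t$ alone, and the $\bm\xi$ term vanishes. Square integrability of $\bm x_0$, $\mathcal D^\star$ and the guide ensures that all conditional expectations are well defined.

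For the covariance identity \Cref{eq:app-target-conditional-covariance}, I will use the fact that $\bm y_\alpha-\bm g_\alpha=(1+\alpha)\bm\xi$, and also $\bm x_0-\mathbb E[\bm x_0\mid\bm x_t]=\bm\xi$. Hence
\[
\operatorname{Cov}(\bm y_\alpha\mid\bm x_t)=(1+\alpha)^2\,\mathbb E[\bm\xi\bm\xi^\top\mid\bm x_t]=(1+\alpha)^2\operatorname{Cov}(\bm x_0\mid\bm x_t).
\]

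For the Pythagorean identity \Cref{eq:app-target-pythagoras}, I will write $\bm d(\bm x_t)-\bm y_\alpha=(\bm d-\bm g_\alpha)(\bm x_t)-(1+\alpha)\bm\xi$ and expand the square. The cross term is
\[
-2(1+\alpha)\,\mathbb E\bigl[\langle(\bm d-\bm g_\alpha)(\bm x_t),\bm\xi\rangle\bigr].
\]
By the tower property this equals $-2(1+\alpha)\,\mathbb E\bigl[\langle(\bm d-\bm g_\alpha)(\bm x_t),\mathbb E[\bm\xi\mid\bm x_t]\rangle\bigr]=0$. This step is justified because the inner product is integrable by Cauchy--Schwarz, as both factors are in $L^2$. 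This is the only point needing care; the remaining terms are exactly $\|\bm d-\bm g_\alpha\|_{\mathcal H_2}^2$ and $(1+\alpha)^2\mathbb E\|\bm\xi\|_2^2$.

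Finally, the second term of \Cref{eq:app-target-pythagoras} does not depend on $\bm d$. The loss is therefore minimized over square-integrable $\bm d$ uniquely (in $\mathcal H_2$) at $\bm d=\bm g_\alpha$, and $\bm g_\alpha$ is admissible because it is square integrable. Its posterior-mean risk is
\[
\|\bm g_\alpha-\mathcal D^\star\|_{\mathcal H_2}^2=\alpha^2\|\mathcal D^\star-\mathcal D_{\mathrm{weak}}\|_{\mathcal H_2}^2
\]
by \Cref{eq:app-target-conditional-mean}. The case $\alpha=0$ recovers the claim in \Cref{sec:swift-preliminaries} that the posterior mean minimizes the base loss. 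I expect no real obstacle beyond stating the integrability conditions that justify the tower-property step.
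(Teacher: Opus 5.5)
Your proposal is correct and follows essentially the same route as the paper: substitute $\bm x_0=\mathcal D^\star(\bm x_t)+\bm\xi$, condition on $\bm x_t$, use $\bm y_\alpha-\bm g_\alpha=(1+\alpha)\bm\xi$ for the covariance, and remove the cross term by the tower property because $\bm d-\bm g_\alpha$ is $\bm x_t$-measurable. You also make two points explicit that the paper leaves implicit: the Cauchy--Schwarz integrability needed for the tower-property step, and uniqueness of the minimizer in $\mathcal H_2$.
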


\begin{proof}
Substituting $\bm{x}_0=\mathcal D^\star+\bm{\xi}$ gives
\Cref{eq:app-target-decomposition}; conditioning on $\bm{x}_t$ gives
\Cref{eq:app-target-conditional-mean}.  Since $\mathcal D_{\mathrm{weak}}(\bm{x}_t)$ is fixed after
conditioning on $\bm{x}_t$, the same decomposition gives
\Cref{eq:app-target-conditional-covariance}.  Moreover,
$\bm y_{\alpha}-\bm g_{\alpha}=(1+\alpha)\bm{\xi}$ and
$\bm d-\bm g_{\alpha}$ is $\bm x_t$-measurable, so
\begin{equation*}
  \mathbb E\bigl[(\bm d-\bm g_{\alpha})^{\!\top}\bm{\xi}\bigr]
  =\mathbb E\bigl[(\bm d-\bm g_{\alpha})^{\!\top}
    \mathbb E[\bm{\xi}\mid\bm x_t]\bigr]=0.
\end{equation*}
Expanding the square proves \Cref{eq:app-target-pythagoras}; the remaining
claims follow immediately.
\end{proof}

The posterior-mean risk used in \Cref{thm:rkhs-swift} is also exactly
the reducible part of clean-sample denoising error: for every $\bm d$,
\begin{equation*}
  \mathbb E\|\bm d(\bm x_t)-\bm x_0\|_2^2
  =\|\bm d-\mathcal D^\star\|_{\mathcal H_2}^2
   +\mathbb E\|\bm{x}_0-\mathcal D^\star(\bm{x}_t)\|_2^2.
\end{equation*}
Thus posterior-mean risk and population clean-target MSE induce the same
ordering of predictors. The target shift is beneficial only through the
training response; it does not improve the unrestricted population optimum.

\subsection{Kernel training response}
\label{app:rkhs-flow}
\label{app:kernel-primer}

The following lemma establishes the prediction-space equation and the
operator properties needed for the spectral proof, without
requiring finite-dimensional features or $\bm g$ to belong to the RKHS.

\begin{lemma}[Prediction-space kernel flow]
\label{lem:kernel-flow}
Let $\bm y$ be a square-integrable vector regression target with
conditional mean $\bm g(\bm x_t)=\mathbb E[\bm y\mid\bm x_t]$.
Population gradient flow for the objective
$\frac12\mathbb E\|\bm d(\bm x_t)-\bm y\|_2^2$ in
$\mathcal H_{\mathsf K}^{d}$ from the zero predictor satisfies, in prediction
space,
\begin{equation}
  \frac{d}{ds}\bm d_s=-\boldsymbol{\mathcal L}(\bm d_s-\bm g),
  \qquad
  \bm d_0=\bm0,
  \label{eq:app-flow-ode}
\end{equation}
and therefore
\begin{equation*}
  \bm d_s=\mathbf H_s\bm g,
  \qquad
  \mathbf H_s=\mathbf I-e^{-s\boldsymbol{\mathcal L}}.
\end{equation*}
The conditional variance of $\bm y$ affects the value of the loss but not this
population trajectory.
\end{lemma}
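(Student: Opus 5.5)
The plan has three steps: compute the RKHS gradient, transfer the flow to prediction space, and solve the resulting linear ODE.

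\textbf{Step 1 (gradient).} Fix a coordinate. By the reproducing property, $d_s(\bm x)=\langle d_s,\mathsf K(\bm x,\cdot)\rangle_{\mathcal H_{\mathsf K}}$. Boundedness of $\mathsf K$ gives the pointwise bound $|d(\bm x)|\le \sqrt{\sup\mathsf K}\,\|d\|_{\mathcal H_{\mathsf K}}$. Hence $\bm d\mapsto\frac12\mathbb E\|\bm d(\bm x_t)-\bm y\|_2^2$ is a finite, convex, Fréchet-differentiable functional on $\mathcal H_{\mathsf K}^d$. Its directional derivative along $\bm h$ is $\mathbb E[(\bm d(\bm x_t)-\bm y)^\top \bm h(\bm x_t)]$.

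Two facts about this derivative matter.
\begin{itemize}
\item Since $\bm h(\bm x_t)$ is $\bm x_t$-measurable, the tower property replaces $\bm y$ by $\bm g(\bm x_t)$.
\item Writing $\bm h(\bm x_t)=\langle \bm h,\mathsf K(\bm x_t,\cdot)\rangle$ and exchanging expectation and inner product yields the Riesz representer $\mathbb E[\mathsf K(\cdot,\bm x_t)(\bm d(\bm x_t)-\bm g(\bm x_t))]$.
\end{itemize}
The exchange is a Bochner integral in $\mathcal H_{\mathsf K}$. It is justified because $\|\mathsf K(\bm x_t,\cdot)\|_{\mathcal H_{\mathsf K}}\le\sqrt{\sup \mathsf K}$ and $\bm d-\bm g\in L^2\subset L^1$.

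The tower-property step already shows that the conditional variance of $\bm y$ cannot enter the trajectory. It also gives \Cref{eq:main-gradient-formula}, with $\bm g$ in place of $\bm y$.

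\textbf{Step 2 (prediction space).} Let $\iota:\mathcal H_{\mathsf K}^d\to\mathcal H_2$ be the inclusion. It is bounded by the pointwise bound above, and measurability follows from measurability of $\mathsf K$. The gradient computed in Step 1 is $\iota^*(\iota\bm d-\bm g)$, where $\iota^*\bm f=\int \mathsf K(\cdot,\bm x')\bm f(\bm x')\,dP_{\bm x_t}(\bm x')$. This formula for the adjoint follows from the same Bochner argument.

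The RKHS flow $\dot{\bm d}_s=-\iota^*(\iota\bm d_s-\bm g)$ is a linear ODE with a bounded affine right-hand side on a Hilbert space. It therefore has a unique $C^1$ solution. Applying the bounded map $\iota$ gives
\[
\tfrac{d}{ds}\iota\bm d_s=-\iota\iota^*(\iota\bm d_s-\bm g),
\]
and $\iota\iota^*$ is exactly $\boldsymbol{\mathcal L}$ from \Cref{eq:main-population-operator}. This identification also shows that $\boldsymbol{\mathcal L}=\iota\iota^*$ is bounded, self-adjoint and positive semidefinite.

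\textbf{Step 3 (solve).} In $\mathcal H_2$, $\bm u_s=\iota\bm d_s-\bm g$ solves $\dot{\bm u}_s=-\boldsymbol{\mathcal L}\bm u_s$ with $\bm u_0=-\bm g$. Because $\boldsymbol{\mathcal L}$ is bounded, the exponential series converges and uniqueness holds. Hence $\bm u_s=-e^{-s\boldsymbol{\mathcal L}}\bm g$, which gives $\bm d_s=(\mathbf I-e^{-s\boldsymbol{\mathcal L}})\bm g=\mathbf H_s\bm g$. No assumption that $\bm g\in\mathcal H_{\mathsf K}$ is needed, because $\bm g$ enters only through $\iota^*\bm g$.

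The main obstacle is Step 2: making the RKHS-to-$L^2$ transfer rigorous. This means checking measurability and boundedness of $\iota$, identifying $\iota^*$ with the integral operator, and noting that $\iota$ may fail to be injective. Non-injectivity is harmless, since we only track the image trajectory.
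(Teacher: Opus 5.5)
Your proposal is correct and is essentially the paper's proof. The paper likewise uses the bounded inclusion $\mathbf J$ (your $\iota$), identifies $\mathbf J^*$ with the integral operator so that $\boldsymbol{\mathcal L}=\mathbf J\mathbf J^*$, pushes the bounded linear RKHS flow forward by $\mathbf J$, and solves it in closed form. It differs only in removing the conditional variance through the Pythagorean decomposition of the loss rather than the tower property on the derivative, and in solving by Duhamel's formula rather than your shift $\iota\bm d_s-\bm g$.

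One minor point concerns your Bochner integral. The paper obtains the pointwise formula for $\mathbf J^*$ by testing against kernel sections $\mathsf K(\bm x,\cdot)$ in each coordinate, which makes the Bochner integral unnecessary. This matters because a Bochner integral tacitly requires strong measurability (essentially separable range) of $\bm x\mapsto\mathsf K(\bm x,\cdot)$, and that can fail for a merely measurable bounded kernel.
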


\begin{proof}
Conditioning on $\bm{x}_t$ gives the orthogonal decomposition
\begin{equation}
  \mathbb E\|\bm d(\bm x_t)-\bm y\|_2^2
  =\|\bm d-\bm g\|_{\mathcal H_2}^2
   +\mathbb E\|\bm y-\bm g(\bm x_t)\|_2^2.
  \label{eq:app-regression-decomposition}
\end{equation}
After multiplying \Cref{eq:app-regression-decomposition} by $1/2$, the
second term is independent of $\bm d$.  If
$\mathbf J:\mathcal H_{\mathsf K}^{d}\to\mathcal H_2$ sends an RKHS function to its $L^2$ prediction class,
then $\mathbf J$ is bounded because
$\|\mathbf J\bm f\|_{\mathcal H_2}^2
\leq\sup_{\bm x}\mathsf K(\bm x,\bm x)
\|\bm f\|_{\mathcal H_{\mathsf K}^{d}}^2$.
The adjoint $\mathbf J^*:\mathcal H_2\to\mathcal H_{\mathsf K}^d$
satisfies $\langle\mathbf J^*\bm v,\bm u\rangle_{\mathcal H_{\mathsf K}^d}
=\langle\bm v,\mathbf J\bm u\rangle_{\mathcal H_2}$.
Testing this identity against a kernel section in coordinate $i$ gives
\begin{equation*}
  (\mathbf J^*\bm v)_i(\bm x)
  =\int\mathsf K(\bm x,\bm x')v_i(\bm x')\,dP_{\bm x_t}(\bm x').
\end{equation*}
Thus $\boldsymbol{\mathcal L}=\mathbf J\mathbf J^*$, which is bounded,
self-adjoint, and positive, since
$\langle\bm v,\boldsymbol{\mathcal L}\bm v\rangle
=\|\mathbf J^*\bm v\|_{\mathcal H_{\mathsf K}^d}^2\geq0$.
Boundedness of the kernel and the fact that $P_{\bm x_t}$ is a probability
measure also give
$\iint|\mathsf K(\bm x,\bm x')|^2\,dP_{\bm x_t}(\bm x)\,dP_{\bm x_t}(\bm x')<\infty$.
The scalar integral operator is therefore Hilbert--Schmidt and compact;
its finite direct sum $\boldsymbol{\mathcal L}$ is compact as well.
For the RKHS-valued trajectory $\bm u_s$, the gradient is
$\mathbf J^*(\mathbf J\bm u_s-\bm g)$. Hence
\begin{equation*}
  \frac{d}{ds}\bm u_s
  =-\mathbf J^*(\mathbf J\bm u_s-\bm g),
  \qquad \bm u_0=\bm0.
\end{equation*}
Because $\mathbf J^*\mathbf J$ is bounded, this linear Hilbert-space ODE
has a unique global solution. Applying $\mathbf J$ to
the ODE and writing $\bm d_s=\mathbf J\bm u_s$ proves
\Cref{eq:app-flow-ode}. This argument requires only
$\bm g\in\mathcal H_2$, not membership in the RKHS.
Since $\boldsymbol{\mathcal L}$ is bounded and
self-adjoint, functional calculus gives
\begin{equation*}
  \bm d_s=e^{-s\boldsymbol{\mathcal L}}\bm d_0
      +\int_0^se^{-(s-u)\boldsymbol{\mathcal L}}\boldsymbol{\mathcal L}\bm g\,du
      =(\mathbf I-e^{-s\boldsymbol{\mathcal L}})\bm g.
\end{equation*}
\end{proof}

For finite features $\mathsf K(\bm x,\bm x')=\bm\phi(\bm x)^\top\bm\phi(\bm x')$,
define $\mathbf V:\mathbb R^p\to L^2(P_{\bm x_t})$ by
$(\mathbf V\bm a)(\bm x)=\bm\phi(\bm x)^\top\bm a$.
Then $\mathbf V^*f=\mathbb E[\bm\phi(\bm x_t)f(\bm x_t)]$,
$\boldsymbol{\mathcal L}_0=\mathbf V\mathbf V^*$, and
$\mathbf V^*\mathbf V=\mathbb E[\bm\phi(\bm x_t)\bm\phi(\bm x_t)^\top]$.
For any eigenvalue $\kappa>0$, $\mathbf V$ and
$\kappa^{-1}\mathbf V^*$ are inverse maps between the corresponding
eigenspaces. Hence the two operators have the same nonzero eigenvalues,
including multiplicities.

Each $\mathbf H_s$ is self-adjoint with spectral multipliers in $[0,1]$;
responses at different times commute because they are functions of the
same operator. For the weighted diffusion loss, the normalized time is
$s=2\lambda(t)\tau$, where $\tau$ is time under the unnormalized objective.
This rescaling is the same for baseline and \swift{} training.

\subsection{A general commuting-response theorem}
\label{app:operator-theorem}

The following comparison applies to linear training responses and
also yields the spectrally nested guide result in
\swiftfullref{Corollary}{cor:reduced-rank-guide}.

Let $\mathbf H_c,\mathbf H_g:\mathcal H_2\to\mathcal H_2$ denote the population
responses of the main denoiser and the guide: training on a target with
conditional mean $\bm g$ returns $\mathbf H_c\bm g$ under the main-denoiser
training dynamics and $\mathbf H_g\bm g$ under the guide training dynamics.
We require
\begin{equation}
  \begin{aligned}
    \mathbf H_c&=\mathbf H_c^*,
    &\mathbf H_g&=\mathbf H_g^*,\\
    \mathbf0&\preceq \mathbf H_c\preceq \mathbf I,
    &\mathbf0&\preceq \mathbf H_g\preceq \mathbf I,\\
    \mathbf H_c\mathbf H_g&=\mathbf H_g\mathbf H_c.
  \end{aligned}
  \label{eq:app-operator-assumptions}
\end{equation}
The bounds are in the quadratic-form sense. Self-adjointness and
commutation give a common spectral representation, with each response
multiplying target components by a factor in $[0,1]$.

\Needspace{20\baselineskip}
\begin{theorem}[Commuting-response improvement]
\label{thm:operator-improvement}
Suppose \Cref{eq:app-operator-assumptions} holds.  Let
\begin{equation*}
  \begin{aligned}
    \bm d_c&=\mathbf H_c\mathcal D^\star,
    &\mathcal D_{\mathrm{weak}}&=\mathbf H_g\mathcal D^\star,\\
    \bm d_{\alpha}&=\mathbf H_c\bigl((1+\alpha)\mathcal D^\star-\alpha\mathcal D_{\mathrm{weak}}\bigr).
  \end{aligned}
\end{equation*}
Define $R(\alpha)=\|\bm d_\alpha-\mathcal D^\star\|^2$ and
\begin{equation*}
  \begin{aligned}
    \bm e&=(\mathbf I-\mathbf H_c)\mathcal D^\star,
    &\bm q&=\mathbf H_c(\mathbf I-\mathbf H_g)\mathcal D^\star,\\
    A&=\langle\bm e,\bm q\rangle,
    &C&=\|\bm q\|^2.
  \end{aligned}
\end{equation*}
Then
\begin{equation}
  \bm d_{\alpha}-\mathcal D^\star=-\bm e+\alpha\bm q,
  \qquad
  R(\alpha)=R(0)-2A\alpha+C\alpha^2,
  \label{eq:app-operator-risk}
\end{equation}
and
\begin{equation}
  A
  =\left\langle
  \mathcal D^\star,(\mathbf I-\mathbf H_c)\mathbf H_c(\mathbf I-\mathbf H_g)\mathcal D^\star
  \right\rangle
  \geq0.
  \label{eq:app-operator-alignment}
\end{equation}
If $A>0$, then $C>0$, the unique minimizer is
$\alpha_{\mathrm{PM}}^\star=A/C>0$, and
\begin{equation*}
  R(\alpha)<R(0)
  \quad\Longleftrightarrow\quad
  0<\alpha<2\alpha_{\mathrm{PM}}^\star.
\end{equation*}
\end{theorem}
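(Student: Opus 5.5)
The plan is a direct algebraic expansion followed by a spectral argument. First compute the error: by linearity,
\[
\bm d_\alpha-\mathcal D^\star=\mathbf H_c\mathcal D^\star-\mathcal D^\star+\alpha\mathbf H_c(\mathcal D^\star-\mathbf H_g\mathcal D^\star)=-\bm e+\alpha\bm q .
\]
Expanding the squared $\mathcal H_2$ norm then gives $R(\alpha)=\|\bm e\|^2-2\alpha\langle\bm e,\bm q\rangle+\alpha^2\|\bm q\|^2$. Since $\bm d_0=\bm d_c$, we have $R(0)=\|\bm e\|^2$, which yields \Cref{eq:app-operator-risk}.

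For \Cref{eq:app-operator-alignment}, use self-adjointness of $\mathbf I-\mathbf H_c$ to move it across the inner product:
\[
A=\langle\mathcal D^\star,(\mathbf I-\mathbf H_c)\mathbf H_c(\mathbf I-\mathbf H_g)\mathcal D^\star\rangle .
\]
Nonnegativity follows from the commutation assumption in \Cref{eq:app-operator-assumptions}. The three operators $\mathbf I-\mathbf H_c$, $\mathbf H_c$ and $\mathbf I-\mathbf H_g$ are commuting, self-adjoint and positive semidefinite, because $\mathbf 0\preceq\mathbf H\preceq\mathbf I$ makes both $\mathbf H$ and $\mathbf I-\mathbf H$ positive. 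A product of commuting positive operators is positive. I would argue this via square roots: $\mathbf P=(\mathbf I-\mathbf H_c)\mathbf H_c$ is positive since it is a continuous function of $\mathbf H_c$ with values $x(1-x)\ge0$ on $[0,1]$. Next, $\mathbf Q=\mathbf I-\mathbf H_g$ commutes with $\mathbf P$, and therefore also with $\mathbf P^{1/2}$. Hence
\[
\langle v,\mathbf P\mathbf Q v\rangle=\langle \mathbf P^{1/2}v,\mathbf Q\mathbf P^{1/2}v\rangle\ge0 .
\]
This step is the only nontrivial one. The main care point is justifying that commuting with $\mathbf P$ implies commuting with $\mathbf P^{1/2}$, which is standard: $\mathbf P^{1/2}$ is a norm limit of polynomials in $\mathbf P$.

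For the final claim, assume $A>0$. Cauchy--Schwarz gives $0<A\le\|\bm e\|\|\bm q\|$, so $\bm q\neq\bm0$ and $C>0$. Then $R$ is a strictly convex quadratic with vertex at $A/C>0$. The difference $R(\alpha)-R(0)=\alpha(C\alpha-2A)$ is negative exactly when $0<\alpha<2A/C$, which gives the stated equivalence and the uniqueness of the minimizer.
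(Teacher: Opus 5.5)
Your proposal is correct and follows the paper's proof essentially step for step. Both arguments use the same error decomposition $-\bm e+\alpha\bm q$, the same quadratic expansion, and the same square-root argument for $A\geq0$. The paper writes $A=\|(\mathbf I-\mathbf H_g)^{1/2}[\mathbf H_c(\mathbf I-\mathbf H_c)]^{1/2}\mathcal D^\star\|^2$, while you take only $\mathbf P^{1/2}$ and then use positivity of $\mathbf I-\mathbf H_g$ directly, which is equivalent. Your Cauchy--Schwarz step just spells out why $A>0$ forces $\bm q\neq\bm0$.
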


\begin{proof}
Using $\mathcal D_{\mathrm{weak}}=\mathbf H_g\mathcal D^\star$ and linearity of $\mathbf H_c$,
\begin{align*}
  \bm d_{\alpha}-\mathcal D^\star
  &=\mathbf H_c\mathcal D^\star+\alpha\mathbf H_c(\mathbf I-\mathbf H_g)\mathcal D^\star-\mathcal D^\star\\
  &=-(\mathbf I-\mathbf H_c)\mathcal D^\star+\alpha\mathbf H_c(\mathbf I-\mathbf H_g)\mathcal D^\star\\
  &=-\bm e+\alpha\bm q.
\end{align*}
Taking the squared norm gives \Cref{eq:app-operator-risk}.
By the spectral calculus, the commuting positive operators have commuting
positive square roots. Thus
\begin{equation*}
  A=\left\|
    (\mathbf I-\mathbf H_g)^{1/2}
    [\mathbf H_c(\mathbf I-\mathbf H_c)]^{1/2}\mathcal D^\star
    \right\|^2\geq0,
\end{equation*}
proving \Cref{eq:app-operator-alignment}.
When $A>0$, $\bm q\neq\bm0$ and hence $C>0$. The remaining
claims follow by minimizing the strictly convex quadratic
$R(0)-2A\alpha+C\alpha^2$.
\end{proof}

\subsection{Spectral comparison and proof of \texorpdfstring{\Cref{thm:rkhs-swift}}{the main theorem}}
\label{app:proof-rkhs-theorem}

Let $\boldsymbol{\mathcal L}_0$ be the scalar version of
$\boldsymbol{\mathcal L}$ on $L^2(P_{\bm x_t})$.
Compactness and positivity give orthonormal eigenfunctions $u_j$ with
positive eigenvalues $\kappa_j$. Decompose
\begin{equation*}
  \mathcal D^\star=\bm d_0^\star+\sum_j\bm d_j^\star u_j,
  \qquad \bm d_0^\star\in\ker\boldsymbol{\mathcal L},
  \qquad \bm d_j^\star=\mathbb E[\mathcal D^\star(\bm x_t)u_j(\bm x_t)].
\end{equation*}
The series converges in $\mathcal H_2$; the list may be finite.
Set $h_j=1-e^{-T\kappa_j}$, $h_{g,j}=1-e^{-T_g\kappa_j}$, and
$b_j=h_j(1-h_{g,j})$.
The response lemma and target identity give the baseline coefficient
$h_j\bm d_j^\star$ and the \swift{} coefficient
$\bm d_{\alpha,j}=h_j[1+\alpha(1-h_{g,j})]\bm d_j^\star$.
Consequently,
\begin{equation}
  \bm d_{\alpha,j}-\bm d_j^\star
  =\left[
     \underbrace{-e^{-T\kappa_j}}_{\text{finite-time residual}}
     +\underbrace{\alpha b_j}_{\text{target correction}}
    \right]\bm d_j^\star.
  \label{eq:app-scalar-mechanism}
\end{equation}
All three trained predictors have zero projection onto
$\ker\boldsymbol{\mathcal L}$, so their error in that subspace is
$-\bm d_0^\star$ and receives no correction.
These are function-space coefficients, not pointwise scaling
of image pixels.

\begin{proof}[Proof of \Cref{thm:rkhs-swift}]
\swiftfullref{Lemma}{lem:target-decomposition} and \swiftfullref{Lemma}{lem:kernel-flow}
yield the three predictor
expressions in the main theorem. Apply \Cref{thm:operator-improvement}
with $\mathbf H_c=\mathbf H_T$ and $\mathbf H_g=\mathbf H_{T_g}$.
The responses satisfy its assumptions because they are positive
contractions and commute. In the notation of that theorem, Parseval's
identity gives
\begin{equation}
  \begin{aligned}
    A&=\sum_j\|\bm d_j^\star\|_2^2 e^{-T\kappa_j}b_j,\\
    C&=\sum_j\|\bm d_j^\star\|_2^2 b_j^2.
  \end{aligned}
  \label{eq:app-rkhs-alignment}
\end{equation}
Both sums converge because the spectral multipliers are bounded and
$\sum_j\|\bm d_j^\star\|_2^2<\infty$.
The condition $\mathbf H_T\mathcal D^\star\ne\bm0$ means that at least
one positive-eigenvalue coefficient is nonzero. Since $T>0$ and $T_g<T$,
$e^{-T\kappa_j}>0$ and $b_j>0$ for every such mode, so $A>0$ and $C>0$.
The operator theorem therefore gives
\begin{equation*}
  R(\alpha)<R(0)\quad\Longleftrightarrow\quad 0<\alpha<2A/C.
\end{equation*}
Taking $\alpha^\star=2A/C$ proves the improvement claim. The guide is
strictly weaker because
\begin{equation*}
  \|\mathcal D_{\mathrm{weak}}-\mathcal D^\star\|^2-R(0)
  =\sum_j\|\bm d_j^\star\|_2^2
       (e^{-2T_g\kappa_j}-e^{-2T\kappa_j})>0.
\end{equation*}
The summands are positive on every active mode, completing the proof.
\end{proof}

For completeness, the same expansion gives the exact modal risk
\begin{equation}
  R(\alpha)=\|\bm d_0^\star\|^2
   +\sum_j\|\bm d_j^\star\|_2^2(-e^{-T\kappa_j}+\alpha b_j)^2.
  \label{eq:app-modal-risk}
\end{equation}
Its minimizing coefficient is $\alpha_{\mathrm{PM}}^\star=A/C$;
the beneficial interval extends to twice this value.
If the nonzero-predictor condition fails, both population flows remain
zero. More generally, when the main-model response is an exact
orthogonal projection, its residual is orthogonal to every learned
correction, giving $A=0$ and $R(\alpha)=R(0)+\alpha^2 C$.
Thus restricted expressivity alone does not establish the improvement;
finite-time partial fitting supplies the positive alignment.

\subsection{Gaussian-mixture posterior and learnability}
\label{app:mog-posterior}

The main theorem applies to any square-integrable noisy--clean pair.
As a concrete diffusion example, take
\begin{equation}
  \bm x_0\sim\sum_{k=1}^{K}\pi_k
                 \mathcal N(\bm\mu_k,\boldsymbol\Sigma),
  \qquad
  \bm x_t=\bm x_0+\bm\varepsilon,\quad
  \bm\varepsilon\sim\mathcal N(\bm0,\sigma_t^2\mathbf I_d).
  \label{eq:main-mog}
\end{equation}
Here $K<\infty$, $\pi_k>0$, $\sum_k\pi_k=1$,
$\boldsymbol\Sigma\succ\mathbf0$ is shared across components, and
$\bm\varepsilon$ is independent of $\bm x_0$. The component means are
arbitrary. This distribution satisfies the data and moment conditions in
\Cref{ass:swift-setting}.

Let $I\in\{1,\ldots,K\}$ denote the mixture component.  For any component
index $k\in\{1,\ldots,K\}$, conditional on $I=k$,
\begin{equation*}
  \begin{aligned}
    \bm{x}_0\mid I=k&\sim\mathcal N(\bm\mu_k,\boldsymbol{\Sigma}),\\
    \bm x_t\mid I=k&\sim\mathcal N(\bm\mu_k,\boldsymbol{\Sigma}_{\rm obs}),\\
    \boldsymbol{\Sigma}_{\rm obs}&=\boldsymbol{\Sigma}+\sigma_t^2\mathbf I_d.
  \end{aligned}
\end{equation*}

\begin{lemma}[Gaussian-mixture posterior]
\label{lem:mog-posterior}
Under \Cref{eq:main-mog}, define
\begin{equation*}
  \begin{aligned}
    \ell_k(\bm{x}_t)
    &=\bm\mu_k^\top\boldsymbol{\Sigma}_{\rm obs}^{-1}\bm x_t
      -\tfrac12\bm\mu_k^\top\boldsymbol{\Sigma}_{\rm obs}^{-1}\bm\mu_k,\\
    w_k(\bm{x}_t)
    &=\frac{\pi_k e^{\ell_k(\bm{x}_t)}}
      {\sum_{j=1}^{K}\pi_j e^{\ell_j(\bm{x}_t)}},\\
    \overline{\bm\mu}(\bm{x}_t)&=\sum_{k=1}^{K}w_k(\bm{x}_t)\bm\mu_k.
  \end{aligned}
\end{equation*}
Then $w_k(\bm x_t)=\mathbb P(I=k\mid\bm x_t)$ and
\begin{equation}
  \mathcal D^\star(\bm x_t)=\mathbb E[\bm x_0\mid\bm x_t]
  =\mathbf B\bm x_t+(\mathbf I_d-\mathbf B)\overline{\bm\mu}(\bm x_t),
  \qquad
  \mathbf B=\boldsymbol{\Sigma}\boldsymbol{\Sigma}_{\rm obs}^{-1}.
  \label{eq:app-posterior-mean}
\end{equation}
\end{lemma}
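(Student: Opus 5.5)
The plan is to condition on the component label $I$ and use the tower property,
\begin{equation*}
  \mathbb E[\bm x_0\mid\bm x_t]
  =\sum_{k=1}^K\mathbb P(I=k\mid\bm x_t)\,\mathbb E[\bm x_0\mid\bm x_t,I=k].
\end{equation*}
This splits the claim into two parts. The first is a Bayes computation identifying the posterior component weights with $w_k$. The second is the standard Gaussian conditioning formula within each component.

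\textbf{Within a component.} Conditional on $I=k$, the pair $(\bm x_0,\bm x_t)$ is jointly Gaussian, because $\bm\varepsilon$ is independent of $(\bm x_0,I)$. Its mean is $(\bm\mu_k,\bm\mu_k)$. The relevant covariances are $\operatorname{Cov}(\bm x_0,\bm x_t\mid I=k)=\boldsymbol\Sigma$ and $\operatorname{Cov}(\bm x_t\mid I=k)=\boldsymbol\Sigma_{\rm obs}$, and the latter is invertible since $\sigma_t^2>0$. The Gaussian conditioning formula then gives
\begin{equation*}
  \mathbb E[\bm x_0\mid\bm x_t,I=k]
  =\bm\mu_k+\boldsymbol\Sigma\boldsymbol\Sigma_{\rm obs}^{-1}(\bm x_t-\bm\mu_k)
  =\mathbf B\bm x_t+(\mathbf I_d-\mathbf B)\bm\mu_k .
\end{equation*}

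\textbf{Posterior weights.} By Bayes' rule, $\mathbb P(I=k\mid\bm x_t)\propto\pi_k\,\mathcal N(\bm x_t;\bm\mu_k,\boldsymbol\Sigma_{\rm obs})$. Expand the exponent $-\tfrac12(\bm x_t-\bm\mu_k)^\top\boldsymbol\Sigma_{\rm obs}^{-1}(\bm x_t-\bm\mu_k)$. The term $-\tfrac12\bm x_t^\top\boldsymbol\Sigma_{\rm obs}^{-1}\bm x_t$ and the normalizing constant do not depend on $k$, because $\boldsymbol\Sigma_{\rm obs}$ is shared across components. Both therefore cancel in the normalization. What remains is exactly $\ell_k(\bm x_t)$, using symmetry of $\boldsymbol\Sigma_{\rm obs}^{-1}$ to merge the two cross terms. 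This shows $w_k(\bm x_t)=\mathbb P(I=k\mid\bm x_t)$.

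\textbf{Combining.} Substitute both pieces into the tower identity. Since $\sum_k w_k=1$, the $\mathbf B\bm x_t$ term factors out, and the remaining terms collect into $(\mathbf I_d-\mathbf B)\overline{\bm\mu}(\bm x_t)$. This is \Cref{eq:app-posterior-mean}.

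The only point needing care is the justification of the conditioning step, which is mild. Everything has a density because $\boldsymbol\Sigma\succ\mathbf 0$ and $\boldsymbol\Sigma_{\rm obs}\succ\mathbf 0$. Hence the conditional expectations are given by explicit densities, and the tower-property decomposition holds for almost every $\bm x_t$ (in fact for every $\bm x_t$, using the continuous versions). No step is a genuine obstacle; it is a routine calculation.
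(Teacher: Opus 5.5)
Your proposal is correct and follows essentially the same route as the paper. Both arguments use Gaussian conditioning within each component to get $\mathbf B\bm x_t+(\mathbf I_d-\mathbf B)\bm\mu_k$, identify $w_k$ as the posterior component weights via Bayes' rule (the shared $\boldsymbol\Sigma_{\rm obs}$ makes the quadratic term cancel), and average over these weights; your explicit tower-property decomposition and remark on continuous versions only make that averaging step slightly more careful.
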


\begin{proof}
For fixed $k$, $(\bm{x}_0,\bm{x}_t)$ is jointly Gaussian with
$\operatorname{Cov}(\bm x_0,\bm x_t\mid I=k)=\boldsymbol{\Sigma}$ and
$\operatorname{Cov}(\bm x_t\mid I=k)=\boldsymbol{\Sigma}_{\rm obs}$.  Gaussian
conditioning gives
\begin{align}
  \mathbb E[\bm x_0\mid\bm x_t,I=k]
  &=\bm\mu_k+\boldsymbol{\Sigma}\boldsymbol{\Sigma}_{\rm obs}^{-1}(\bm x_t-\bm\mu_k)\notag
\\
  &=\mathbf B\bm x_t+(\mathbf I_d-\mathbf B)\bm\mu_k.
  \label{eq:app-component-posterior}
\end{align}
Bayes' rule yields
\begin{align*}
  \mathbb P(I=k\mid\bm x_t)
  &\propto
  \pi_k\exp\!\left[-\frac12(\bm{x}_t-\bm\mu_k)^\top
  \boldsymbol{\Sigma}_{\rm obs}^{-1}(\bm x_t-\bm\mu_k)\right]\\
  &\propto
  \pi_k\exp\!\left[\bm\mu_k^\top\boldsymbol{\Sigma}_{\rm obs}^{-1}\bm x_t
  -\frac12\bm\mu_k^\top\boldsymbol{\Sigma}_{\rm obs}^{-1}\bm\mu_k\right],
\end{align*}
where the common term
$-\bm x_t^\top\boldsymbol{\Sigma}_{\rm obs}^{-1}\bm x_t/2$ cancels.  Averaging
\Cref{eq:app-component-posterior} over these weights proves
\Cref{eq:app-posterior-mean}.

\end{proof}

\paragraph{Learnability with a Gaussian kernel.}
\label{app:gaussian-kernel}

Consider the Gaussian/RBF kernel
\begin{equation}
  \mathsf K_\ell(\bm{x}_t,\bm{x}_t')
  =\exp\!\left(-\frac{\|\bm{x}_t-\bm{x}_t'\|_2^2}{2\ell^2}\right),
  \qquad \ell>0.
  \label{eq:app-rbf-kernel}
\end{equation}

\begin{proposition}[Injectivity of the Gaussian-kernel operator]
\label{prop:rbf-injective}
For the mixture in \Cref{eq:main-mog}, the marginal $P_{\bm x_t}$ has a smooth,
strictly positive density on $\mathbb R^d$.  The integral operator associated
with \Cref{eq:app-rbf-kernel} is injective on
$L^2(P_{\bm x_t};\mathbb R^d)$. Moreover, $\mathcal D^\star\neq\bm0$.
Therefore, for every $T>0$, ordinary training produces
a nonzero predictor in $\mathcal H_2$, as required by the nonzero-predictor condition of
\Cref{thm:rkhs-swift}.
\end{proposition}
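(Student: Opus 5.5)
Four claims need proof, and I take them in order: the density is smooth and positive, the operator is injective, $\mathcal D^\star\neq\bm0$, and the trained predictor is nonzero.

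\emph{Density.} Under \Cref{eq:main-mog}, the conditional law $\bm x_t\mid I=k$ is $\mathcal N(\bm\mu_k,\boldsymbol\Sigma_{\rm obs})$ with $\boldsymbol\Sigma_{\rm obs}\succ\mathbf0$. Hence $P_{\bm x_t}$ has density $p=\sum_k\pi_k\,\mathcal N(\cdot;\bm\mu_k,\boldsymbol\Sigma_{\rm obs})$. This is a finite positive combination of smooth, everywhere-positive Gaussian densities, so $p$ is smooth and strictly positive.

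\emph{Injectivity.} Injectivity on $\mathcal H_2$ reduces to the scalar operator $\boldsymbol{\mathcal L}_0$, since $\boldsymbol{\mathcal L}$ acts coordinatewise. Suppose $f\in L^2(P_{\bm x_t})$ satisfies $\boldsymbol{\mathcal L}_0f=0$. Then
\[
\langle f,\boldsymbol{\mathcal L}_0 f\rangle=\iint \mathsf K_\ell(\bm x,\bm x')\,h(\bm x)\,h(\bm x')\,d\bm x\,d\bm x'=0,
\qquad h:=fp.
\]
The function $h$ lies in $L^1(\mathbb R^d)$ by Cauchy--Schwarz, because $P_{\bm x_t}$ is a probability measure. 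Bochner's theorem writes the Gaussian kernel as $\mathsf K_\ell(\bm x,\bm x')=c\int e^{i\bm\omega^\top(\bm x-\bm x')}e^{-\ell^2\|\bm\omega\|^2/2}\,d\bm\omega$ with $c>0$. Fubini, justified by $h\in L^1$ and the integrable spectral density, then gives
\[
0=c\int|\hat h(\bm\omega)|^2e^{-\ell^2\|\bm\omega\|^2/2}\,d\bm\omega .
\]
The spectral density is strictly positive, so $\hat h\equiv0$. Therefore $h=0$ a.e., and since $p>0$ everywhere, $f=0$ $P_{\bm x_t}$-a.e. So $\ker\boldsymbol{\mathcal L}=\{\bm0\}$.

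\emph{$\mathcal D^\star\neq\bm0$.} Use \Cref{lem:mog-posterior}: $\mathcal D^\star(\bm x)=\mathbf B\bm x+(\mathbf I_d-\mathbf B)\overline{\bm\mu}(\bm x)$. The term $\overline{\bm\mu}$ is a convex combination of the finitely many $\bm\mu_k$, so it is bounded. The matrix $\mathbf B=\boldsymbol\Sigma\boldsymbol\Sigma_{\rm obs}^{-1}$ is invertible because $\boldsymbol\Sigma\succ\mathbf0$. Hence $\|\mathcal D^\star(\bm x)\|\to\infty$ as $\|\bm x\|\to\infty$, so $\mathcal D^\star$ is nonzero outside a bounded set. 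That set has positive $P_{\bm x_t}$-mass because $p>0$. Thus $\mathcal D^\star\neq\bm0$ in $\mathcal H_2$.

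\emph{Nonzero predictor.} Take any $T>0$. Injectivity and compactness (\Cref{lem:kernel-flow}) make $\mathcal D^\star$ equal to $\sum_j\bm d_j^\star u_j$ with no kernel component. Since $\mathcal D^\star\neq\bm0$, some coefficient $\bm d_j^\star\neq\bm0$, and its eigenvalue $\kappa_j>0$. Then $\mathbf H_T\mathcal D^\star$ has coefficient $(1-e^{-T\kappa_j})\bm d_j^\star\neq\bm0$. Equivalently, $\mathbf H_T$ is injective, because its spectral multiplier $1-e^{-T\kappa}$ vanishes only at $\kappa=0$ and $\ker\boldsymbol{\mathcal L}=\{\bm0\}$. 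This gives the nonzero-predictor condition of \Cref{thm:rkhs-swift}.

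The step that needs the most care is the Fourier argument: one must check the Fubini interchange and that $h=fp\in L^1$, while $f$ is only in $L^2$ with respect to $P_{\bm x_t}$, not Lebesgue measure.
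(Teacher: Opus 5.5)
Your proof is correct, and three of its four parts follow the paper's proof: the density claim, the Fourier-energy argument for injectivity, and the conclusion $\ker\mathbf H_T=\ker\boldsymbol{\mathcal L}=\{\bm0\}$, which you phrase through the eigen-expansion.

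In the injectivity step the two arguments differ only cosmetically. The paper works with the finite signed measure $d\eta=f\,dP_{\bm x_t}$ and uses uniqueness of Fourier transforms of finite measures, so it never needs a density. Your $h=fp\in L^1$ is the same argument specialized to an absolutely continuous law. Strict positivity of $p$ is not actually needed there, because $fp=0$ Lebesgue-a.e.\ already gives $f=0$ $P_{\bm x_t}$-a.e.

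The genuinely different step is $\mathcal D^\star\neq\bm0$.
\begin{itemize}
\item Your route takes the explicit posterior from the Gaussian-mixture lemma. Invertibility of $\mathbf B$ and boundedness of $\overline{\bm\mu}$ give $\|\mathcal D^\star(\bm x)\|\to\infty$, and positivity of the density gives the complement of a ball positive mass.
\item The paper uses the tower property and independent centered noise: $\mathbb E[\mathcal D^\star(\bm x_t)\bm x_t^\top]=\mathbb E[\bm x_0\bm x_t^\top]=\mathbb E[\bm x_0\bm x_0^\top]\succeq\boldsymbol\Sigma\succ\mathbf0$. This is incompatible with $\mathcal D^\star=\bm0$ in $\mathcal H_2$.
\end{itemize}
The paper's route needs neither the posterior formula nor the density. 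It works for any square-integrable data law with $\bm x_0\not\equiv\bm0$. Your route gives extra quantitative information, namely linear growth of $\mathcal D^\star$, but it depends on the mixture structure and on $\boldsymbol\Sigma\succ\mathbf0$.
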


\begin{proof}
Each conditional law $\bm x_t\mid I=k$ is Gaussian with covariance
$\boldsymbol{\Sigma}_{\rm obs}\succ\mathbf0$, so its density is smooth and strictly positive; the same is
true of their positive mixture.

It suffices to prove injectivity for one scalar coordinate.  Let
$f\in L^2(P_{\bm x_t})$ and suppose $\boldsymbol{\mathcal L}_0f=0$.  Then
$\langle f,\boldsymbol{\mathcal L}_0f\rangle_{L^2(P_{\bm x_t})}=0$.  The signed measure
$d\eta(\bm{x}_t)=f(\bm{x}_t)dP_{\bm x_t}(\bm{x}_t)$ has finite total variation by Cauchy--Schwarz.
Use the convention
$\widehat\eta(\bm\omega)=\int e^{-i\bm\omega^\top\bm x}\,d\eta(\bm x)$.
The Gaussian Fourier identity is
\begin{equation*}
  e^{-\|\bm z\|_2^2/(2\ell^2)}
  =c_{d,\ell}\int_{\mathbb R^d}
    e^{i\bm\omega^\top\bm z}e^{-\ell^2\|\bm\omega\|_2^2/2}\,d\bm\omega,
  \qquad c_{d,\ell}=\left(\frac{\ell}{\sqrt{2\pi}}\right)^d.
\end{equation*}
The Gaussian multiplier is integrable and $\eta$ has finite total
variation, so Fubini's theorem applies after integrating this identity
against $d\eta(\bm x)\,d\eta(\bm x')$ with $\bm z=\bm x-\bm x'$.
It gives
\begin{equation*}
  \langle f,\boldsymbol{\mathcal L}_0f\rangle
  =c_{d,\ell}\int_{\mathbb R^d}
  |\widehat\eta(\bm{\omega})|^2
  e^{-\ell^2\|\bm{\omega}\|_2^2/2}\,d\bm{\omega},
\end{equation*}
The Gaussian Fourier multiplier is strictly
positive everywhere, so zero energy implies that $\widehat\eta=0$
Lebesgue-almost everywhere.  The Fourier transform of a finite signed measure
is continuous, hence it vanishes everywhere; uniqueness of Fourier transforms
of finite measures gives $\eta=0$. Since $d\eta=f\,dP_{\bm x_t}$,
$f=0$ $P_{\bm x_t}$-almost surely. Applying this coordinatewise proves
injectivity.

Finally, the tower property and independence of the corruption noise give
\begin{equation*}
  \mathbb E[\mathcal D^\star(\bm{x}_t)\bm{x}_t^\top]
  =\mathbb E[\bm x_0\bm x_t^\top]
  =\mathbb E[\bm x_0\bm x_0^\top]\succeq\boldsymbol{\Sigma}\succ\mathbf0.
\end{equation*}
Thus $\mathcal D^\star$ cannot vanish in $\mathcal H_2$.
Injectivity gives $\ker\boldsymbol{\mathcal L}=\{\bm0\}$.
For every $T>0$, spectral calculus implies
\begin{equation*}
  \ker\mathbf H_T
  =\ker(\mathbf I-e^{-T\boldsymbol{\mathcal L}})
  =\ker\boldsymbol{\mathcal L}
  =\{\bm0\}.
\end{equation*}
Hence $\mathcal D_{\mathrm{base}}
=\mathbf H_T\mathcal D^\star\neq\bm0$ in $\mathcal H_2$,
verifying the nonzero-predictor condition of
\Cref{thm:rkhs-swift}.
\end{proof}

The Gaussian kernel is injective, so every nonzero posterior mean has
learnable signal. The theorem therefore does not require a deliberately
underpowered function class.

\subsection{Reduced-rank and spectrally nested guides}
\label{app:reduced-rank-guide}

A narrower guide can be modeled by retaining a subset of the main model's
learning directions. The projection below specifies the spectral nesting
needed for this interpretation.

Let $\mathbf P_g$ be an orthogonal projection on $\mathcal H_2$ satisfying
\begin{equation*}
  \mathbf P_g\boldsymbol{\mathcal L}=\boldsymbol{\mathcal L}\mathbf P_g.
\end{equation*}
For example, $\mathbf P_g$ may retain any subset of the eigenfunctions of
$\boldsymbol{\mathcal L}$; if it retains finitely many, the guide response is finite rank.
For $0<T<\infty$ and $0\leq T_g<\infty$, define
\begin{equation}
  \mathbf H_c=\mathbf I-e^{-T\boldsymbol{\mathcal L}},
  \qquad
  \mathbf H_g=\mathbf P_g(\mathbf I-e^{-T_g\boldsymbol{\mathcal L}}).
  \label{eq:app-reduced-guide-response}
\end{equation}

\begin{corollary}[Reduced-rank guide]
\label{cor:reduced-rank-guide}
The operators in \Cref{eq:app-reduced-guide-response} satisfy
\Cref{eq:app-operator-assumptions}.
If $\mathbf H_c\mathcal D^\star\ne\bm0$, then $A>0$ and $C>0$ in
\Cref{thm:operator-improvement}. Consequently, the \swift{} predictor
has smaller posterior-mean risk than the baseline exactly for
$0<\alpha<2A/C$.
\end{corollary}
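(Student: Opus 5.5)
We verify the hypotheses of \Cref{thm:operator-improvement} and then compute $A$ and $C$ modewise, following the proof of \Cref{thm:rkhs-swift}.

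\emph{Hypotheses.} The projection $\mathbf P_g$ commutes with $\boldsymbol{\mathcal L}$. It therefore commutes with every bounded Borel function of $\boldsymbol{\mathcal L}$, in particular with $e^{-T_g\boldsymbol{\mathcal L}}$ and $e^{-T\boldsymbol{\mathcal L}}$. From this:
\begin{itemize}
\item $\mathbf H_g=\mathbf P_g(\mathbf I-e^{-T_g\boldsymbol{\mathcal L}})$ is a product of two commuting self-adjoint operators, so it is self-adjoint.
\item The factor $\mathbf I-e^{-T_g\boldsymbol{\mathcal L}}$ has spectrum in $[0,1]$, so $\mathbf H_g=\mathbf P_g^{1/2}(\mathbf I-e^{-T_g\boldsymbol{\mathcal L}})\mathbf P_g^{1/2}$ lies between $\mathbf0$ and $\mathbf I$ in the quadratic-form sense. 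Equivalently, it is the restriction of a $[0,1]$-multiplier to the reducing subspace $\operatorname{ran}\mathbf P_g$, extended by zero.
\item $\mathbf H_c$ satisfies the same conditions, as already noted after \Cref{lem:kernel-flow}.
\item $\mathbf H_c$ and $\mathbf H_g$ commute because $\mathbf H_c$ is a function of $\boldsymbol{\mathcal L}$ and $\mathbf H_g$ is a product of $\mathbf P_g$ with another function of $\boldsymbol{\mathcal L}$.
\end{itemize}
Thus \Cref{eq:app-operator-assumptions} holds.

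\emph{Computing $A$ and $C$.} I use the eigenbasis $\{u_j\}$ of $\boldsymbol{\mathcal L}_0$ from \Cref{app:proof-rkhs-theorem}. The eigenbasis may not diagonalize $\mathbf P_g$ when eigenvalues are repeated, so I work eigenspace by eigenspace. Let $E_\kappa$ denote the eigenspace of $\boldsymbol{\mathcal L}$ for eigenvalue $\kappa>0$; it is finite-dimensional by compactness. Commutation gives $\mathbf P_g E_\kappa\subseteq E_\kappa$, and likewise for $\ker\boldsymbol{\mathcal L}$. Write $\bm v_\kappa$ for the component of $\mathcal D^\star$ in $E_\kappa$. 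On $E_\kappa$:
\begin{itemize}
\item $\mathbf H_c$ acts as the scalar $h=1-e^{-T\kappa}\in(0,1)$;
\item $\mathbf I-\mathbf H_g$ acts as $\mathbf I-(1-e^{-T_g\kappa})\mathbf P_g$.
\end{itemize}
Hence the formula for $A$ in \Cref{eq:app-operator-alignment} becomes
\begin{equation*}
  A=\sum_{\kappa>0} h(1-h)\Bigl(\|\bm v_\kappa\|^2-(1-e^{-T_g\kappa})\|\mathbf P_g\bm v_\kappa\|^2\Bigr).
\end{equation*}
Since $\|\mathbf P_g\bm v_\kappa\|\le\|\bm v_\kappa\|$ and $1-e^{-T_g\kappa}<1$, each term is at least $h(1-h)e^{-T_g\kappa}\|\bm v_\kappa\|^2\ge0$. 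It is strictly positive whenever $\bm v_\kappa\neq\bm0$. This step uses $T_g<\infty$, and $T<\infty$ makes $1-h>0$; note that $T_g<T$ is not needed here. The kernel component contributes nothing to $A$ because $\mathbf H_c$ annihilates $\ker\boldsymbol{\mathcal L}$.

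\emph{Conclusion.} The hypothesis $\mathbf H_c\mathcal D^\star\neq\bm0$ means that some $\bm v_\kappa$ with $\kappa>0$ is nonzero, so $A>0$. \Cref{thm:operator-improvement} then gives $C>0$ and the equivalence $R(\alpha)<R(0)\iff 0<\alpha<2A/C$.

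\emph{Main obstacle.} The only delicate point is handling $\mathbf P_g$ without assuming it is diagonal in the chosen eigenbasis. The eigenspace decomposition above, together with the contraction bound $\|\mathbf P_g\bm v\|\le\|\bm v\|$, resolves this.
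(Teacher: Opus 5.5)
Your proposal is correct and follows essentially the paper's argument: your eigenspace-wise bound $\|\bm v_\kappa\|^2-(1-e^{-T_g\kappa})\|\mathbf P_g\bm v_\kappa\|^2\ge e^{-T_g\kappa}\|\bm v_\kappa\|^2$ is the restriction to $E_\kappa$ of the paper's operator inequality $\mathbf I-\mathbf H_g\succeq e^{-T_g\boldsymbol{\mathcal L}}$, which the paper obtains from $(\mathbf I-\mathbf P_g)(\mathbf I-e^{-T_g\boldsymbol{\mathcal L}})\succeq\mathbf 0$. Both routes give the same lower bound $A\ge\langle\mathcal D^\star,e^{-(T+T_g)\boldsymbol{\mathcal L}}(\mathbf I-e^{-T\boldsymbol{\mathcal L}})\mathcal D^\star\rangle>0$; the paper stays at the operator level and so never has to ask whether $\mathbf P_g$ is diagonal in a chosen eigenbasis, while your explicit eigenspace decomposition handles that point correctly.
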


\begin{proof}
Because $\mathbf P_g$ is an orthogonal projection commuting with
$\boldsymbol{\mathcal L}$, $\mathbf H_g$ is a positive self-adjoint
contraction and commutes with $\mathbf H_c$.
Moreover,
\begin{equation*}
  \mathbf I-\mathbf H_g-e^{-T_g\boldsymbol{\mathcal L}}
  =(\mathbf I-\mathbf P_g)(\mathbf I-e^{-T_g\boldsymbol{\mathcal L}})
  \succeq\mathbf0.
\end{equation*}
All factors commute, so the alignment obeys
\begin{equation*}
  A\geq\left\langle\mathcal D^\star,
    e^{-(T+T_g)\boldsymbol{\mathcal L}}
    (\mathbf I-e^{-T\boldsymbol{\mathcal L}})\mathcal D^\star
    \right\rangle>0.
\end{equation*}
The strict inequality follows because the multiplier
$e^{-(T+T_g)\kappa}(1-e^{-T\kappa})$ is positive for every $\kappa>0$,
and $\mathbf H_c\mathcal D^\star\ne\bm0$ ensures a nonzero component
in such a direction. Apply \Cref{thm:operator-improvement}.
\end{proof}

This corollary is the rigorous counterpart of a reduced-width guide only when
the narrower model corresponds to a nested spectral subspace.  Arbitrary
architectural width reduction need not preserve the eigenfunctions and is not
covered without an additional approximation argument.

\Needspace{8\baselineskip}
\subsection{Exact label-affine response for finite-step training}
\label{app:finite-step-affine}

We next separate a second exact statement from the population theorem.  It
applies to finite data and a finite number of optimization steps, provided the
features are fixed.  Let $\bm x_{t,1},\ldots,\bm x_{t,n}$ be training inputs,
let $\boldsymbol{\Phi}\in\mathbb R^{n\times p}$ be a fixed design matrix, and
train the vector-valued linear predictor $\boldsymbol{\Phi}\boldsymbol{\Theta}$
with full-batch gradient descent:
\begin{equation*}
  \boldsymbol{\Theta}_{s+1}
  =\boldsymbol{\Theta}_s-\frac{\gamma_s}{n}\boldsymbol{\Phi}^\top
   (\boldsymbol{\Phi}\boldsymbol{\Theta}_s-\mathbf Y),
  \qquad s=0,\ldots,S-1,
\end{equation*}
where $\mathbf Y\in\mathbb R^{n\times d}$ is the label matrix.
The initialization $\boldsymbol{\Theta}_0$ and step sizes $\gamma_s$ are fixed
independently of the labels and held the same across target choices. No convergence
assumption is made.

\begin{proposition}[Finite-step label affinity]
\label{prop:finite-step-affinity}
For every $S\geq0$, there exist $\mathbf U_S\in\mathbb R^{p\times d}$ and a
linear operator
$\mathbf A_S:\mathbb R^{n\times d}\to\mathbb R^{p\times d}$, both
independent of $\mathbf Y$, such that
\begin{equation}
  \boldsymbol{\Theta}_S=\mathbf U_S+\mathbf A_S(\mathbf Y).
  \label{eq:app-parameter-affinity}
\end{equation}
Consequently, predictions under any fixed bounded linear evaluation map into a
Hilbert prediction space are affine functions of the labels.
\end{proposition}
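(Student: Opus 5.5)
We will argue by induction on $S$, exploiting that one gradient step is an affine map of $\boldsymbol{\Theta}_s$ whose coefficient does not depend on the labels, plus a term linear in $\mathbf Y$. Rewrite the update as
\begin{equation*}
  \boldsymbol{\Theta}_{s+1}
  =\mathbf M_s\boldsymbol{\Theta}_s+\frac{\gamma_s}{n}\boldsymbol{\Phi}^\top\mathbf Y,
  \qquad
  \mathbf M_s=\mathbf I_p-\frac{\gamma_s}{n}\boldsymbol{\Phi}^\top\boldsymbol{\Phi}.
\end{equation*}
Here $\mathbf M_s\in\mathbb R^{p\times p}$ depends only on the fixed design and the label-independent step size.

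For the base case $S=0$, take $\mathbf U_0=\boldsymbol{\Theta}_0$ and $\mathbf A_0=0$. For the inductive step, assume $\boldsymbol{\Theta}_S=\mathbf U_S+\mathbf A_S(\mathbf Y)$. Substituting into the update gives the recursion
\begin{equation*}
  \mathbf U_{S+1}=\mathbf M_S\mathbf U_S,
  \qquad
  \mathbf A_{S+1}(\mathbf Y)=\mathbf M_S\mathbf A_S(\mathbf Y)+\frac{\gamma_S}{n}\boldsymbol{\Phi}^\top\mathbf Y.
\end{equation*}
Two facts close the induction. First, $\mathbf U_{S+1}$ is independent of $\mathbf Y$. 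Second, $\mathbf A_{S+1}$ is linear, because left multiplication by a fixed matrix preserves linearity and sums of linear maps are linear. Unrolling the recursion yields the closed forms $\mathbf U_S=\mathbf M_{S-1}\cdots\mathbf M_0\boldsymbol{\Theta}_0$ and $\mathbf A_S(\mathbf Y)=\sum_{s<S}\mathbf M_{S-1}\cdots\mathbf M_{s+1}\frac{\gamma_s}{n}\boldsymbol{\Phi}^\top\mathbf Y$. These are worth recording, but the proof does not need them.

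For the consequence, let $\mathbf E:\mathbb R^{p\times d}\to\mathcal H$ be any fixed bounded linear evaluation map into a Hilbert prediction space. For example, $\mathbf E\boldsymbol{\Theta}$ could be the function $\bm x\mapsto\bm\phi(\bm x)^\top\boldsymbol{\Theta}$ in $\mathcal H_2$, or the training-point predictions $\boldsymbol{\Phi}\boldsymbol{\Theta}$. Then $\mathbf E\boldsymbol{\Theta}_S=\mathbf E\mathbf U_S+(\mathbf E\mathbf A_S)(\mathbf Y)$, which is a constant plus a composition of linear maps, and hence affine in $\mathbf Y$. Boundedness is automatic in finite dimensions; it matters only so that the image lies in $\mathcal H$.

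There is no real obstacle. The only point needing care is confirming that nothing in $\mathbf M_s$ or $\gamma_s$ secretly depends on $\mathbf Y$, which is exactly what the hypotheses guarantee. This is why no convergence assumption is needed.
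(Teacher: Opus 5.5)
Your proof is correct and follows the paper's argument: the paper writes the update as $\boldsymbol{\Theta}_{s+1}=\mathbf Q_s\boldsymbol{\Theta}_s+\frac{\gamma_s}{n}\boldsymbol{\Phi}^\top\mathbf Y$, where $\mathbf Q_s=\mathbf I_p-\gamma_s\boldsymbol{\Phi}^\top\boldsymbol{\Phi}/n$ is your $\mathbf M_s$. It then inducts from $\mathbf U_0=\boldsymbol{\Theta}_0$, $\mathbf A_0=\mathbf 0$ with the same recursion, and concludes by noting that linear evaluation preserves affinity.
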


\begin{proof}
Set $\mathbf Q_s=\mathbf I_p-
\gamma_s\boldsymbol{\Phi}^\top\boldsymbol{\Phi}/n$.  Then
\begin{equation*}
  \boldsymbol{\Theta}_{s+1}=\mathbf Q_s\boldsymbol{\Theta}_s
  +\frac{\gamma_s}{n}\boldsymbol{\Phi}^\top\mathbf Y.
\end{equation*}
The assertion holds at $s=0$ with
$\mathbf U_0=\boldsymbol{\Theta}_0$ and $\mathbf A_0=\mathbf0$.  If it holds at
step $s$, it holds at step $s+1$ with
\begin{equation*}
  \mathbf U_{s+1}=\mathbf Q_s\mathbf U_s,
  \qquad
  \mathbf A_{s+1}(\mathbf Y)=\mathbf Q_s\mathbf A_s(\mathbf Y)
  +\frac{\gamma_s}{n}\boldsymbol{\Phi}^\top\mathbf Y.
\end{equation*}
Induction proves \Cref{eq:app-parameter-affinity}.  Evaluation is linear
in $\boldsymbol{\Theta}_S$, so it preserves affinity.
\end{proof}

The same proof covers mini-batch schedules fixed in advance, fixed linear
preconditioners, weight decay, and momentum after augmenting the state.  It
does not cover label-dependent feature learning without further assumptions.
In particular, the fixed-feature response has the label-affine form
assumed in \swiftfullref{Proposition}{prop:finite-sample-variance}.

\subsection{Finite-sample bias--variance extension}
\label{app:finite-sample}

For finite data, target extrapolation also amplifies the conditional
label noise. The resulting risk depends on both residual alignment and
variance inflation. The following result is exact under conditional
centering, which holds in the independent-data example below but is
not automatic for a guide trained on the same clean labels.

Let $\mathcal F$ be a sigma-algebra with respect to which the noisy
training inputs, the response map, and the guide are measurable.
It represents the information held fixed when averaging the remaining
label randomness.
Let $\mathbf M$ be the matrix of stacked posterior means and write the
clean labels as
\begin{equation*}
  \mathbf M
  =\bigl(\mathcal D^\star(\bm x_{t,1}),\ldots,
    \mathcal D^\star(\bm x_{t,n})\bigr)^\top\in\mathbb R^{n\times d},
  \qquad
  \mathbf X_0=\mathbf M+\mathbf E,
  \qquad \mathbb E[\mathbf E\mid\mathcal F]=\mathbf0.
\end{equation*}
For example, the displayed conditional-centering assumption holds for
i.i.d. training pairs when the guide is trained on an independent dataset
and the response map and initialization depend only on the noisy inputs
and independent algorithmic randomness, with $\mathcal F$ generated by
these noisy inputs, auxiliary data, and randomness.
With $\mathbf Y_g$ the stacked guide predictions and
$\boldsymbol{\Delta}_g=\mathbf M-\mathbf Y_g$, the \swift{} labels are
\begin{equation*}
  \mathbf Y_{\alpha}=(1+\alpha)\mathbf X_0-\alpha\mathbf Y_g
  =\mathbf M+\alpha\boldsymbol{\Delta}_g+(1+\alpha)\mathbf E.
\end{equation*}

Let the trained predictor in a Hilbert space $\mathcal Y$ have the exact
label-affine form $\bm a+\boldsymbol{\mathcal G}\mathbf Y$, where
$\bm a$ and the bounded linear operator $\boldsymbol{\mathcal G}$ are
$\mathcal F$-measurable, and $\mathcal Y$ is separable.
Let $\bm d^\star_{\mathrm{ev}}\in\mathcal Y$ be a fixed
(or $\mathcal F$-measurable) posterior-mean target on the evaluation
distribution. Assume $\bm a$, $\boldsymbol{\mathcal G}\mathbf M$,
$\boldsymbol{\mathcal G}\boldsymbol{\Delta}_g$,
$\bm d^\star_{\mathrm{ev}}$, and $\boldsymbol{\mathcal G}\mathbf E$
are square-integrable $\mathcal Y$-valued random variables, and set
\begin{equation*}
  \begin{aligned}
    \bm e&=\bm d^\star_{\mathrm{ev}}
      -(\bm a+\boldsymbol{\mathcal G}\mathbf M),
    &\bm q&=\boldsymbol{\mathcal G}\boldsymbol{\Delta}_g,\\
    \bm{\eta}&=\boldsymbol{\mathcal G}\mathbf E,
    &A&=\langle\bm e,\bm q\rangle,\\
    C&=\|\bm q\|^2,
    &V&=\mathbb E[\|\bm{\eta}\|^2\mid\mathcal F].
  \end{aligned}
\end{equation*}
These integrability assumptions make the conditional expectations
well defined and ensure $V<\infty$ almost surely.
Then $\mathbb E[\bm{\eta}\mid\mathcal F]=\bm0$ and, exactly,
\begin{equation*}
  \widehat{\bm d}_{\alpha}-\bm d^\star_{\mathrm{ev}}
  =-\bm e+\alpha\bm q+(1+\alpha)\bm{\eta}.
\end{equation*}

\begin{proposition}[Finite-sample variance tax]
\label{prop:finite-sample-variance}
Under the preceding conditional label-affine model,
define
\begin{equation*}
  R_{\mathcal F}(\alpha)
  :=\mathbb E[\|\widehat{\bm d}_{\alpha}-\bm d^\star_{\mathrm{ev}}\|^2
  \mid\mathcal F].
\end{equation*}
Then
\begin{align}
  \mathbb E[\|\widehat{\bm d}_{\alpha}-\bm d^\star_{\mathrm{ev}}\|^2\mid\mathcal F]
  &=\|\bm e-\alpha\bm q\|^2+(1+\alpha)^2V,
  \label{eq:app-finite-sample-risk}\\
  R_{\mathcal F}(\alpha)-R_{\mathcal F}(0)
  &=-2(A-V)\alpha+(C+V)\alpha^2.
  \label{eq:app-finite-sample-difference}
\end{align}
If $C+V>0$, the risk-minimizing coefficient over $\alpha\geq0$ is
\begin{equation}
  \alpha_{\rm FS}^\star
  =\left[\frac{A-V}{C+V}\right]_+,
  \qquad [u]_+=\max\{u,0\}.
  \label{eq:app-finite-sample-optimum}
\end{equation}
There exists a strictly improving positive coefficient if and only if $A>V$;
in that case the complete improvement interval is
\begin{equation*}
  0<\alpha<\frac{2(A-V)}{C+V},
\end{equation*}
and the maximal conditional risk reduction is $(A-V)^2/(C+V)$.  If $C+V=0$,
then $\bm q=\bm0$, $\bm{\eta}=\bm0$ almost surely, and the risk is constant in $\alpha$.
\end{proposition}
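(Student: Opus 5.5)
The plan is to expand the conditional squared norm using the exact error decomposition
$\widehat{\bm d}_{\alpha}-\bm d^\star_{\mathrm{ev}}=-\bm e+\alpha\bm q+(1+\alpha)\bm\eta$ stated just before the proposition. The first task is to record that $\bm e$ and $\bm q$ are $\mathcal F$-measurable. This holds because $\bm a$, $\boldsymbol{\mathcal G}$, $\mathbf M$ (a function of the noisy inputs), the guide predictions $\mathbf Y_g$, and $\bm d^\star_{\mathrm{ev}}$ are all $\mathcal F$-measurable. Next I confirm $\mathbb E[\bm\eta\mid\mathcal F]=\boldsymbol{\mathcal G}\,\mathbb E[\mathbf E\mid\mathcal F]=\bm0$. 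This step pulls the $\mathcal F$-measurable bounded operator out of the conditional (Bochner) expectation, which is legitimate by square integrability and separability of $\mathcal Y$.

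Expanding $\|-\bm e+\alpha\bm q+(1+\alpha)\bm\eta\|^2$ gives three kinds of terms. The first is $\|\bm e-\alpha\bm q\|^2$. The second is the cross term $-2(1+\alpha)\langle \bm e-\alpha\bm q,\bm\eta\rangle$. The third is $(1+\alpha)^2\|\bm\eta\|^2$. Conditioning on $\mathcal F$, the cross term vanishes because $\bm e-\alpha\bm q$ is $\mathcal F$-measurable and square integrable. Concretely, $\mathbb E[\langle \bm u,\bm\eta\rangle\mid\mathcal F]=\langle \bm u,\mathbb E[\bm\eta\mid\mathcal F]\rangle=0$, which I would justify by expanding in an orthonormal basis of the separable space $\mathcal Y$ and using dominated convergence. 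This yields \Cref{eq:app-finite-sample-risk}. I expect this measure-theoretic justification to be the only genuinely delicate step, and even it is routine; everything else is algebra.

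For \Cref{eq:app-finite-sample-difference}, I would write $\|\bm e-\alpha\bm q\|^2=\|\bm e\|^2-2A\alpha+C\alpha^2$ and $(1+\alpha)^2V=V+2V\alpha+V\alpha^2$. Subtracting the value at $\alpha=0$, which is $\|\bm e\|^2+V$, leaves $-2(A-V)\alpha+(C+V)\alpha^2$.

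The remaining claims follow by elementary analysis of this quadratic $f(\alpha)$. If $C+V>0$, then $f$ is strictly convex with unconstrained minimizer $(A-V)/(C+V)$. Projecting onto $[0,\infty)$ gives \Cref{eq:app-finite-sample-optimum}. Since $f(\alpha)=\alpha[(C+V)\alpha-2(A-V)]$, we have $f(\alpha)<0$ for some $\alpha>0$ exactly when $A>V$, and the negativity set is precisely $0<\alpha<2(A-V)/(C+V)$. The minimum value, $-(A-V)^2/(C+V)$, gives the maximal reduction.

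If $C+V=0$, then $C=\|\bm q\|^2=0$ forces $\bm q=\bm0$, and $V=0$ forces $\bm\eta=\bm0$ almost surely. Hence $A=\langle\bm e,\bm0\rangle=0$ and $f\equiv0$, so the risk is constant in $\alpha$.
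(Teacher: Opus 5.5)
Your proposal is correct and follows the paper's proof essentially step for step: you use the exact error decomposition, show the conditional cross term vanishes by conditional centering of $\bm\eta$, and analyze the quadratic $-2(A-V)\alpha+(C+V)\alpha^2$ elementarily, including the degenerate case $C+V=0$. The one small difference is how you justify $\mathbb E[\boldsymbol{\mathcal G}\mathbf E\mid\mathcal F]=\boldsymbol{\mathcal G}\,\mathbb E[\mathbf E\mid\mathcal F]$: because $\|\boldsymbol{\mathcal G}\|$ is random rather than uniformly bounded, the paper localizes on the $\mathcal F$-measurable events $\{\|\boldsymbol{\mathcal G}\|\leq m\}$ and then takes their union, and this truncation is the precise form of the step you attribute to square integrability and separability.
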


\begin{proof}
Since $\boldsymbol{\mathcal G}$ is $\mathcal F$-measurable,
conditional centering gives
$\mathbb E[\boldsymbol{\mathcal G}\mathbf E\mid\mathcal F]
=\boldsymbol{\mathcal G}\mathbb E[\mathbf E\mid\mathcal F]=\bm0$.
For a random operator norm this follows first on the
$\mathcal F$-measurable events $\{\|\boldsymbol{\mathcal G}\|\leq m\}$,
then on their union; square integrability ensures the required expectations
exist. Conditionally on $\mathcal F$, $\bm e$ and $\bm q$ are fixed.
Thus the cross term between
$-\bm e+\alpha\bm q$ and
$(1+\alpha)\bm{\eta}$ has conditional expectation zero, proving
\Cref{eq:app-finite-sample-risk}.  Expanding and subtracting the value at
$\alpha=0$ gives \Cref{eq:app-finite-sample-difference}.  If $C+V>0$, completing
the square gives the unconstrained minimizer $(A-V)/(C+V)$ and hence
\Cref{eq:app-finite-sample-optimum}.  The quadratic difference has roots
$0$ and $2(A-V)/(C+V)$ when $A>V$, yielding the interval and maximum reduction.
Finally, $C+V=0$ implies both nonnegative terms vanish.
\end{proof}

Setting $V=0$ gives the same quadratic form as
\Cref{eq:app-operator-risk}, with $A$ and $C$ determined by the
finite-sample response. With finite label noise, a beneficial coefficient
exists only when residual alignment exceeds variance inflation: $A>V$.
The optimizer in \Cref{eq:app-finite-sample-optimum} is conditional on
$\mathcal F$ and may vary with the realized inputs and guide.

\Needspace{10\baselineskip}
\subsection{Scope of the analysis}
\label{app:scope-rkhs}

The population improvement argument requires square-integrable targets,
not Gaussianity. It compares fixed-noise denoising MSE at equal main-model
optimization time; guide preparation and forward passes add compute.
The finite-step result assumes fixed features, and the finite-sample
result additionally requires conditional centering and integrability.
Neural feature learning need not satisfy these conditions, and denoising
MSE alone does not guarantee improved FID or iterative sampling.

\FloatBarrier

\section{Additional Comparisons with Diffusion Guidance Methods}
\label{app:guidance-comparison}
\setcounter{equation}{0}
\renewcommand{\theequation}{B.\arabic{equation}}
\renewcommand{\theHequation}{appendixB.\arabic{equation}}

\Cref{tab:guidance-comparison} compares the six methods discussed in
\Cref{sec:swift-comparison}, including both IG formulations. We compare
their guidance references, auxiliary training components, and sampling
requirements.

\begin{table}[H]
    \centering
    \providecommand{\TableTwoCheck}{\textcolor[HTML]{2F6B4F}{\ensuremath{\checkmark}}}
    \providecommand{\TableTwoCross}{\textcolor[HTML]{B8352E}{\ensuremath{\times}}}
    \providecommand{\swift}{\textnormal{\textsc{Swift}}}
    \caption{\textbf{Training components and sampling.}
    Checks indicate the stated property.
    Unconditional predictions omit the class label; extra heads are intermediate
    outputs with their own denoising loss. One pass means one backbone evaluation
    per sampling prediction, including both IG outputs.
    SGG is its conditional training variant. Optional backbone components and
    additional guidance are excluded.}
    \label{tab:guidance-comparison}
    \small
    \setlength{\tabcolsep}{3.5pt}
    \renewcommand{\arraystretch}{1.16}
    \renewcommand{\tabularxcolumn}[1]{m{#1}}
    \begin{tabularx}{\linewidth}{@{}>{\raggedright\arraybackslash}m{0.15\linewidth} >{\raggedright\arraybackslash}X *{3}{>{\centering\arraybackslash}m{0.14\linewidth}}@{}}
        \toprule
        Method & Guidance reference
        & No unconditional prediction
        & No extra denoising head
        & One forward pass per prediction \\
        \midrule
        CFG~\citep{ho2022classifier} & unconditional prediction (same model) & \TableTwoCross & \TableTwoCheck & \TableTwoCross \\
        AG~\citep{karras2024guiding} & frozen weak model (same conditioning) & \TableTwoCheck & \TableTwoCheck & \TableTwoCross \\
        MG~\citep{tang2025mg} & conditional and unconditional EMA predictions & \TableTwoCross & \TableTwoCheck & \TableTwoCheck \\
        SGG (train)~\citep{yuan2026segmented} & unconditional or intermediate prediction, depending on noise level & \TableTwoCross & \TableTwoCross & \TableTwoCheck \\
        IG (main)~\citep{zhou2026internal} & supervised intermediate prediction & \TableTwoCheck & \TableTwoCross & \TableTwoCheck \\
        IG (training)~\citep{zhou2026internal} & final and intermediate EMA predictions & \TableTwoCheck & \TableTwoCross & \TableTwoCheck \\
        GFT~\citep{chen2025gft} & unconditional prediction (current model) & \TableTwoCross & \TableTwoCheck & \TableTwoCheck \\
        \midrule
        \textbf{\swift{}} & \textbf{pretrained, frozen weak model (same conditioning)} & \TableTwoCheck & \TableTwoCheck & \TableTwoCheck \\
        \bottomrule
    \end{tabularx}
\end{table}

\paragraph{Common notation.}
We write the diffusion objectives in one regression parameterization,
suppressing inputs, expectations, and noise-dependent loss weights.
Let $\bm q$ be the sample-wise label (clean sample, noise, or velocity),
$\bm f$ the final conditional prediction, $\bm u$ the unconditional
prediction, and $\bm h$ a supervised intermediate prediction.
Bars denote EMA predictions and $\operatorname{sg}$ stops gradients.
All outputs and labels in each expression use the same parameterization.

\paragraph{CFG and AG: extrapolation at sampling.}
CFG retains the ordinary denoising target and trains conditional and
unconditional outputs through label dropout; sampling uses
$\bm f+w(\bm f-\bm u)$~\citep[Algorithms~1--2]{ho2022classifier}.
AG also retains ordinary denoising losses, but samples with
$\bm f+\alpha(\bm f-\bm g)$, where $\bm g$ is a separately trained,
less accurate model with the same conditioning~\citep[Sec.~4]{karras2024guiding}.
AG's frozen weak model is a sampling reference, not a target in main training.

\paragraph{MG, SGG, and IG's training variant: prediction-gap targets.}
Their final-output losses have the form
\begin{equation}
 \mathcal L_{\Delta}
 =\|\bm f-[\bm q+\lambda\operatorname{sg}(\bm\Delta)]\|_2^2,
 \qquad
 \begin{array}{@{}l@{\;}l@{}}
 \bm\Delta_{\mathrm{MG}}&{}=\bar{\bm f}-\bar{\bm u},\\
 \bm\Delta_{\mathrm{SGG}}&{}=\begin{cases}
   \bm f-\bm u,&\text{high noise},\\
   \bm f-\bm h,&\text{low noise},
 \end{cases}\\
 \bm\Delta_{\mathrm{IG\text{-}train}}&{}=\bar{\bm f}-\bar{\bm h}.
 \end{array}
 \label{eq:comparison-gap-targets}
\end{equation}
MG uses conditional/unconditional EMA predictions~\citep[Eqs.~17--18]{tang2025mg}.
SGG applies its correction within a chosen time interval, switches
the weak source across noise levels, and supervises
its intermediate branch against $\bm q$~\citep[Sec.~4.3 and App.~D.1]{yuan2026segmented}.
The same paper's training-time AG variant instead uses a separate small
model, updated once per four main-model updates. Its strong-minus-weak
correction therefore evolves during training. IG's training variant
also supervises the intermediate output and uses the EMA gap in
\Cref{eq:comparison-gap-targets}~\citep[Sec.~6, Eq.~7]{zhou2026internal}.
The entire gap, including its strong prediction, is detached in these
target corrections.

\paragraph{IG's main method: auxiliary supervision and shared-pass sampling.}
The main IG objective is
$\|\bm f-\bm q\|_2^2+\gamma\|\bm h-\bm q\|_2^2$.
Both losses update the shared backbone, but the final label remains
$\bm q$. Sampling extrapolates the final and intermediate outputs from
the same backbone pass~\citep[Sec.~3.2, Eqs.~3--5]{zhou2026internal}.
Thus needing two outputs does not imply two full network passes.

\paragraph{GFT: regression through an unconditional mixture.}
GFT samples $\beta$ and conditions the shared model on it; its practical
loss mixes the sampling prediction $\bm f_\beta$ with a detached
unconditional output evaluated at $\beta=1$~\citep[Eq.~9 and Algorithm~1]{chen2025gft}:
\begin{equation}
 \|\beta\bm f_\beta+(1-\beta)\operatorname{sg}(\bm u)-\bm q\|_2^2
 =\beta^2\|\bm f_\beta-[\bm q+\alpha(\bm q-\operatorname{sg}(\bm u))]\|_2^2,
 \quad \alpha=\frac{1-\beta}{\beta},\quad 0<\beta\leq1.
 \label{eq:comparison-gft}
\end{equation}
The unconditional evaluation has no gradient, but its shared parameters
continue to change. This is a target-extrapolation identity for each
update, with a $\beta^2$ loss weight; it does not make the reference frozen.

\paragraph{Where \swift{} differs.}
\swift{} uses $\|\bm f-[\bm q+\alpha(\bm q-\bm g)]\|_2^2$ with
a separately pretrained, frozen, class-matched $\bm g$.
For fixed data and noise, the target is independent of the main
parameters. There is also an update-level overlap with prediction-gap
training: if the strong prediction inside the gap is the current
$\bm f$ and the weak $\bm g$ is fixed, then, for $0\leq w<1$,
\begin{equation}
 \nabla_\theta\|\bm f-\bm q-w\operatorname{sg}(\bm f-\bm g)\|_2^2
 =(1-w)\nabla_\theta\|\bm f-(1+\alpha)\bm q+\alpha\bm g\|_2^2,
 \qquad \alpha=\frac{w}{1-w}.
 \label{eq:comparison-fixed-guide-gradient}
\end{equation}
Together, these identities locate \swift{}'s distinction in its fixed
class-matched training reference and the finite-budget analysis in
\Cref{sec:rkhs-theory,app:rkhs-theory}. The guide is discarded at sampling;
its pretraining cost is included in \Cref{app:compute}.
\Cref{app:mog-experiment} specifies the corresponding Gaussian-mixture
adaptations rather than treating them as the original training recipes.

\Needspace{0.35\textheight}
\section{Distance to the optimal denoiser on a Gaussian mixture}
\label{app:mog-experiment}
\setcounter{equation}{0}
\renewcommand{\theequation}{C.\arabic{equation}}
\renewcommand{\theHequation}{appendixC.\arabic{equation}}

This experiment compares what several finite-budget objectives learn about
the class-A posterior mean
$\mathcal D^\star=\mathbb E_A[\bm x_0\mid\bm x_t,\sigma]$.
Every arm uses the same main backbone and class-A training data.
We score each arm's \emph{implied estimate} of $\mathcal D^\star$, since a
guided network may have a different regression target.

\paragraph{Comparison metric.}
For \swift{}, the conditional-mean training target is
$\bm g_\alpha=\mathcal D^\star+
\alpha(\mathcal D^\star-\mathcal D_{\mathrm{weak}})$
(\swiftfullref{Lemma}{lem:target-decomposition}). The raw network output
therefore targets $\bm g_\alpha$ rather than $\mathcal D^\star$.
We invert that relation to obtain
$\hat{\mathcal D}=(\mathcal D_\theta+\alpha\,\mathcal D_{\mathrm{weak}})/(1+\alpha)$,
which equals $\mathcal D^\star$ when $\mathcal D_\theta=\bm g_\alpha$.
\Cref{tab:mog-arms} gives the corresponding conversion for each arm.
Conversions use only trained predictions; the known posterior mean is
used for scoring. CFG and autoguidance change only sampling, so they are
outside this training-objective comparison.

\begin{table}[htbp]
\centering
\caption{\textbf{Pointwise training losses and evaluated estimates.}
The expectation over training examples and noise is implicit. Here
$\mathcal D_{\mathrm w}$ is the frozen class-A weak guide,
$\mathcal D_{\mathrm c}$ the frozen class-A baseline, and
$\mathcal D_{\mathrm u}$ the frozen A$+$B denoiser;
$\operatorname{sg}$ denotes stop-gradient.}
\label{tab:mog-arms}
\small
\setlength{\tabcolsep}{3.5pt}
\renewcommand{\arraystretch}{1.18}
\begin{tabular}{@{}l p{0.46\linewidth} p{0.34\linewidth}@{}}
\toprule
Arm & Pointwise loss & Estimate of $\mathcal D^\star$ \\
\midrule
baseline & $\|\mathcal D_\theta-\bm x_0\|_2^2$ & $\mathcal D_\theta$ \\
\swift{} & $\|\mathcal D_\theta-(1+\alpha)\bm x_0+\alpha\mathcal D_{\mathrm w}\|_2^2$ & $(\mathcal D_\theta+\alpha\mathcal D_{\mathrm w})/(1+\alpha)$ \\
MG \citep{tang2025mg} & $\|\mathcal D_\theta-\bm x_0-w(\mathcal D_{\mathrm c}-\mathcal D_{\mathrm u})\|_2^2$ & $\mathcal D_\theta-w(\mathcal D_{\mathrm c}-\mathcal D_{\mathrm u})$ \\
GFT \citep{chen2025gft} & $\|\beta\mathcal D_\theta+(1-\beta)\operatorname{sg}[\mathcal D_{\mathrm u}]-\bm x_0\|_2^2$ & $\beta\mathcal D_\theta+(1-\beta)\mathcal D_{\mathrm u}$ \\
IG (aux. loss) \citep{zhou2026internal} & $\|\mathcal D_{\mathrm f}-\bm x_0\|_2^2+\gamma\|\mathcal D_{\mathrm i}-\bm x_0\|_2^2$ & $\mathcal D_{\mathrm f}$ \\
\bottomrule
\end{tabular}
\end{table}

For each method, we report its best setting under the selection rule in
\Cref{tab:mog-results}. The MG and GFT arms use frozen references in place
of the original evolving predictions; GFT trains a separate model for
each setting rather than conditioning a single model on it. IG uses an
output after the first hidden layer and the auxiliary-supervision
objective, rather than IG's target-correction variant in
\Cref{app:guidance-comparison}. These are task-specific adaptations
of the cited objectives, not full reproductions.

\paragraph{Setup.}
We draw eight component means and define class A by the first four
components. Its clean distribution is
$p_A=\frac14\sum_{j=1}^4\mathcal N(\bm\mu_j,s^2\bm I_d)$;
the unconditional distribution A$+$B uses all eight with weight $1/8$.
With $\bm x_t=\bm x_0+\sigma\bm\epsilon$ and
$\bm\epsilon\sim\mathcal N(\bm0,\bm I_d)$, the \emph{class-A} posterior
mean corresponding to this setup follows from
\swiftfullref{Lemma}{lem:mog-posterior}:
\begin{equation}
  \begin{aligned}
  \mathcal D^\star_A(\bm x_t,\sigma)
  &=\sum_{j=1}^{4}r_j^A(\bm x_t)\,
     \frac{s^2\bm x_t+\sigma^2\bm\mu_j}{s^2+\sigma^2},\\
  r_j^A(\bm x_t)
  &=\frac{\mathcal N(\bm x_t;\bm\mu_j,(s^2+\sigma^2)\bm I_d)}
          {\sum_{\ell=1}^{4}
           \mathcal N(\bm x_t;\bm\mu_\ell,(s^2+\sigma^2)\bm I_d)}.
  \end{aligned}
  \label{eq:app-mog-exp-posterior}
\end{equation}
Thus $\mathcal D^\star_A=\mathcal D^\star$ in this experiment. The
eight-component distribution supplies the training data for
$\mathcal D_{\mathrm u}$, not the evaluation target.
At both $d=16$ and $d=8$, the eight means are drawn independently from
$\mathcal N(\bm0,4^2\bm I_d)$ for each seed, with $s=0.5$.
Training samples $\log\sigma\sim\mathcal U(-3,1)$. Every main arm uses a
three-hidden-layer width-64 MLP with input $[\bm x_t,\log\sigma]$, paired
initialization and class-A training stream, and Adam for 20{,}000 steps
(learning rate $2\times10^{-3}$, batch size 512). IG additionally
trains an intermediate head after the first hidden layer.

The class-A weak guide and the A$+$B unconditional reference both regress
$\bm x_0$ for 4{,}000 steps with width-16 MLPs, then remain frozen. MG
also uses the fixed width-64 class-A baseline as $\mathcal D_{\mathrm c}$.
Thus the main backbone and training budget are matched; the auxiliary
predictions used by each arm are specified in \Cref{tab:mog-arms}.

\paragraph{Evaluation.}
For each noise level $\sigma\in\{0.05,0.1,0.2,0.5,1.0\}$ we draw 20{,}000 fresh
samples of class A, noise them, and report the mean Euclidean distance
$\frac1N\sum_i\|\hat{\mathcal D}(\bm x_{t,i})-\mathcal D^\star(\bm x_{t,i})\|_2$.
Each seed redraws the mixture means and retrains every arm: 40 seeds at
$d{=}16$ and 5 at $d{=}8$. The table reports the mean across these
independent mixture-and-training replicates, not the error of an
ensemble predictor. These distances evaluate the implied estimates,
which combine the main prediction with auxiliary predictions where
specified in \Cref{tab:mog-arms}; they are not distances of the standalone
main network or measurements of sampling cost.

\begin{table}[H]
\centering
\caption{\textbf{Distance to the class-A posterior mean.}
For each method and dimension, its best setting is selected by the lowest
arithmetic mean distance across the five noise levels; the whole row uses
that one setting. Means are over independent seeds; all arms share the main backbone.
Lower is better; bold marks the best displayed entry at each noise level.}
\label{tab:mog-results}
\small
\renewcommand{\arraystretch}{1.09}
\begin{tabular*}{0.90\linewidth}{@{\extracolsep{\fill}}l rrrrr@{}}
\toprule
\multicolumn{6}{@{}l}{$d=16$ (40 seeds)} \\
\midrule
Method & $\sigma=0.05$ & $0.1$ & $0.2$ & $0.5$ & $1.0$ \\
\midrule
baseline & 0.3100 & 0.2600 & 0.2311 & 0.2125 & 0.2272 \\
\swift{} & \textbf{0.2566} & \textbf{0.2175} & \textbf{0.1952} & \textbf{0.1865} & \textbf{0.1988} \\
MG & 0.3179 & 0.2685 & 0.2388 & 0.2216 & 0.2358 \\
GFT & 0.2617 & 0.2203 & 0.1972 & 0.1904 & 0.2075 \\
IG & 0.3418 & 0.2885 & 0.2522 & 0.2258 & 0.2312 \\
\midrule
\multicolumn{6}{@{}l}{$d=8$ (5 seeds)} \\
\midrule
Method & $\sigma=0.05$ & $0.1$ & $0.2$ & $0.5$ & $1.0$ \\
\midrule
baseline & 0.1853 & 0.1574 & 0.1378 & 0.1154 & 0.1232 \\
\swift{} & \textbf{0.1450} & \textbf{0.1202} & \textbf{0.1120} & \textbf{0.1018} & \textbf{0.1189} \\
MG & 0.1913 & 0.1659 & 0.1491 & 0.1246 & 0.1356 \\
GFT & 0.1544 & 0.1264 & 0.1195 & 0.1095 & 0.1257 \\
IG & 0.1976 & 0.1657 & 0.1468 & 0.1301 & 0.1330 \\
\bottomrule
\end{tabular*}
\end{table}

\paragraph{Results.}
\Cref{tab:mog-results} reports the mean distances. At $d{=}16$,
\swift{} is lowest at all five noise levels, reducing the baseline
distance by $12.2$--$17.2\%$. GFT is the closest comparator; its distance
exceeds \swift{}'s by $0.0020$--$0.0087$. At $d{=}8$, the selected
\swift{} row is lowest among the displayed methods at all five noise
levels. MG and IG have larger errors than the baseline in both displayed comparisons.
The small GFT margins and five-seed $d{=}8$ comparison should be read
as outcomes of this matched setup, not a ranking of the original
MG, GFT, and IG implementations.

\section{Experimental Setup}
\label{app:setup}

We specify training (\Cref{app:config}), evaluation (\Cref{app:eval}),
and compute accounting (\Cref{app:compute}). In matched comparisons,
\swift{} changes only the training target, leaving the model's
architecture, optimizer, schedule, sampler, and evaluator fixed.
In the default frozen-guide recipe, the guide receives the same noisy image (or
latent), noise level, and class condition as the main denoiser. Its
prediction is evaluated without gradients and used only to construct
the target in \Cref{eq:swift-target}; the original noise distribution,
loss weighting, and optimizer update are retained. The frozen guide
is not an inference-time component.

\subsection{Training Configurations}
\label{app:config}

\paragraph{CIFAR-10.}
We use the EDM configuration of \citet{karras2022elucidating}: a
class-conditional DDPM++ model at
$32\times32$ resolution with EDM preconditioning and loss weighting,
batch size 512, fp32 training, and a $200{,}000$-kimg horizon
($\mathrm{kimg}=10^3$ training images).
The main denoiser uses channel base 128 (\texttt{cb128}).
Guide capacity is specified by channel base
(\texttt{cb16}--\texttt{cb128}), with \texttt{cb64} denoting the
default half-width guide.
NCSN++ and ADM-style guide variants use the corresponding channel
bases; their full results are reported in \Cref{app:cifar-width}.

The default guide is a separately trained, class-conditional DDPM++
\texttt{cb64} model frozen at $200{,}000$ kimg. Unless a sweep changes it,
$\alpha$ is a fixed scalar. The $\alpha=0$ arm (\texttt{D0}) is the
ordinary-training reference at the same main-model budget and evaluation
endpoint; it is not a new seed-matched baseline trained for every exploratory
setting. \Cref{app:cifar} and the accompanying \texttt{DATA/} ledger
distinguish repeated runs, guide variants, and incomplete or extended runs.

\paragraph{ImageNet-256, DiffusionBench.}
Our main DiffusionBench model~\citep{diffusionbench2026} uses a
representation\hyp{}autoencoder tokenizer~\citep{zheng2025rae} over
frozen DINOv2-B features~\citep{oquab2024dinov2} and a
615M-parameter decoupled diffusion transformer~\citep{wang2025ddt}.
The breadth study keeps the 615M-parameter training recipe while
varying the tokenizer or using a pixel-space model
(\Cref{app:imagenet-all}).
The model is trained class-conditionally with global batch size 1024 for
80 epochs, corresponding to 1251 optimizer steps per epoch, and is
evaluated every five epochs.

Unless stated otherwise, the 18 DiffusionBench configurations use the
same \swift{} recipe: $\alpha=0.4$ and a separate guide for the corresponding
model, pretrained for half the main model's standard training duration
and then frozen. Each half-width guide
has about 155M parameters and approximately one quarter of the main
denoiser's forward-pass FLOPs. The short-guide ablation freezes the
same architecture at an earlier checkpoint.
CFG is used only at sampling time and is tuned independently for the
baseline and \swift{}.

The ImageNet backends use linear flow matching,
$\bm x_t=(1-t)\bm x_0+t\bm\epsilon$ and
$\bm v=\bm\epsilon-\bm x_0$, with
$\bm\epsilon\sim\mathcal N(\bm0,\mathbf I)$, rather than the additive-noise
parameterization of \Cref{sec:swift-preliminaries}.
Most DiffusionBench configurations predict velocity.
At fixed noisy input and time, velocity is affine in
the clean prediction. The implementation constructs the main loss target in
velocity space:
\begin{equation*}
    \bm v_{\swift}
    =
    (1+\alpha)\bm v
    -
    \alpha \bm v_{\mathrm g},
\end{equation*}
This is equivalent to extrapolating the clean-sample target and then
converting it to velocity (apart from numerical clamping near a time
endpoint). For clean-sample-prediction configurations, including RAEv2,
the guide output $\widehat{\bm x}_{\mathrm g}$ is first converted to
$\bm v_{\mathrm g}=(\bm x_t-\widehat{\bm x}_{\mathrm g})/
\max(t,\varepsilon)$; the model likewise converts the main output
before computing its velocity-space loss. The guide is detached, and
RAEv2's internal-guidance auxiliary head retains its ordinary target.

The main DiffusionBench sweep is trained on $4\times$B200 GPUs.
Longer-training experiments use the same optimization configuration and
are run on $8\times$H100 GPUs where noted.

\paragraph{ImageNet-256, RAEv2.}
We use the released 875M-parameter, class-conditional DINOv3-L
configuration of
\citet{singh2026raev2} (\texttt{dinov3l-k7}) together with its native
internal-guidance implementation.
The \texttt{k7} encoder sums layers $11,13,15,17,19,21,23$. The main
model predicts clean latents and trains for up to 160 epochs (global
batch 1024, bf16, $8\times$B200 GPUs). The matched baseline keeps the
model's gMuon schedule (learning rate $2\times10^{-4}$, 25-epoch
warmup, decay to $2\times10^{-5}$ by epoch 50), latent preprocessing,
and EMA decay 0.9995.
The \swift{} guide is the model's \texttt{GUIDE-w05} weak-model
configuration, loaded from a pretrained checkpoint and frozen before
main-model training. Unless stated otherwise, we use $\alpha=0.4$.

The guide components use (width, depth, heads) $=(720,28,10)$ and
$(1024,2,8)$; \texttt{w05} denotes approximately half the model's
parameters, not half of every hidden width.
All other architectural and optimization settings follow the released
RAEv2 configuration.

\subsection{Sampling and Evaluation Settings}
\label{app:eval}

\Cref{tab:settings} summarizes our evaluation settings. Reported
metrics use 50,000 generated samples per checkpoint. Published baselines,
where used, retain the metrics and evaluation procedure reported by
their authors rather than being described as our own re-evaluations.

\begin{table}[H]
\centering
\caption{
\textbf{Sampling and evaluation settings used for our runs.}
Within a re-evaluated pair, the baseline and \swift{} use the same
sampler, generated-sample count, and metric implementation. CIFAR-10's
18 Heun steps use 35 denoiser evaluations; inference guidance may add
network passes to the other models' stated solver steps.
}
\label{tab:settings}
\footnotesize
\setlength{\tabcolsep}{4pt}
\renewcommand{\arraystretch}{1.08}
\begin{tabular}{
@{}
>{\raggedright\arraybackslash}p{2.5cm}
>{\raggedright\arraybackslash}p{3.1cm}
>{\raggedright\arraybackslash}p{3.5cm}
>{\raggedright\arraybackslash}p{3.9cm}
@{}}
\toprule
& CIFAR-10 (EDM)
& ImageNet-256 (DiffusionBench)
& ImageNet-256 (RAEv2) \\
\midrule

resolution / space
& $32\times32$, pixels
& $256\times256$, latents or pixels
& $256\times256$, DINOv3-L latents \\

sampler
& EDM deterministic (Heun)~\citep{karras2022elucidating}
& Euler~\citep{diffusionbench2026}
& Euler~\citep[App.~A, Table~11]{singh2026raev2} \\

solver steps
& 18
& 50
& 50 \\

guidance
& none
& CFG, tuned independently
& internal guidance, tuned independently \\

generated samples
& 50k
& 50k
& 50k \\

reference statistics
& CIFAR-10 training set
& ImageNet-256 training set
& ImageNet-256 training set \\

metrics
& FID
& FID, sFID, IS, precision, recall, FD$_r$, MIND
& FID, sFID, IS, precision, recall, FD$_r$, MIND \\

\bottomrule
\end{tabular}
\end{table}

\paragraph{Guidance at sampling time.}
RAEv2's native internal guidance obtains its weak prediction from the
first 8 of the backbone's 28 blocks
(\texttt{base\_model\_depth=8}), rather than from the separately
pretrained \swift{} guide.

For unguided \swift{} evaluation, the training guide is removed entirely.
The evaluation script sets \texttt{self\_guide.k=0} and clears the guide
list before checkpoint loading; the training guide is therefore absent
from both unguided and natively guided sampling.
For guided re-evaluations, CFG or internal-guidance strength is swept
separately for each model; we report the best measured FID and compute
other metrics on the samples from that setting. A published
guided baseline is identified as such, without claiming that we retuned
its scale (\Cref{fig:guidance}).

\paragraph{Checkpoint selection.}
DiffusionBench breadth results compare an 80-epoch \swift{} model with
an 80-epoch baseline or source-reported baseline; the longer DINOv2-B
pair shares an evaluation schedule. RAEv2 has a matched epoch-160
comparison, while its headline $1.15$ unguided and $0.99$ guided
\swift{} values select the epoch-140 checkpoint within the 160-epoch
budget (\Cref{app:raev2}), not a same-epoch causal comparison.

\paragraph{Metric computation.}
FID uses the training-set reference statistics of the corresponding
dataset~\citep{heusel2017gans}. ImageNet-256 sFID, Inception Score~\citep{salimans2016improved},
precision, recall, FD$_r$, and MIND use the same generated set as FID.
FD$_r$ and MIND cover Inception, ConvNeXt, DINOv2, MAE, and SigLIP;
\Cref{tab:representative} averages the five spaces. Re-evaluated pairs
share metric implementations and reference features. RAEv2's FID uses
the standard ImageNet statistics; its additional representation-space
comparisons use our ImageNet training-set feature bank.

\subsection{Compute Accounting}
\label{app:compute}

We separate main-model training from the one-time guide pretraining,
which can serve multiple \swift{} runs.

\paragraph{Wall-clock measurements.}
On $4\times$B200 GPUs, 80 DiffusionBench baseline epochs take about
44 hours versus 49 hours for the \swift{} main run; guide pretraining
takes another 12 hours once. We use FLOPs for hardware-independent
comparison.

\paragraph{Per-step FLOPs.}
Analytical forward costs are 337.5 GFLOPs per sample for the main
denoiser and 87.3 GFLOPs for its half-width guide, a ratio of 0.259.
A baseline update uses a main-model forward pass and a backward pass
estimated at twice that cost. \swift{} adds one frozen-guide forward
pass without guide backpropagation. This raises main-training FLOPs by
about $9\%$ per update. A PyTorch FLOP counter agrees with the
full-update estimate to within $0.5\%$.

\paragraph{Guide preparation.}
For the same number of samples per epoch and the same backward-to-forward
ratio, pretraining the guide for half the standard main-model duration costs about 10.4
baseline-equivalent epochs. Charged to a single 80-epoch \swift{} run,
the guide and main training together cost about 98 baseline-equivalent
epochs, or 1.22 times an 80-epoch baseline. Once the guide is available,
each additional main run incurs only the per-update increase of about
$9\%$. These model-only counts exclude the frozen representation
encoder, which is shared by both methods.

\paragraph{RAEv2 guide cost.}
The RAEv2 guide costs about $37\%$ of a main-model forward pass, giving
about $12\%$ extra FLOPs per main-training update. Its
pretraining costs about 15 baseline-equivalent epochs. Charging that
once to a 50-epoch \swift{} model gives a total of about 71
baseline-equivalent epochs, or 0.44 times the cost of a 160-epoch
baseline under the same assumptions.

\clearpage
\section{ImageNet-256: complete results}
\label{app:imagenet-full}
\label{app:breadth}

This appendix provides the complete ImageNet-256 results. We first report
the full 19-configuration comparison with and without inference-time guidance
(\Cref{app:imagenet-all}), followed by the published results used in
\Cref{fig:overview} (\Cref{app:published}), a detailed analysis of the
875M-parameter RAEv2 model (\Cref{app:raev2}), training trajectories
(\Cref{app:imagenet-traj}). Training configurations, evaluation settings,
and compute accounting are given in \Cref{app:config,app:eval,app:compute}.

\subsection{All configurations}
\label{app:imagenet-all}

The 19 configurations consist of 18 DiffusionBench configurations
(615M parameters, 80 epochs; six representation autoencoders, five VAEs,
four end-to-end-tuned autoencoders, and three pixel-space models) together
with the larger RAEv2 model. Across all configurations, \swift{} uses the
same default coefficient, $\alpha{=}0.4$, with a smaller pretrained guide
frozen before main-model training; no coefficient is tuned per
configuration. Each baseline is either a reference run trained with the same
schedule or a released checkpoint. Our DiffusionBench
runs and available matched baselines are evaluated with 50k samples
under one pipeline; published values remain as reported.
RAEv2 uses its native evaluator.

\paragraph{Without guidance.}
\swift{} lowers FID in all 18 DiffusionBench configurations
(\Cref{tab:main}), by $16.3$--$49.4\%$ (median $31.3\%$), and improves
RAEv2 from $1.56$ to $1.15$. Across all 19 configurations, 269 of the 285
reported metric comparisons favor \swift{}, 7 are ties at the displayed
precision, and 9 favor the baseline. The exceptions are limited to four
precision entries, two Inception Scores, one recall entry, and one
ConvNeXt-based FD$_r$/MIND pair.

\paragraph{With guidance.}
Guidance is tuned separately where both checkpoints are available;
source-reported guided baselines retain their reported settings
(\Cref{tab:main-cfg}). \swift{} has the lower guided FID in all 19
configurations: by $0.10$--$1.31$ ($6.7$--$33.0\%$, median $21.9\%$)
across DiffusionBench with CFG, and from $1.04$ to $0.99$ on RAEv2 with
its native internal guidance. Of the 270 DiffusionBench metric
comparisons, 252 favor \swift{}, 14 favor the baseline, and 4 are ties at
the reported precision. \Cref{fig:guidance} illustrates six representative
DiffusionBench cases and also shows why the guidance scale must be tuned
separately for the two models.

\begin{table*}[htbp]
\centering
\caption{\textbf{All 19 ImageNet-256 configurations without guidance.}
\swift{} uses the same default coefficient ($\alpha{=}0.4$) with a
smaller pretrained guide frozen before main-model training, without
per-configuration coefficient tuning. FD$_r$ and MIND are reported in five
feature spaces; the lower FID of each pair is bold.}
\label{tab:main}
\footnotesize
\setlength{\tabcolsep}{1.5pt}
\renewcommand{\arraystretch}{1.10}
\resizebox{\textwidth}{!}{%
\begin{tabular}{@{}l rrrrr rrrrr rrrrr@{}}
\toprule
& & & & & & \multicolumn{5}{c}{FD$_r$ $\downarrow$} & \multicolumn{5}{c}{MIND $\downarrow$} \\
\cmidrule(lr){7-11}\cmidrule(l){12-16}
Configuration (baseline) & FID$\downarrow$ & sFID$\downarrow$ & IS$\uparrow$ & Prec.$\uparrow$ & Rec.$\uparrow$ & \rotatebox{90}{Inception} & \rotatebox{90}{ConvNeXt} & \rotatebox{90}{DINOv2} & \rotatebox{90}{MAE} & \rotatebox{90}{SigLIP} & \rotatebox{90}{Inception} & \rotatebox{90}{ConvNeXt} & \rotatebox{90}{DINOv2} & \rotatebox{90}{MAE} & \rotatebox{90}{SigLIP} \\
\midrule
\multicolumn{16}{@{}l}{\textbf{Representation autoencoders (RAE); 615M, 80 epochs}} \\
DINOv2-B & 2.09 & 4.48 & 210.6 & 0.83 & 0.57 & 1.32 & 2.34 & 3.19 & 6.25 & 7.57 & 2.45 & 70.70 & 26.72 & 0.44 & 6.41 \\
\quad $+$ \swift{} & \textbf{1.75} & 4.26 & 202.8 & 0.81 & 0.61 & 1.09 & 2.43 & 2.87 & 5.18 & 6.89 & 1.55 & 76.44 & 23.60 & 0.35 & 5.76 \\
DINOv2-B$+$reg & 2.07 & 4.55 & 208.2 & 0.82 & 0.58 & 1.29 & 2.42 & 3.23 & 6.33 & 7.67 & 2.27 & 77.41 & 27.21 & 0.44 & 6.21 \\
\quad $+$ \swift{} & \textbf{1.71} & 4.18 & 205.2 & 0.81 & 0.61 & 1.09 & 2.40 & 2.90 & 5.29 & 6.75 & 1.56 & 74.47 & 23.36 & 0.35 & 5.79 \\
LangPE & 2.73 & 4.43 & 183.2 & 0.80 & 0.60 & 1.62 & 3.41 & 6.57 & 9.00 & 10.10 & 2.55 & 135.04 & 63.77 & 0.69 & 8.79 \\
\quad $+$ \swift{} & \textbf{2.27} & 4.01 & 184.6 & 0.79 & 0.61 & 1.36 & 3.32 & 5.71 & 7.60 & 9.29 & 1.77 & 126.68 & 55.29 & 0.56 & 8.40 \\
PE & 3.11 & 6.15 & 205.1 & 0.81 & 0.56 & 1.88 & 3.12 & 6.12 & 11.01 & 9.83 & 3.20 & 97.46 & 58.28 & 0.88 & 8.07 \\
\quad $+$ \swift{} & \textbf{2.34} & 5.02 & 209.6 & 0.81 & 0.60 & 1.43 & 2.71 & 4.76 & 9.18 & 7.74 & 2.47 & 86.73 & 44.90 & 0.70 & 6.39 \\
SigLIP2 & 3.47 & 7.02 & 180.5 & 0.79 & 0.58 & 2.09 & 4.13 & 8.61 & 11.01 & 11.66 & 2.45 & 158.79 & 86.24 & 0.90 & 10.18 \\
\quad $+$ \swift{} & \textbf{2.60} & 5.39 & 188.3 & 0.79 & 0.61 & 1.59 & 3.55 & 7.00 & 9.09 & 9.60 & 1.89 & 132.62 & 70.05 & 0.72 & 8.35 \\
SpatialPE & 3.60 & 5.20 & 160.1 & 0.78 & 0.60 & 2.20 & 4.21 & 7.19 & 6.95 & 10.73 & 2.23 & 210.99 & 68.81 & 0.54 & 9.14 \\
\quad $+$ \swift{} & \textbf{2.92} & 4.47 & 165.8 & 0.78 & 0.62 & 1.78 & 3.70 & 6.07 & 5.60 & 9.27 & 1.60 & 174.36 & 57.57 & 0.43 & 8.02 \\
\addlinespace[2pt]
\multicolumn{16}{@{}l}{\textbf{VAEs;  615M, 80 epochs}} \\
SD-VAE (EMA)$+$reg & 5.49 & 5.52 & 161.3 & 0.71 & 0.66 & 3.21 & 5.26 & 11.51 & 9.67 & 25.77 & 3.52 & 257.98 & 148.15 & 0.87 & 28.66 \\
\quad $+$ \swift{} & \textbf{3.77} & 5.06 & 179.1 & 0.73 & 0.66 & 2.19 & 3.85 & 8.58 & 7.56 & 20.01 & 2.00 & 157.58 & 107.03 & 0.68 & 21.85 \\
SD-VAE (MSE) & 10.06 & 5.79 & 110.9 & 0.67 & 0.65 & 5.92 & 10.48 & 17.97 & 12.24 & 38.65 & 9.19 & 686.07 & 244.94 & 1.09 & 44.55 \\
\quad $+$ \swift{} & \textbf{6.51} & 5.26 & 137.1 & 0.70 & 0.66 & 3.82 & 7.02 & 12.97 & 9.77 & 30.52 & 4.94 & 388.49 & 170.63 & 0.87 & 35.39 \\
SD-VAE (EMA) & 10.21 & 5.56 & 112.4 & 0.67 & 0.66 & 6.01 & 10.01 & 18.12 & 11.12 & 35.26 & 9.69 & 685.61 & 244.65 & 0.99 & 38.47 \\
\quad $+$ \swift{} & \textbf{6.73} & 5.05 & 138.9 & 0.70 & 0.67 & 3.87 & 6.65 & 13.03 & 8.91 & 27.75 & 5.21 & 388.35 & 171.18 & 0.79 & 30.49 \\
Qwen-VAE & 10.69 & 8.46 & 109.1 & 0.66 & 0.64 & 6.43 & 13.29 & 19.59 & 13.41 & 41.22 & 9.09 & 806.84 & 266.43 & 1.31 & 43.80 \\
\quad $+$ \swift{} & \textbf{7.18} & 7.68 & 134.8 & 0.69 & 0.65 & 4.13 & 8.88 & 14.24 & 10.72 & 32.64 & 4.91 & 454.12 & 186.14 & 1.05 & 34.72 \\
SDXL-VAE & 12.84 & 7.52 & 104.1 & 0.65 & 0.64 & 7.48 & 12.63 & 21.45 & 13.58 & 40.39 & 12.90 & 862.20 & 299.17 & 1.22 & 43.99 \\
\quad $+$ \swift{} & \textbf{8.38} & 6.83 & 128.7 & 0.68 & 0.65 & 4.84 & 8.39 & 15.50 & 10.91 & 31.80 & 6.98 & 486.07 & 209.20 & 0.97 & 34.91 \\
\addlinespace[2pt]
\multicolumn{16}{@{}l}{\textbf{End-to-end tuned autoencoders (E2E); 615M, 80 epochs}} \\
E2E VA-VAE & 4.14 & 4.57 & 148.9 & 0.74 & 0.65 & 2.55 & 5.80 & 8.58 & 6.34 & 16.22 & 2.13 & 270.48 & 103.26 & 0.50 & 15.36 \\
\quad $+$ \swift{} & \textbf{3.13} & 4.39 & 164.2 & 0.74 & 0.67 & 1.96 & 4.62 & 6.82 & 5.21 & 12.99 & 1.25 & 188.11 & 80.39 & 0.41 & 12.24 \\
E2E Qwen-VAE & 4.94 & 5.23 & 138.9 & 0.72 & 0.66 & 3.07 & 6.75 & 9.60 & 6.48 & 19.94 & 2.82 & 339.60 & 114.11 & 0.53 & 18.94 \\
\quad $+$ \swift{} & \textbf{3.23} & 4.89 & 164.1 & 0.73 & 0.66 & 2.08 & 4.66 & 6.41 & 4.74 & 14.48 & 1.34 & 190.38 & 72.89 & 0.37 & 13.18 \\
E2E SD3.5-VAE & 5.26 & 5.18 & 141.1 & 0.71 & 0.65 & 3.34 & 5.03 & 9.14 & 7.13 & 18.44 & 3.17 & 260.37 & 107.42 & 0.55 & 17.58 \\
\quad $+$ \swift{} & \textbf{3.61} & 4.88 & 160.9 & 0.72 & 0.67 & 2.37 & 3.66 & 6.73 & 5.50 & 13.99 & 1.71 & 160.89 & 74.95 & 0.41 & 13.00 \\
E2E FLUX-VAE & 6.20 & 5.13 & 134.9 & 0.70 & 0.66 & 3.76 & 6.84 & 10.89 & 6.63 & 21.30 & 4.07 & 372.55 & 127.66 & 0.53 & 20.24 \\
\quad $+$ \swift{} & \textbf{4.27} & 4.78 & 153.4 & 0.71 & 0.67 & 2.61 & 4.87 & 7.80 & 5.00 & 15.54 & 2.21 & 220.74 & 87.92 & 0.39 & 14.49 \\
\addlinespace[2pt]
\multicolumn{16}{@{}l}{\textbf{Pixel space; 615M, 80 epochs}} \\
PixelGen & 12.10 & 6.15 & 104.2 & 0.66 & 0.61 & 7.00 & 13.92 & 17.92 & 18.96 & 34.61 & 9.04 & 771.89 & 222.22 & 1.68 & 32.40 \\
\quad $+$ \swift{} & \textbf{8.11} & 5.46 & 126.3 & 0.68 & 0.62 & 4.72 & 9.80 & 13.72 & 15.89 & 27.86 & 5.03 & 481.69 & 166.83 & 1.39 & 26.36 \\
PixNerd & 20.37 & 6.14 & 65.9 & 0.61 & 0.63 & 12.00 & 20.71 & 24.44 & 18.69 & 46.76 & 22.30 & 1513.80 & 335.52 & 1.62 & 47.63 \\
\quad $+$ \swift{} & \textbf{10.31} & 5.38 & 93.3 & 0.70 & 0.61 & 6.14 & 12.20 & 14.33 & 13.80 & 29.92 & 8.50 & 776.86 & 172.29 & 1.23 & 28.80 \\
JiT & 22.36 & 6.98 & 63.3 & 0.58 & 0.63 & 13.20 & 23.38 & 28.66 & 22.69 & 54.69 & 25.50 & 1690.32 & 422.58 & 2.04 & 56.31 \\
\quad $+$ \swift{} & \textbf{12.31} & 6.15 & 98.4 & 0.64 & 0.63 & 7.27 & 13.52 & 18.50 & 18.20 & 38.32 & 11.10 & 805.46 & 254.28 & 1.65 & 39.33 \\
\addlinespace[2pt]
\multicolumn{16}{@{}l}{\textbf{RAEv2, 875M, up to 160 epochs}} \\
DINOv3-L & 1.56 & 4.10 & 233.3 & 0.82 & 0.65 & 1.01 & 2.24 & 2.63 & 4.84 & 6.32 & 1.44 & 70.17 & 22.46 & 0.35 & 5.61 \\
\quad $+$ \swift{} & \textbf{1.15} & 3.60 & 242.2 & 0.81 & 0.67 & 0.81 & 2.05 & 2.24 & 4.17 & 5.41 & 1.11 & 65.39 & 18.56 & 0.29 & 4.82 \\
\bottomrule
\end{tabular}%
}
\end{table*}

\begin{table*}[htbp]
\centering
\caption{\textbf{All 19 ImageNet-256 configurations with guidance.}
Inference-time guidance is tuned separately for re-evaluated pairs;
source-reported guided baselines retain their published settings.
The training recipe is otherwise the same as in
\Cref{tab:main}. FD$_r$ and MIND are reported in five feature spaces; the
lower FID of each pair is bold.}
\label{tab:main-cfg}
\footnotesize
\setlength{\tabcolsep}{1.5pt}
\renewcommand{\arraystretch}{1.10}
\resizebox{\textwidth}{!}{%
\begin{tabular}{@{}l rrrrr rrrrr rrrrr@{}}
\toprule
& & & & & & \multicolumn{5}{c}{FD$_r$ $\downarrow$} & \multicolumn{5}{c}{MIND $\downarrow$} \\
\cmidrule(lr){7-11}\cmidrule(l){12-16}
Configuration (baseline) & FID$\downarrow$ & sFID$\downarrow$ & IS$\uparrow$ & Prec.$\uparrow$ & Rec.$\uparrow$ & \rotatebox{90}{Inception} & \rotatebox{90}{ConvNeXt} & \rotatebox{90}{DINOv2} & \rotatebox{90}{MAE} & \rotatebox{90}{SigLIP} & \rotatebox{90}{Inception} & \rotatebox{90}{ConvNeXt} & \rotatebox{90}{DINOv2} & \rotatebox{90}{MAE} & \rotatebox{90}{SigLIP} \\
\midrule
\multicolumn{16}{@{}l}{\textbf{Representation autoencoders (RAE);  615M, 80 epochs}} \\
DINOv2-B & 1.98 & 4.50 & 223.1 & 0.81 & 0.59 & 1.26 & 2.21 & 3.40 & 6.12 & 7.84 & 1.91 & 66.75 & 27.95 & 0.43 & 6.52 \\
\quad $+$ \swift{} & \textbf{1.59} & 4.30 & 221.6 & 0.80 & 0.62 & 1.00 & 2.27 & 2.78 & 5.14 & 6.65 & 1.15 & 66.61 & 23.23 & 0.33 & 5.70 \\
DINOv2-B$+$reg & 1.84 & 4.55 & 236.2 & 0.82 & 0.58 & 1.16 & 2.16 & 3.24 & 6.37 & 7.96 & 1.91 & 64.08 & 27.03 & 0.43 & 6.01 \\
\quad $+$ \swift{} & \textbf{1.52} & 4.18 & 232.9 & 0.81 & 0.61 & 0.98 & 2.10 & 2.73 & 5.31 & 6.57 & 1.31 & 62.96 & 22.34 & 0.35 & 5.92 \\
LangPE & 2.39 & 4.42 & 201.6 & 0.80 & 0.61 & 1.44 & 2.95 & 6.08 & 9.00 & 9.95 & 2.16 & 104.11 & 62.58 & 0.71 & 8.85 \\
\quad $+$ \swift{} & \textbf{2.06} & 4.02 & 199.0 & 0.79 & 0.62 & 1.26 & 2.84 & 5.44 & 7.54 & 9.44 & 1.40 & 103.39 & 53.36 & 0.55 & 8.82 \\
PE & 2.84 & 5.80 & 221.5 & 0.81 & 0.57 & 1.72 & 2.78 & 5.87 & 10.68 & 9.89 & 2.82 & 89.34 & 56.48 & 0.87 & 7.88 \\
\quad $+$ \swift{} & \textbf{2.14} & 4.73 & 226.4 & 0.80 & 0.62 & 1.32 & 2.44 & 4.61 & 9.05 & 7.78 & 2.20 & 78.90 & 44.01 & 0.71 & 6.44 \\
SigLIP2 & 2.61 & 6.90 & 222.9 & 0.78 & 0.59 & 1.62 & 2.97 & 7.31 & 10.76 & 11.33 & 1.85 & 97.56 & 75.42 & 0.87 & 9.73 \\
\quad $+$ \swift{} & \textbf{2.15} & 5.32 & 219.4 & 0.79 & 0.62 & 1.32 & 2.82 & 6.29 & 8.76 & 9.68 & 1.33 & 92.86 & 66.35 & 0.69 & 8.72 \\
SpatialPE & 1.86 & 4.50 & 247.1 & 0.79 & 0.61 & 1.18 & 1.82 & 4.76 & 6.58 & 8.29 & 1.44 & 54.32 & 46.04 & 0.51 & 6.84 \\
\quad $+$ \swift{} & \textbf{1.51} & 3.87 & 255.9 & 0.79 & 0.63 & 0.97 & 1.60 & 3.87 & 5.44 & 7.71 & 1.04 & 44.44 & 36.64 & 0.40 & 6.52 \\
\addlinespace[2pt]
\multicolumn{16}{@{}l}{\textbf{VAEs;  615M, 80 epochs}} \\
SD-VAE (EMA)$+$reg & 2.32 & 4.70 & 271.5 & 0.80 & 0.60 & 1.32 & 1.73 & 7.42 & 7.89 & 18.58 & 1.90 & 53.68 & 85.97 & 0.71 & 20.32 \\
\quad $+$ \swift{} & \textbf{1.81} & 4.42 & 272.9 & 0.80 & 0.61 & 0.99 & 1.46 & 5.82 & 6.19 & 15.20 & 1.64 & 46.10 & 65.92 & 0.55 & 16.52 \\
SD-VAE (MSE) & 2.56 & 5.50 & 259.7 & 0.70 & 0.65 & 1.43 & 2.14 & 7.60 & 7.93 & 21.85 & 3.09 & 55.50 & 87.48 & 0.69 & 24.63 \\
\quad $+$ \swift{} & \textbf{1.86} & 4.99 & 268.4 & 0.73 & 0.66 & 0.91 & 1.42 & 5.33 & 6.23 & 17.13 & 1.62 & 30.65 & 59.76 & 0.55 & 19.54 \\
SD-VAE (EMA) & 2.43 & 5.40 & 259.6 & 0.69 & 0.65 & 1.39 & 1.94 & 7.70 & 6.97 & 19.19 & 2.87 & 49.03 & 88.22 & 0.62 & 20.47 \\
\quad $+$ \swift{} & \textbf{1.90} & 5.08 & 260.9 & 0.69 & 0.66 & 1.07 & 1.60 & 5.99 & 5.40 & 15.37 & 2.52 & 42.61 & 67.34 & 0.49 & 16.57 \\
Qwen-VAE & 3.01 & 7.70 & 238.9 & 0.68 & 0.65 & 1.83 & 4.59 & 9.84 & 9.27 & 25.20 & 2.75 & 160.49 & 118.34 & 0.91 & 25.93 \\
\quad $+$ \swift{} & \textbf{2.32} & 6.99 & 250.8 & 0.71 & 0.66 & 1.19 & 3.07 & 7.06 & 7.41 & 20.16 & 1.48 & 90.34 & 82.45 & 0.70 & 20.68 \\
SDXL-VAE & 3.07 & 6.90 & 256.0 & 0.67 & 0.65 & 1.71 & 2.76 & 9.28 & 8.41 & 21.84 & 3.32 & 70.24 & 108.68 & 0.74 & 22.53 \\
\quad $+$ \swift{} & \textbf{2.21} & 6.27 & 268.8 & 0.70 & 0.66 & 1.09 & 1.84 & 6.78 & 6.61 & 17.01 & 1.79 & 39.10 & 76.95 & 0.61 & 17.94 \\
\addlinespace[2pt]
\multicolumn{16}{@{}l}{\textbf{End-to-end tuned autoencoders (E2E); 615M, 80 epochs}} \\
E2E VA-VAE & 1.65 & 4.50 & 275.4 & 0.74 & 0.65 & 1.09 & 1.96 & 4.61 & 4.82 & 9.82 & 2.16 & 62.56 & 51.43 & 0.36 & 9.01 \\
\quad $+$ \swift{} & \textbf{1.25} & 4.32 & 289.2 & 0.73 & 0.67 & 0.81 & 1.61 & 3.53 & 3.80 & 7.91 & 1.25 & 43.97 & 40.60 & 0.31 & 7.24 \\
E2E Qwen-VAE & 1.50 & 4.83 & 263.3 & 0.80 & 0.62 & 1.05 & 2.29 & 4.58 & 4.61 & 11.10 & 1.45 & 61.43 & 48.55 & 0.37 & 10.20 \\
\quad $+$ \swift{} & \textbf{1.40} & 4.59 & 296.5 & 0.80 & 0.63 & 0.98 & 1.75 & 3.14 & 3.31 & 7.49 & 1.81 & 78.14 & 30.31 & 0.26 & 6.50 \\
E2E SD3.5-VAE & 1.62 & 5.00 & 265.4 & 0.72 & 0.66 & 1.14 & 1.27 & 4.48 & 5.44 & 11.20 & 1.09 & 42.67 & 47.19 & 0.42 & 10.30 \\
\quad $+$ \swift{} & \textbf{1.31} & 4.81 & 278.7 & 0.71 & 0.68 & 0.88 & 1.03 & 3.47 & 4.47 & 8.92 & 0.65 & 31.27 & 36.05 & 0.33 & 8.13 \\
E2E FLUX-VAE & 1.67 & 5.00 & 266.3 & 0.70 & 0.66 & 1.04 & 1.81 & 5.17 & 4.57 & 11.60 & 1.16 & 51.16 & 54.27 & 0.36 & 10.39 \\
\quad $+$ \swift{} & \textbf{1.25} & 4.66 & 279.6 & 0.71 & 0.67 & 0.74 & 1.33 & 3.56 & 3.48 & 8.84 & 0.63 & 32.49 & 36.09 & 0.27 & 7.90 \\
\addlinespace[2pt]
\multicolumn{16}{@{}l}{\textbf{Pixel space;  615M, 80 epochs}} \\
PixelGen & 3.97 & 5.50 & 247.4 & 0.67 & 0.61 & 2.27 & 4.29 & 7.85 & 13.09 & 17.80 & 3.99 & 137.50 & 86.22 & 1.12 & 15.81 \\
\quad $+$ \swift{} & \textbf{2.66} & 4.88 & 259.8 & 0.69 & 0.63 & 1.53 & 3.07 & 6.07 & 11.06 & 14.28 & 2.20 & 86.24 & 65.50 & 0.93 & 12.93 \\
PixNerd & 4.17 & 5.50 & 213.8 & 0.65 & 0.63 & 2.45 & 4.00 & 8.46 & 11.70 & 20.87 & 4.21 & 103.10 & 86.18 & 0.98 & 18.85 \\
\quad $+$ \swift{} & \textbf{3.38} & 4.95 & 235.2 & 0.68 & 0.62 & 1.96 & 3.28 & 6.82 & 9.54 & 17.16 & 3.38 & 85.25 & 70.80 & 0.79 & 15.36 \\
JiT & 4.08 & 6.00 & 231.2 & 0.60 & 0.63 & 2.38 & 4.60 & 9.65 & 13.59 & 22.44 & 3.93 & 147.18 & 114.34 & 1.23 & 21.53 \\
\quad $+$ \swift{} & \textbf{2.86} & 5.29 & 277.4 & 0.66 & 0.64 & 1.31 & 2.63 & 6.21 & 10.91 & 15.81 & 1.73 & 71.94 & 68.08 & 0.99 & 14.87 \\
\addlinespace[2pt]
\multicolumn{16}{@{}l}{\textbf{RAEv2, 875M, up to 160 epochs}} \\
DINOv3-L & 1.04 & 4.00 & 266.0 & 0.82 & 0.61 & 0.87 & 1.66 & 2.67 & 5.21 & 6.16 & 1.34 & 49.36 & 22.47 & 0.34 & 4.84 \\
\quad $+$ \swift{} & \textbf{0.99} & 3.80 & 264.9 & 0.81 & 0.63 & 0.82 & 1.59 & 2.41 & 4.83 & 5.69 & 1.25 & 46.11 & 20.14 & 0.30 & 4.46 \\
\bottomrule
\end{tabular}%
}
\end{table*}

\begin{figure}[htbp]
\centering
\begin{tikzpicture}
\begin{axis}[width=0.62\linewidth, height=0.36\linewidth,
  tick label style={font=\scriptsize, /pgf/number format/assume math mode=true},
  label style={font=\scriptsize},
  ymajorgrids, grid style={cGrid, line width=0.3pt},
  axis line style={cAxis!65, line width=0.45pt},
  tick style={cAxis!65, line width=0.45pt}, tick align=outside,
  xtick style={line width=0.4pt}, ytick style={line width=0.4pt},
  every axis plot/.append style={line width=0.9pt, mark size=1.25pt,
      mark options={line width=0.5pt}},
  legend style={font=\tiny, draw=none, fill=white, fill opacity=0.9, text opacity=1,
      row sep=-2.5pt, inner xsep=3pt, inner ysep=2pt},
  legend cell align=left, legend image post style={line width=0.9pt},
  xlabel={FID-50k with classifier-free guidance}, ymin=0.3, ymax=6.7, xmode=log, xmin=1.22, xmax=6.4, xtick={1.5,2,3,4,5}, xticklabels={1.5,2,3,4,5}, ytick={1,...,6}, yticklabels={{SigLIP2},{LangPE},{PixNerd},{E2E Qwen},{SD-VAE$+$reg},{DINOv2-B (main)}}, yticklabel style={font=\scriptsize}, legend style={at={(0.5,-0.33)}, anchor=north, legend columns=3, column sep=8pt}]
\addplot[cSeg, line width=1.6pt, -{Latex[length=4pt,width=3.4pt]}, forget plot] coordinates {(1.9837,6) (1.5861,6)};
\node[font=\tiny, cAGT!75!black, anchor=west, inner sep=4pt] at (axis cs:1.9837,6) {$-0.40$};
\addplot[cSeg, line width=1.6pt, -{Latex[length=4pt,width=3.4pt]}, forget plot] coordinates {(2.3163,5) (1.8075,5)};
\node[font=\tiny, cAGT!75!black, anchor=west, inner sep=4pt] at (axis cs:2.3163,5) {$-0.51$};
\addplot[cSeg, line width=1.6pt, -{Latex[length=4pt,width=3.4pt]}, forget plot] coordinates {(1.5040,4) (1.3950,4)};
\node[font=\tiny, cAGT!75!black, anchor=west, inner sep=4pt] at (axis cs:1.5040,4) {$-0.11$};
\addplot[cSeg, line width=1.6pt, -{Latex[length=4pt,width=3.4pt]}, forget plot] coordinates {(4.1732,3) (3.3793,3)};
\node[font=\tiny, cAGT!75!black, anchor=west, inner sep=4pt] at (axis cs:4.1732,3) {$-0.79$};
\addplot[cSeg, line width=1.6pt, -{Latex[length=4pt,width=3.4pt]}, forget plot] coordinates {(2.3876,2) (2.0646,2)};
\node[font=\tiny, cAGT!75!black, anchor=west, inner sep=4pt] at (axis cs:2.3876,2) {$-0.32$};
\addplot[cSeg, line width=1.6pt, -{Latex[length=4pt,width=3.4pt]}, forget plot] coordinates {(2.6062,1) (2.1475,1)};
\node[font=\tiny, cAGT!75!black, anchor=west, inner sep=4pt] at (axis cs:2.6062,1) {$-0.46$};
\addplot[only marks, mark=o, mark size=2.3pt, cBase, line width=0.9pt] coordinates {(1.9837,6) (2.3163,5) (1.5040,4) (2.3876,2)};
\addlegendentry{baseline, its best scale}
\addplot[only marks, mark=diamond, mark size=2.8pt, cBase, line width=0.9pt] coordinates {(4.1732,3) (2.6062,1)};
\addlegendentry{source-reported baseline}
\addplot[only marks, mark=*, mark size=2.3pt, cAGT] coordinates {(1.5861,6) (1.8075,5) (1.3950,4) (3.3793,3) (2.0646,2) (2.1475,1)};
\addlegendentry{\swift{}, its best scale}
\end{axis}
\end{tikzpicture}
\caption{\textbf{Guided FID at each model's best available CFG scale}
for six DiffusionBench configurations from \Cref{tab:main-cfg}. Open
circles denote matched baselines whose guidance scale is swept
independently; open diamonds denote source-reported guided baselines.
Filled dots show \swift{} at its own best scale, and annotations give the
FID change. The preferred scale can shift substantially (e.g.,
$2.2\!\to\!1.8$ for SD-VAE$+$reg and $2.4\!\to\!1.8$ for PixNerd),
motivating separate tuning on the two sides.}
\label{fig:guidance}
\end{figure}

\subsection{Published results in \texorpdfstring{\Cref{fig:overview}}{Figure 1}}
\label{app:published}

\Cref{tab:published} lists the published results plotted in the left panel
of \Cref{fig:overview}. Every number is the one reported by the authors for
their best configuration at $256\times256$; we did not re-evaluate these
models.

\begin{table}[htbp]
\centering
\caption{\textbf{Published ImageNet-256 results shown in \Cref{fig:overview} (left).}
FID-50K values are taken directly from the corresponding papers.
Evaluation procedures differ between papers: some sample 50 images per class,
others sample class labels at random, which can change FID by about $0.1$
\citep[Table~14]{zheng2025rae}. Guidance: CFG, classifier-free guidance;
int., with a guidance interval; AG, autoguidance; IG, internal guidance
\citep{zhou2026internal}; REPA guid., the REPA guidance of
\citet{singh2026raev2}. $^{*}$Without CFG, but still with internal guidance.
Our two RAEv2 rows use the settings of \Cref{app:raev2}.}
\label{tab:published}
\small
\begin{tabular}{@{}llrrl@{}}
\toprule
Model & Type & Base FID & Guided FID & Guidance \\
\midrule
DiT-XL/2 \citep{peebles2023scalable} & latent diffusion & 9.62 & 2.27 & CFG \\
SiT-XL/2 \citep{ma2024sit} & latent diffusion & 8.30 & 2.06 & CFG \\
MDT-XL/2 \citep{gao2023mdt} & latent diffusion & 6.23 & 1.79 & CFG \\
DDT-XL/2 \citep{wang2025ddt} & latent diffusion & 6.27 & 1.26 & CFG, int. \\
REPA \citep{yu2025repa} & latent diffusion & 5.90 & 1.42 & CFG, int. \\
MAR-H \citep{li2024mar} & autoregressive & 2.35 & 1.55 & CFG \\
SFD-XXL \citep{pan2026sfd} & latent diffusion & 2.38 & 1.04 & AG \\
LightningDiT \citep{yao2025lightningdit} & latent diffusion & 2.17 & 1.35 & CFG, int. \\
REG \citep{wu2025reg} & latent diffusion & 1.80 & 1.36 & CFG, int. \\
REPA-E \citep{leng2025repae} & latent diffusion & 1.69 & 1.12 & CFG, int. \\
RAEv2 \citep{singh2026raev2} & latent diffusion & 1.65 & 1.06 & REPA guid. \\
RAE \citep{zheng2025rae} & latent diffusion & 1.51 & 1.13 & AG \\
PAE \citep{yue2026pae} & latent diffusion & 1.43 & 1.03 & CFG \\
BAR-L \citep{yu2026bar} & autoregressive & 1.42 & 0.99 & CFG \\
LightningDiT+IG \citep{zhou2026internal} & latent diffusion & 1.34$^{*}$ & 1.19 & IG, CFG, int. \\
GAE \citep{liu2026gae} & latent diffusion & 1.31 & 1.13 & CFG, int. \\
\midrule
RAEv2, our baseline & latent diffusion & 1.56 & 1.04 & \Cref{app:raev2} \\
\swift{} on RAEv2 (ours) & latent diffusion & \textbf{1.15} & \textbf{0.99} & \Cref{app:raev2} \\
\bottomrule
\end{tabular}
\end{table}

\FloatBarrier

\subsection{RAEv2}
\label{app:raev2}

RAEv2 provides a second, stronger backbone: an 875M-parameter model on
DINOv3-L latents with native internal guidance. \swift{} uses the default
$\alpha{=}0.4$ recipe and a pretrained \texttt{GUIDE-w05} guide frozen before main training
(\Cref{app:config}). The canonical RAEv2 results are those reported in
\Cref{tab:main,tab:main-cfg}: $1.56\!\to\!1.15$ without guidance and
$1.04\!\to\!0.99$ with independently tuned internal guidance.

\paragraph{Training trajectory.}
To examine when the gain appears, we additionally track \swift{} at 17
checkpoints and the $\alpha{=}0$ baseline at epochs 20, 40, 80, 120, and
160 (\Cref{fig:raev2}). The $\alpha{=}0.4$ run is already below the
released epoch-80 checkpoint by epoch 40 and remains on a substantially
lower trajectory thereafter. Its best observed unguided checkpoint occurs
at epoch 140 and yields the $1.15$ value reported in \Cref{tab:main}.
The denser checkpoint trace is used only to study training dynamics; the
headline values remain those in the two complete-result tables.

\paragraph{Interaction with internal guidance.}
RAEv2 uses
\begin{equation*}
    D_s = D + (s-1)(D-D_{\mathrm{REPA}}),
\end{equation*}
so $s=1$ disables guidance. The preferred guidance strength shifts
downward after \swift{} training, while the best guided result improves
from $1.04$ to $0.99$ (\Cref{tab:main-cfg}). This is consistent with the
main denoiser absorbing part of the weak-to-strong correction during
training, while inference-time guidance remains complementary.

\begin{figure}[htbp]
\centering
\begin{tikzpicture}
\begin{axis}[width=0.62\linewidth, height=0.34\linewidth, tick label style={font=\scriptsize, /pgf/number format/assume math mode=true},
  label style={font=\scriptsize}, title style={font=\scriptsize\bfseries, yshift=-3pt},
  ymajorgrids, grid style={cGrid, line width=0.3pt},
  axis line style={cAxis!65, line width=0.45pt},
  tick style={cAxis!65, line width=0.45pt}, tick align=outside,
  xtick style={line width=0.4pt}, ytick style={line width=0.4pt},
  every axis plot/.append style={line width=0.9pt, mark size=1.25pt,
      mark options={line width=0.5pt}},
  legend style={font=\tiny, draw=none, fill=white, fill opacity=0.9, text opacity=1,
      row sep=-2.5pt, inner xsep=3pt, inner ysep=2pt},
  legend cell align=left, legend image post style={line width=0.9pt}, xlabel={training epoch}, ylabel={FID-50k (no guidance)}, ymode=log, ymin=1.1, ymax=4.2, ytick={1.15,1.3,1.5,2,3,4}, yticklabels={1.15,1.3,1.5,2,3,4}, xmin=5, xmax=165, xtick={20,40,...,160}, legend pos=north east]
\addplot[cBase, mark=*] coordinates {(20,2.6706) (40,1.8726) (80,1.6459) (120,1.6201) (160,1.5628)};
\addlegendentry{matched baseline ($\alpha{=}0$)}
\addplot[cAGT, mark=*] coordinates {(10,3.6232) (20,2.2545) (25,2.0771) (30,1.8295) (40,1.5483) (50,1.3408) (60,1.2824) (70,1.2482) (80,1.2760) (90,1.1986) (100,1.2042) (110,1.2052) (120,1.2364) (130,1.2247) (140,1.1479) (150,1.1707) (160,1.2142)};
\addlegendentry{\swift{} $\alpha{=}0.4$}
\addplot[cTwo, mark=*] coordinates {(10,3.6358) (20,2.1088) (30,1.6342) (40,1.4162) (50,1.3723) (60,1.4049) (70,1.3229) (80,1.3506) (90,1.3362) (100,1.3119) (110,1.2541) (120,1.2832) (130,1.2723) (140,1.3016) (150,1.2325) (160,1.3237)};
\addlegendentry{\swift{} $\alpha{=}0.8$}
\addplot[only marks, mark=diamond*, mark size=2.6pt, cAxis] coordinates {(80,1.633)};
\addlegendentry{released checkpoint (ep.\ 80)}
\end{axis}
\end{tikzpicture}
\caption{\textbf{RAEv2 unguided training trajectories.}
The dense \swift{} traces are used to study training dynamics; the
canonical RAEv2 results are reported in \Cref{tab:main,tab:main-cfg}.
The diamond marks the released epoch-80 checkpoint.}
\label{fig:raev2}
\end{figure}

\FloatBarrier

\subsection{Training trajectories and the long schedule}
\label{app:imagenet-traj}

\paragraph{Across configurations.}
For the 12 DiffusionBench configurations whose baseline we also evaluated
during training, \swift{} is ahead at every evaluation from epoch 40 on.
\Cref{fig:four} shows four of them, with baselines spanning roughly an
order of magnitude in FID.

\begin{figure}[H]
\centering
\begin{tikzpicture}
\begin{groupplot}[group style={group size=2 by 2, horizontal sep=34pt, vertical sep=42pt}, width=0.475\linewidth, height=0.27\linewidth,
tick label style={font=\scriptsize, /pgf/number format/assume math mode=true},
  label style={font=\scriptsize}, title style={font=\scriptsize\bfseries, yshift=-3pt},
  ymajorgrids, grid style={cGrid, line width=0.3pt},
  axis line style={cAxis!65, line width=0.45pt},
  tick style={cAxis!65, line width=0.45pt}, tick align=outside,
  xtick style={line width=0.4pt}, ytick style={line width=0.4pt},
  every axis plot/.append style={line width=0.9pt, mark size=1.25pt,
      mark options={line width=0.5pt}},
  legend style={font=\tiny, draw=none, fill=white, fill opacity=0.9, text opacity=1,
      row sep=-2.5pt, inner xsep=3pt, inner ysep=2pt},
  legend cell align=left, legend image post style={line width=0.9pt}]
\nextgroupplot[title={RAE DINOv2$+$reg~\textnormal{\tiny\color{cBase}(representation AE)}}, ymode=log, ylabel={FID-50k}, ymin=1.55, ymax=4.7, ytick={2,3,4}, yticklabels={2,3,4}, xmin=17, xmax=83, xtick={20,40,60,80}, legend style={at={(0.97,0.95)}, anchor=north east}]
\addplot[only marks, mark=*, mark size=0.8pt, cBase!45, forget plot] coordinates {(20,3.6454) (30,3.0304) (40,2.6600) (50,2.4378) (60,2.3053) (70,2.1016) (80,2.0358)};
\addplot[only marks, mark=*, mark size=0.8pt, cAGT!45, forget plot] coordinates {(20,3.5411) (25,3.1687) (30,2.7508) (35,2.4926) (40,2.3169) (45,2.2109) (50,2.1351) (55,2.0292) (60,1.8806) (65,1.8947) (70,1.8417) (75,1.7897) (80,1.7060)};
\addplot[cBase, line width=1.1pt] coordinates {(20,3.6454) (30,3.0304) (40,2.6600) (50,2.4378) (60,2.3053) (70,2.1016) (80,2.0358)};
\addlegendentry{model, $\alpha{=}0$}
\addplot[cAGT, line width=1.1pt] coordinates {(20,3.5411) (25,3.1687) (30,2.7508) (35,2.4926) (40,2.3169) (45,2.2109) (50,2.1351) (55,2.0292) (60,1.8806) (65,1.8947) (70,1.8417) (75,1.7897) (80,1.7060)};
\addlegendentry{\swift{}, one fixed recipe}
\node[font=\tiny, cAGT!80!black, anchor=south west, inner sep=1pt] at (axis cs:19,1.5965) {$2.04\!\to\!1.71$\ \ \textbf{$-16\%$}};
\nextgroupplot[title={SD-VAE (EMA)$+$reg~\textnormal{\tiny\color{cBase}(continuous VAE)}}, ymode=log, ylabel={FID-50k}, ymin=3.20, ymax=13.0, ytick={4,6,8,12}, yticklabels={4,6,8,12}, xmin=17, xmax=83, xtick={20,40,60,80},]
\addplot[only marks, mark=*, mark size=0.8pt, cBase!45, forget plot] coordinates {(20,9.2390) (30,7.0438) (40,6.3118) (50,5.9075) (60,5.6040) (70,5.6218) (80,5.4245)};
\addplot[only marks, mark=*, mark size=0.8pt, cAGT!45, forget plot] coordinates {(20,11.3580) (25,8.2808) (30,6.8428) (35,5.8782) (40,5.3656) (45,4.6906) (50,4.3000) (55,4.2097) (60,3.9757) (65,3.9879) (70,3.8540) (75,3.6444) (80,3.5884)};
\addplot[cBase, line width=1.1pt] coordinates {(20,9.2390) (30,7.0438) (40,6.3118) (50,5.9075) (60,5.6040) (70,5.6218) (80,5.4245)};
\addplot[cAGT, line width=1.1pt] coordinates {(20,11.3580) (25,8.2808) (30,6.8428) (35,5.8782) (40,5.3656) (45,4.6906) (50,4.3000) (55,4.2097) (60,3.9757) (65,3.9879) (70,3.8540) (75,3.6444) (80,3.5884)};
\node[font=\tiny, cAGT!80!black, anchor=south west, inner sep=1pt] at (axis cs:19,3.2960) {$5.42\!\to\!3.59$\ \ \textbf{$-34\%$}};
\nextgroupplot[title={SD-VAE (MSE)~\textnormal{\tiny\color{cBase}(continuous VAE)}}, ymode=log, xlabel={training epoch}, ylabel={FID-50k}, ymin=5.80, ymax=24.0, ytick={6,10,15,22}, yticklabels={6,10,15,22}, xmin=17, xmax=83, xtick={20,40,60,80},]
\addplot[only marks, mark=*, mark size=0.8pt, cBase!45, forget plot] coordinates {(20,18.3302) (40,12.0844) (60,10.4915) (80,9.8606)};
\addplot[only marks, mark=*, mark size=0.8pt, cAGT!45, forget plot] coordinates {(20,21.1688) (25,16.7019) (30,13.5733) (35,11.4043) (40,9.8731) (45,9.2228) (50,8.1594) (55,7.9190) (60,7.4891) (65,7.1391) (70,6.8368) (75,6.6215) (80,6.4639)};
\addplot[cBase, line width=1.1pt] coordinates {(20,18.3302) (40,12.0844) (60,10.4915) (80,9.8606)};
\addplot[cAGT, line width=1.1pt] coordinates {(20,21.1688) (25,16.7019) (30,13.5733) (35,11.4043) (40,9.8731) (45,9.2228) (50,8.1594) (55,7.9190) (60,7.4891) (65,7.1391) (70,6.8368) (75,6.6215) (80,6.4639)};
\node[font=\tiny, cAGT!80!black, anchor=south west, inner sep=1pt] at (axis cs:19,5.9740) {$9.86\!\to\!6.46$\ \ \textbf{$-34\%$}};
\nextgroupplot[title={PixNerd~\textnormal{\tiny\color{cBase}(pixel space)}}, ymode=log, xlabel={training epoch}, ylabel={FID-50k}, ymin=8.50, ymax=46.0, ytick={10,15,25,40}, yticklabels={10,15,25,40}, xmin=17, xmax=83, xtick={20,40,60,80},]
\addplot[only marks, mark=*, mark size=0.8pt, cBase!45, forget plot] coordinates {(20,42.3084) (40,28.0651) (60,21.2054) (80,19.3687)};
\addplot[only marks, mark=*, mark size=0.8pt, cAGT!45, forget plot] coordinates {(20,37.4927) (25,29.2127) (30,22.8707) (35,18.9277) (40,16.2039) (45,14.7951) (50,13.2787) (55,12.3263) (60,11.6798) (65,11.3080) (70,10.4389) (75,9.8514) (80,9.5039)};
\addplot[cBase, line width=1.1pt] coordinates {(20,42.3084) (40,28.0651) (60,21.2054) (80,19.3687)};
\addplot[cAGT, line width=1.1pt] coordinates {(20,37.4927) (25,29.2127) (30,22.8707) (35,18.9277) (40,16.2039) (45,14.7951) (50,13.2787) (55,12.3263) (60,11.6798) (65,11.3080) (70,10.4389) (75,9.8514) (80,9.5039)};
\node[font=\tiny, cAGT!80!black, anchor=south west, inner sep=1pt] at (axis cs:19,8.7550) {$19.37\!\to\!9.50$\ \ \textbf{$-51\%$}};
\end{groupplot}
\end{tikzpicture}
\caption{
\textbf{The same fixed \swift{} recipe improves four different ImageNet-256
training trajectories.}
Each panel shows a different DiffusionBench stage-2 model and representation.
Grey is the matched $\alpha{=}0$ baseline and blue is \swift{} with the same fixed
recipe ($\alpha{=}0.4$, half-width guide pretrained for half the standard
main-model duration and then frozen), with no
per-model tuning.
\swift{} is ahead by epoch 40 in all four cases and remains ahead through the end
of training, across baselines spanning roughly an order of magnitude in FID.
The $y$-axis is logarithmic; faint markers denote individual FID evaluations.
The trajectory baselines reproduce the corresponding DiffusionBench endpoints
within $0.6$--$6\%$; the comparisons in \Cref{tab:main} instead re-evaluate
both sides with 50k samples.
}
\label{fig:four}
\end{figure}

\paragraph{Guide maturity on DINOv2-B.}
In the matched 80-epoch ablation, the default guide is pretrained for half
the standard main-model duration. Its $\alpha{=}0.4$ and $0.8$ arms lag
the baseline at epoch 20 (by $0.11$ and $0.05$ FID), but lead from
epoch 40 onward (by $0.28$--$0.48$). An earlier guide with
$\alpha{=}0.4$ leads throughout (by $0.47$--$0.59$ FID).

\paragraph{Persistence through 240 epochs.}
In a separate, hardware-matched DINOv2-B pair using the fixed
$\alpha{=}0.4$, default frozen-guide recipe, we train both arms to 240 epochs.
Across 36 matched evaluations from epoch 20 onward, \swift{} has
lower FID at every checkpoint. The gap remains $0.26$ FID at epoch 240,
so the gain does not vanish on this longer schedule. \Cref{tab:long}
reports FID values every 20 epochs, and \Cref{fig:long-run}
shows the corresponding training trajectories. A separate $\alpha{=}0$ hardware replicate
is an additional reference only and is excluded from the differences.

\begin{table}[H]
\centering
\caption{\textbf{DINOv2-B over 240 matched training epochs.}
Unguided FID from the training-time evaluator, sampled every 20 epochs.
The two blocks list epochs in order; $\Delta$ is \swift{} minus baseline,
computed before rounding the FIDs displayed here.
\Cref{fig:long-run} plots the same 20-epoch checkpoints.}
\label{tab:long}
\small
\setlength{\tabcolsep}{5pt}
\renewcommand{\arraystretch}{1.08}
\begin{tabular}{@{}rrrr@{\qquad}rrrr@{}}
\toprule
Epoch & Baseline & \swift{} & $\Delta$
& Epoch & Baseline & \swift{} & $\Delta$ \\
\midrule
20 & 3.63 & \textbf{3.52} & $-0.12$ & 140 & 1.81 & \textbf{1.51} & $-0.30$ \\
40 & 2.67 & \textbf{2.32} & $-0.36$ & 160 & 1.80 & \textbf{1.46} & $-0.34$ \\
60 & 2.36 & \textbf{2.00} & $-0.36$ & 180 & 1.75 & \textbf{1.49} & $-0.26$ \\
80 & 2.13 & \textbf{1.77} & $-0.37$ & 200 & 1.72 & \textbf{1.44} & $-0.28$ \\
100 & 1.99 & \textbf{1.60} & $-0.39$ & 220 & 1.67 & \textbf{1.41} & $-0.26$ \\
120 & 1.91 & \textbf{1.50} & $-0.41$ & 240 & 1.63 & \textbf{1.38} & $-0.26$ \\
\bottomrule
\end{tabular}
\end{table}

\begin{figure}[H]
\centering
\begin{tikzpicture}
\begin{axis}[
  width=0.84\linewidth, height=0.35\linewidth,
  tick label style={font=\scriptsize, /pgf/number format/assume math mode=true},
  label style={font=\scriptsize},
  ymajorgrids, grid style={cGrid, line width=0.3pt},
  axis line style={cAxis!65, line width=0.45pt},
  tick style={cAxis!65, line width=0.45pt}, tick align=outside,
  every axis plot/.append style={line width=1pt, mark size=1.25pt},
  xlabel={training epoch}, ylabel={unguided FID-50k},
  xmin=15, xmax=248, xtick={20,60,100,140,180,220,240},
  ymin=1.2, ymax=3.9, ytick={1.5,2,2.5,3,3.5},
  legend style={font=\scriptsize, draw=none, fill=white,
    at={(0.98,0.96)}, anchor=north east}, legend cell align=left]
\addplot[cBase, mark=*] coordinates {
  (20,3.63) (40,2.67) (60,2.36) (80,2.13) (100,1.99) (120,1.91)
  (140,1.81) (160,1.80) (180,1.75) (200,1.72) (220,1.67) (240,1.63)};
\addlegendentry{matched baseline}
\addplot[cAGT, mark=*] coordinates {
  (20,3.52) (40,2.32) (60,2.00) (80,1.77) (100,1.60) (120,1.50)
  (140,1.51) (160,1.46) (180,1.49) (200,1.44) (220,1.41) (240,1.38)};
\addlegendentry{\swift{}}
\end{axis}
\end{tikzpicture}
\caption{\textbf{DINOv2-B training trajectories through epoch 240.}
The matched unguided FID curves are plotted every 20 epochs, using the
rounded values in \Cref{tab:long}. \swift{} remains below the baseline
throughout the displayed schedule.}
\label{fig:long-run}
\end{figure}

\FloatBarrier

\clearpage
\section{Additional CIFAR-10 experiments}
\label{app:cifar}
\label{app:cifar-full}
\label{app:campaign}
\label{app:cifar-width}

\paragraph{Experimental setting.}
Beyond the coefficient, guide-width, and guide-maturity ablations in
\Cref{sec:selection}, we tested the additional methods below.
We use class-conditional CIFAR-10 at $32\times32$, a DDPM$++$ student
with channel base 128, batch size 512, and 200k kimg of training.
Unless varied by the method, the guide is a separately trained, frozen
DDPM$++$ model with channel base 64 at 200k.
Evaluation uses FID-50k with the 18-step EDM sampler.

\begin{table}[H]
\centering\small
\setlength{\tabcolsep}{6pt}
\caption{\textbf{Additional methods and their best FIDs.} Each entry is the lowest FID at the common 200k endpoint across all completed settings and runs of that method, rather than a run average. Online adaptive rules are preliminary.}
\label{tab:cifar-summary}
\begin{tabular}{>{\raggedright\arraybackslash}p{0.75\linewidth}r}
\toprule
Method & Best FID \\
\midrule
Single-stage guide-capacity switch & 1.501 \\
Time profile: late peak & 1.507 \\
Combined time/noise coefficients & 1.549 \\
Recorded guide ahead of student training age & 1.557 \\
Time profile: late hold & 1.559 \\
Multi-stage guide-capacity schedule & 1.561 \\
Terminal-$\alpha$ changes (profiles or shared-prefix tails) & 1.562 \\
Normalized noise-dependent $\alpha$ tilts & 1.568 \\
NCSN$++$ guides (width 64/128) & 1.570 \\
Negative-$\alpha$ warm starts & 1.582 \\
Recorded same-age guide & 1.602 \\
Co-trained guide & 1.618 \\
Time profile: early/middle/tail & 1.619 \\
Recorded guide behind student training age & 1.621 \\
Noise-dependent $\alpha$ ramps & 1.622 \\
Random $\alpha\sim U[0,1]$ & 1.625 \\
Time profile: piecewise & 1.630 \\
ADM guides (width 96) & 1.639 \\
Time profile: middle only & 1.827 \\
Online adaptive $\alpha$ rules (preliminary) & 2.416 \\

\bottomrule
\end{tabular}
\end{table}

\paragraph{Guide variants.}
\label{app:cifar-guide-changes}
We replaced the DDPM$++$ guide with NCSN$++$ or ADM, co-trained the guide
alongside the student, or replayed separately trained guide checkpoints.
The recorded checkpoints either matched the student's training age or
were ahead/behind it; ahead variants require pre-trained future checkpoints.
Capacity schedules switched guide widths during training, either once
or through several stages, including $16\to32\to64$ and a further
width-96 stage.

\paragraph{Coefficient schedules.}
\label{app:negative}
We varied $\alpha$ over training using early/middle/tail and piecewise
profiles, a middle-only pulse, or late-onset peak/hold profiles.
We also tested a short negative-$\alpha$ prefix and different terminal
coefficients. Noise-dependent variants used ramps or normalized tilts
over noise level; other trials sampled $\alpha$ uniformly from $[0,1]$
or adjusted it online using short loss probes.

\paragraph{Combined methods.}
We multiplied a training-time coefficient profile by a noise-dependent
factor, allowing the extrapolation strength to vary jointly with training
progress and noise level. This tests whether the two adjustments can
provide complementary improvements.
\label{app:arms}
Full settings and all 156 run labels (152 scored), including extended-budget
and exploratory runs, remain in the accompanying \texttt{DATA} files.

\clearpage
\begingroup
\setlength{\intextsep}{5pt}
\captionsetup{skip=4pt}
\section{Additional samples}
\label{app:samples}

\Cref{fig:samples-animals,fig:samples-marine,fig:samples-objects,fig:samples-food}
show samples from \swift{} on RAEv2/DINOv3-L, the model with guided FID
$0.99$ in \Cref{tab:main-cfg}, for 32 ImageNet classes. As in
\Cref{fig:swift-imagenet-samples}, samples are generated with retuned internal
guidance and a 50-step Euler ODE sampler. Each block shows 28 samples of one class;
the class index and name are given below the block.

\begin{figure}[H]
\centering
\includegraphics[width=\linewidth]{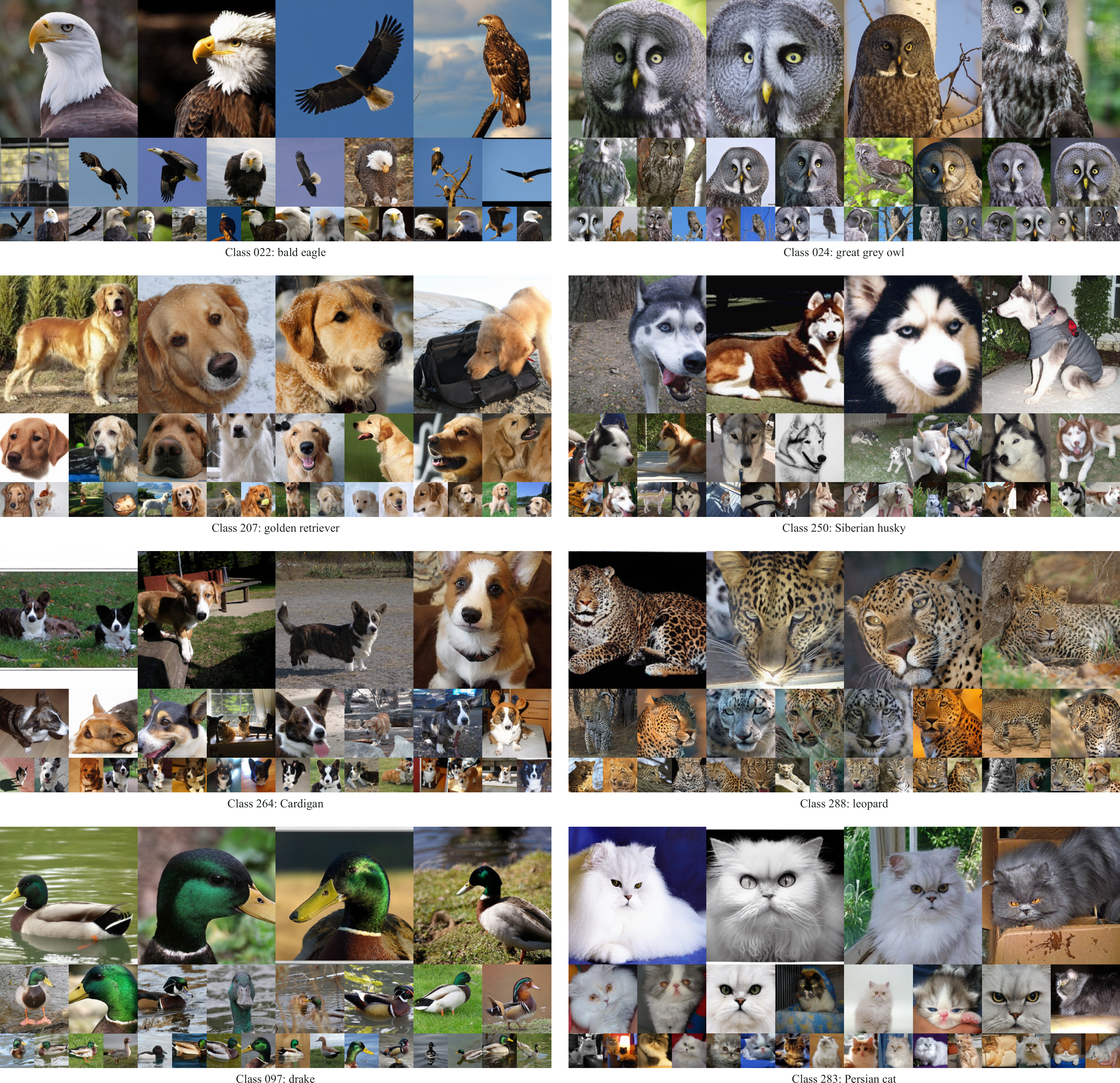}
\caption{\textbf{Samples from \swift{} on RAEv2/DINOv3-L: birds and mammals.}
Internal guidance, 50-step Euler ODE sampler; 28 samples per class.}
\label{fig:samples-animals}
\end{figure}

\clearpage
\begin{figure}[H]
\centering
\includegraphics[width=\linewidth]{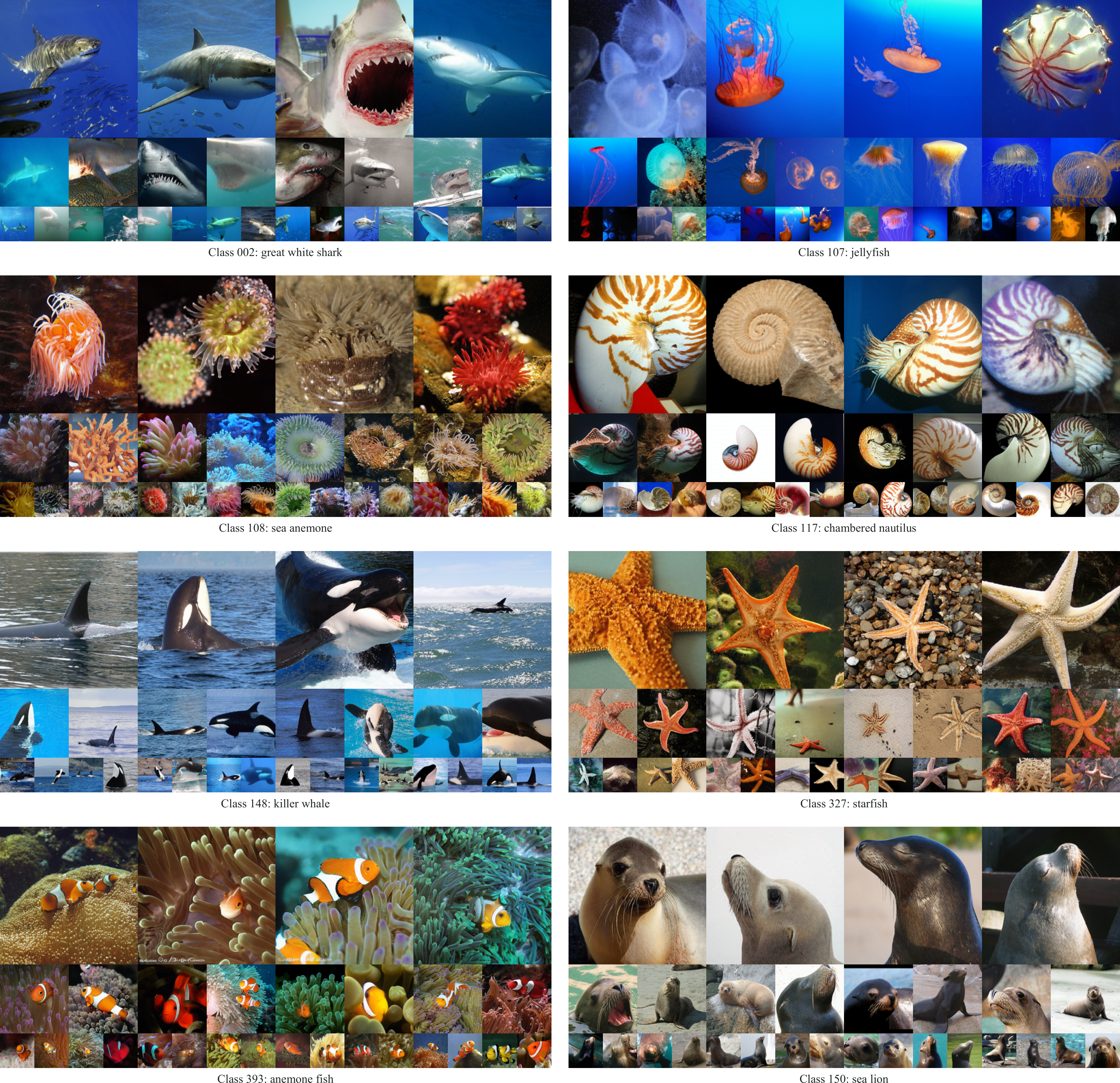}
\caption{\textbf{Samples from \swift{} on RAEv2/DINOv3-L: marine life.}
Internal guidance, 50-step Euler ODE sampler; 28 samples per class.}
\label{fig:samples-marine}
\end{figure}

\clearpage
\begin{figure}[H]
\centering
\includegraphics[width=\linewidth]{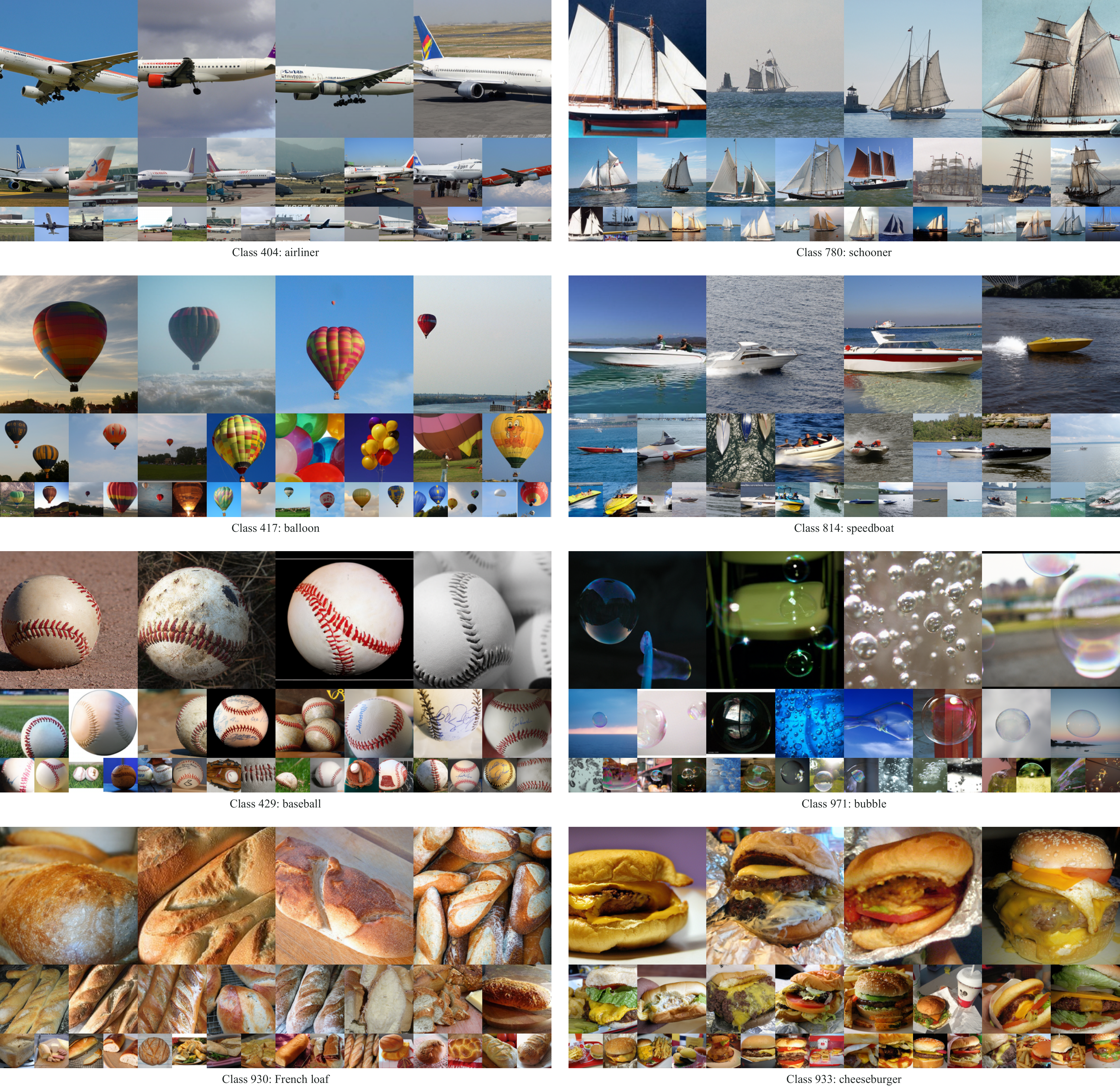}
\caption{\textbf{Samples from \swift{} on RAEv2/DINOv3-L: vehicles, objects, and bread.}
Internal guidance, 50-step Euler ODE sampler; 28 samples per class.}
\label{fig:samples-objects}
\end{figure}

\clearpage
\begin{figure}[H]
\centering
\includegraphics[width=\linewidth]{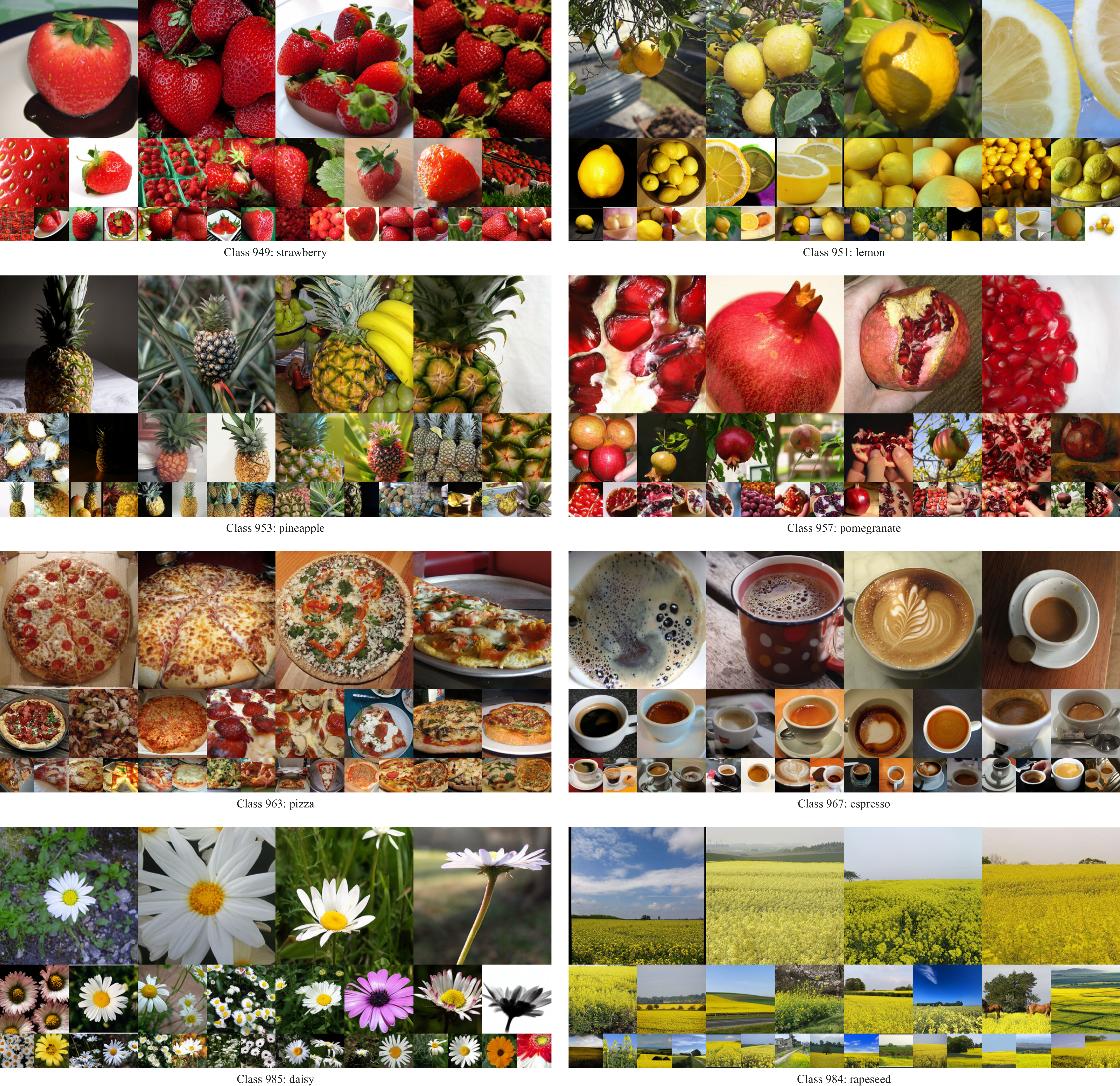}
\caption{\textbf{Samples from \swift{} on RAEv2/DINOv3-L: fruit, food, and plants.}
Internal guidance, 50-step Euler ODE sampler; 28 samples per class.}
\label{fig:samples-food}
\end{figure}
\endgroup

\end{document}